\documentclass[10pt]{article} 
\usepackage[accepted]{tmlr}


\usepackage{amsmath,amsfonts,bm}


















\def\ceil#1{\lceil #1 \rceil}

\def\1{\bm{1}}










\DeclareMathAlphabet{\mathsfit}{\encodingdefault}{\sfdefault}{m}{sl}
\SetMathAlphabet{\mathsfit}{bold}{\encodingdefault}{\sfdefault}{bx}{n}













\DeclareMathOperator*{\argmin}{arg\,min}

\usepackage{amsmath, amssymb, amsthm}
\usepackage{graphics,amsopn,amstext,color,enumerate,bm}
\usepackage{color,float,epstopdf}
\usepackage{mathtools}
\usepackage{svg}
\usepackage{multirow}
\usepackage{graphicx}
\usepackage[normalem]{ulem}
\useunder{\uline}{\ul}{}
\usepackage[
  separate-uncertainty = true,
  multi-part-units = repeat
]{siunitx}

\newcommand{\bH}{{\mathbf{H}}}
\newcommand{\bM}{{\mathbf{M}}}
\newcommand{\bA}{{\mathbf{A}}}
\newcommand{\bB}{{\mathbf{B}}}
\newcommand{\bC}{{\mathbf{C}}}
\newcommand{\bD}{{\mathbf{D}}}
\newcommand{\bG}{{\mathbf{G}}}
\newcommand{\bI}{{\mathbf{I}}}
\newcommand{\bX}{{\mathbf{X}}}

\newcommand{\by}{{\mathbf{y}}}

\newcommand{\bx}{{\mathbf{x}}}
\newcommand{\bz}{{\mathbf{z}}}
\newcommand{\bb}{{\mathbf{b}}}
\newcommand{\bd}{{\mathbf{d}}}
\newcommand{\be}{{\mathbf{e}}}

\newcommand{\ba}{{\mathbf{a}}}
\newcommand{\bc}{{\mathbf{c}}}
\newcommand{\bw}{{\mathbf{w}}}
\newcommand{\bmu}{{\boldsymbol{\mu}}}
\newcommand{\boldeta}{{\boldsymbol{\eta}}}
\newcommand{\bgamma}{{\boldsymbol{\Gamma}}}

\newcommand{\btheta}{{\boldsymbol{\theta}}}

\newcommand{\st}{{\rm s.t.}}

\newtheorem{theorem}{Theorem}
\newtheorem{lemma}[theorem]{Lemma} 
 
\newtheorem{remark}[theorem]{Remark}
\newtheorem{corollary}[theorem]{Corollary}

\long\def\acks#1{\vskip 0.3in\noindent{\large\bf Acknowledgments}\vskip 0.04in
\noindent #1}

\usepackage{algpseudocode, algorithm}
\usepackage[noend]{algcompatible}

\usepackage{hyperref}
\usepackage{url}

\title{RIFLE: Imputation and Robust Inference from Low Order Marginals}


\author{\name Sina Baharlouei \email baharlou@usc.edu \\
      \addr
      University of Southern California
      \AND
      \name Kelechi Ogudu \email kogudu@usc.edu \\
      \addr University of Southern California
      \AND
      \name Sze-chuan Suen \email ssuen@usc.edu\\
      \addr University of Southern California
      \AND
      \name Meisam Razaviyayn \email razaviya@usc.edu\\
      \addr University of Southern California
      }



\begin{document}

\maketitle

\begin{abstract}
The ubiquity of missing values in real-world datasets poses a challenge for statistical inference and can prevent similar datasets from being analyzed in the same study, precluding many existing datasets from being used for new analyses. While an extensive collection of packages and algorithms have been developed for data imputation, the overwhelming majority perform poorly if there are many missing values and low sample sizes, which are unfortunately common characteristics in empirical data. Such low-accuracy estimations adversely affect the performance of downstream statistical models. We develop a statistical inference framework for \textit{regression and classification in the presence of missing data without imputation}. Our framework, RIFLE (Robust InFerence via Low-order moment Estimations), estimates low-order moments of the underlying data distribution with corresponding confidence intervals to learn a distributionally robust model. We specialize our framework to linear regression and normal discriminant analysis, and we provide convergence and performance guarantees. This framework can also be adapted to impute missing data. In numerical experiments, we compare RIFLE to several state-of-the-art approaches (including MICE, Amelia, MissForest, KNN-imputer, MIDA, and Mean Imputer) for imputation and inference in the presence of missing values. Our experiments demonstrate that RIFLE outperforms other benchmark algorithms when the percentage of missing values is high and/or when the number of data points is relatively small. RIFLE is publicly available at \url{https://github.com/optimization-for-data-driven-science/RIFLE}.
\end{abstract}

\section{Introduction}
Machine learning algorithms have shown promise when applied to various problems, including healthcare, finance, social data analysis, image processing, and speech recognition. However, this success mainly relied on the availability of large-scale, high-quality datasets, which may be scarce in many practical problems, especially in medical and health applications \citep{pedersen129785missing, sterne2009multiple, beaulieu2018characterizing}. 
Moreover, many experiments and datasets suffer from the small sample size in such applications. Despite the availability of a small number of data points in each study, an increasingly large number of datasets are publicly available. To fully and effectively utilize information on related research questions from diverse datasets, information across various datasets (e.g., different questionnaires from multiple hospitals with overlapping questions) must be combined in a reliable fashion. 
\begin{figure}[h]
\label{fig: Merging Datasets}
\centering
\vspace{-5mm}
\includegraphics[width=0.80\columnwidth]{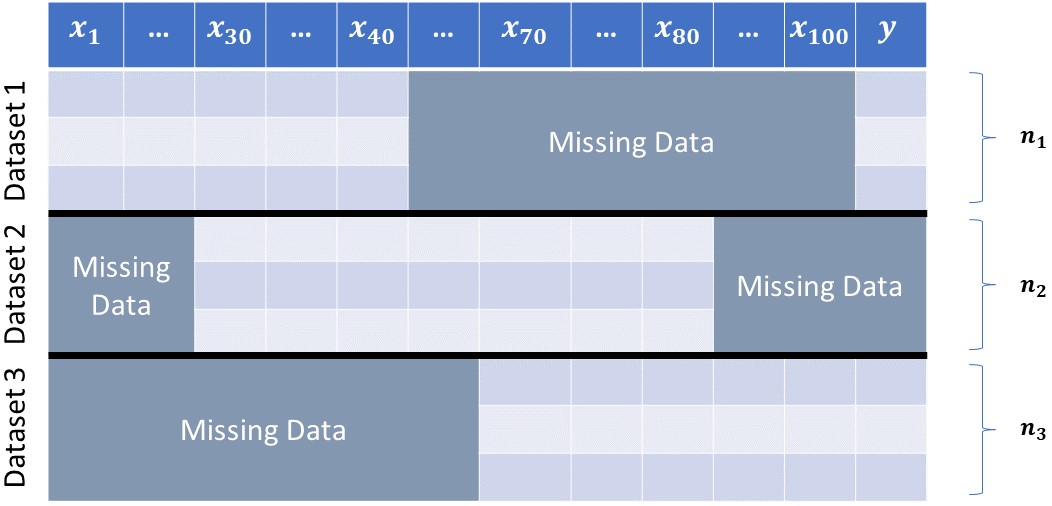}
\vspace{-3mm}
\caption{Consider the problem of predicting the trait $y$ from feature vector $(\bx_1, \dots, \bx_{100})$. Suppose that we have access to three data sets: The first dataset includes the measurements of $(\bx_1, \bx_2, \dots,\bx_{40}, y)$ for $n_1$ individuals. The second dataset collects data from another $n_2$ individuals by measuring $(\bx_{30}, \dots, \bx_{80})$ with no measurements of the target variable $y$ in it; and the third dataset contains the measurements from the variables $(\bx_{70}, \dots, \bx_{100}, y)$ for $n_3$ number of individuals. How one should learn the predictor $\hat{y} = h(\bx_1, \dots, \bx_{100})$ from these three datasets?}\label{fig: Missing_value_merged_sc}
\end{figure}

After integrating data from different studies, the obtained dataset can contain large blocks of missing values, as they may not share the same features (Figure~\ref{fig: Merging Datasets}).

There are three general approaches for handling missing values in statistical inference (classification and regression) tasks. A Na\"ive method is to remove the rows containing missing entries. However, such an approach is not an option when the percentage of missingness in a dataset is high. For instance, as demonstrated in Figure~\ref{fig: Missing_value_merged_sc}, the entire dataset will be discarded if we eliminate the rows with at least one missing entry. 

The most common methodology for handling missing values in a learning task is to impute them in a pre-processing stage. The general idea behind data imputation is that the missing values can be predicted using the available data entries and correlated features. Imputation algorithms cover a wide range of methods, including imputing missing entries with the columns means~\citet[Chapter~3]{little2019statistical} (or median),  least-square and linear regression-based methods~\citep{raghunathan2001multivariate, kim2005missing, zhang2008sequential, cai2006iterated, buuren2010mice}, matrix completion and expectation maximization approaches~\citet{dempster1977maximum, ghahramani1994supervised, honaker2011amelia}, KNN based~\citep{troyanskaya2001missing}, Tree based methods~\citep{stekhoven2012missforest, xia2017adjusted}, and methods using different neural network structures. Appendix~\ref{appendix: related_works} presents a comprehensive review of these methods.   

The imputation of data allows practitioners to run standard statistical algorithms requiring complete data. However, the prediction model's performance can be highly reliant on the accuracy of the imputer. High error rates in the prediction of missing values by the imputer can lead to the catastrophic performance of the downstream statistical methods executed on the imputed data. 

Another class of methods for inference in the presence of missing values relies on robust optimization over the uncertainty sets on missing entries. \citet{shivaswamy2006second} and~\citet{xu2009robustness} adopt robust optimization to learn the parameters of a support vector machine model. They consider uncertainty sets for the missing entries in the dataset and solve a min-max problem over those sets. The obtained classifiers are robust to the uncertainty of missing entries within the uncertainty regions. In contrast to the imputation-based approaches, the robust classification formulation does not carry the imputation error to the classification phase. However, finding appropriate intervals for each missing entry is challenging, and it is unclear how to determine the uncertainty range in many real datasets. Moreover, their proposed algorithms are limited to the SVM classifier. 

In this paper, we propose RIFLE (Robust InFerence via Low-order moment Estimations) for the direct inference of a target variable based on a set of features containing missing values. The proposed framework does not require the data to be imputed in a pre-processing stage. However, it can also be used as a pre-processing tool for imputing data. The main idea of the proposed framework is to estimate the first and second-order moments of the data and their confidence intervals by bootstrapping on the available data matrix entries. Then, RIFLE finds the optimal parameters of the statistical model for the worst-case distribution with the low-order moments (mean and variance) within the estimated confidence intervals (See Figure~\ref{fig: RIFLE_procedure}). Compared to~\citet{shivaswamy2006second, xu2009robustness}, we estimate uncertainty regions for the low-order marginals using the Bootstrap technique. Furthermore, our framework is not restricted to any particular machine learning model, such as support vector machines~\citep{xu2009robustness}. 
\begin{figure}
\centering
\vspace{-8mm}
\includegraphics[width=1.00\columnwidth]{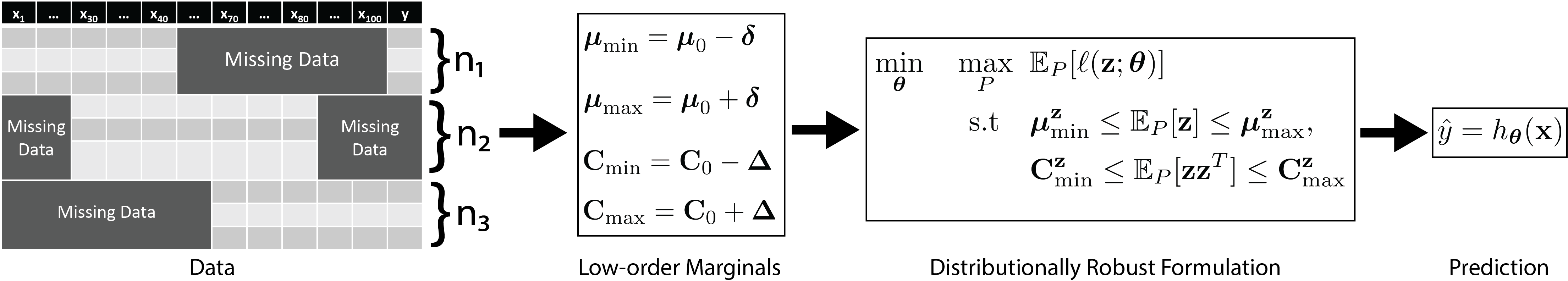}
\vspace{-6mm}
\caption{Prediction of the target variable without imputation. RIFLE estimates confidence intervals for low-order (first and second-order) marginals from the input data containing missing values. Then, it solves a distributionally robust problem over the set of all distributions whose low-order marginals are within the estimated confidence intervals.}\label{fig: RIFLE_procedure}
\end{figure}

\noindent \textbf{Contributions:} Our main contributions are as follows: 
\begin{enumerate}
    \item We present a distributionally robust optimization framework over the low-order marginals of the training data distribution for inference in the presence of missing values. The proposed framework does not require data imputation as a pre-processing stage. In Section~\ref{sec: robust_linear_regression} and Section~\ref{sec: robust_classification}, we specialize the framework to ridge regression and classification models as two case studies respectively. The proposed framework provides a novel strategy for inference in the presence of missing data, especially for datasets with large proportions of missing values.
    
    \item We provide theoretical convergence guarantees and the iteration complexity analysis of the presented algorithms for robust formulations of ridge linear regression and normal discriminant analysis. Moreover, we show the consistency of the prediction under mild assumptions and analyze the asymptotic statistical properties of the solutions found by the algorithms.
    
    \item While the robust inference framework is primarily designed for direct statistical inference in the presence of missing values without performing data imputation, it can also be adopted as an imputation tool. To demonstrate the quality of the proposed imputer, we compare its performance with several widely-used imputation packages such as MICE~\citep{buuren2010mice}, Amelia~\citep{honaker2011amelia}, MissForest~\citep{stekhoven2012missforest}, KNN-Imputer~\citep{troyanskaya2001missing}, MIDA~\citep{gondara2018mida}, GAIN~\citep{yoon2018gain} on real and synthetic datasets. Generally speaking, our method outperforms all of the mentioned packages when the number of missing entries is large.
\end{enumerate}

\section{Robust Inference via Estimating Low-order Moments} \label{sec: robust_inference_framework}
RIFLE is based on a distributionally robust optimization (DRO) framework over low-order marginals. Assume that $(\bx, y) \in \mathbb{R}^d \times \mathbb{R}$ follows a joint probability distribution~$P^*$. 
A standard approach for predicting the target variable $y$ given the input vector $\bx$ is to find the parameter~$\btheta$ that minimizes the population risk with respect to a given loss function $\ell$:
\begin{equation}
\label{eq: population_risk}
    \min_{\btheta} \quad \mathbb{E}_{(\bx, y) \sim P^*} \Big[ \ell \Big(\bx, y ; \boldsymbol{\theta} \Big) \Big].
\end{equation}
Since the underlying distribution of data is rarely available in practice, the above problem cannot be directly solved. The most common approach for approximating~\eqref{eq: population_risk} is to minimize the empirical risk with respect to $n$ given i.i.d samples $(\bx_1, y_1), \dots, (\bx_n, y_n)$ drawn from the joint distribution $P^*$:
\begin{equation*}
    \min_{\boldsymbol{\theta}} \quad \frac{1}{n} \sum_{i=1}^{n} \ell(\bx_i, y_i; \boldsymbol{\theta}).
\end{equation*}
The above empirical risk formulation assumes that all entries of $\bx_i$ and $y_i$ are available. Thus, to utilize the empirical risk minimization (ERM) framework in the presence of missing values, one can either remove or impute the missing data points in a pre-processing stage. Training via robust optimization is a natural alternative in the presence of missing data. \citet{shivaswamy2006second, xu2009robustness} suggest the following optimization problem that minimizes the loss function for the worst-case scenario over the defined uncertainty sets per data points:
\begin{equation}\label{eq.RobustUncertainty}
    \min_{\btheta} \;\;\max_{\{\boldsymbol{\delta}_i \in \mathcal{N}_i\}_{i=1}^n}\;\; \frac{1}{n}\sum_{i=1}^{n} \ell (\bx_i - \boldsymbol{\delta}_i, y_i;\btheta),
\end{equation}
where $\mathcal{N}_i$ represents the uncertainty region of data point~$i$. \citet{shivaswamy2006second} obtains the uncertainty sets by assuming a known distribution on the missing entries of datasets. The main issue in their approach is that the constraints defined on data points are totally uncorrelated. \citet{xu2009robustness} on the other hand defines $\mathcal{N}_i$ as a ``box'' constraint around the data point~$i$ such that they can be linearly correlated. For this specific case, they show that solving the corresponding robust optimization problem is equivalent to minimizing a regularized reformulation of the original loss function. Such an approach has several limitations: First, it can only handle a few special cases (SVM loss with linearly correlated perturbations on data points). Furthermore, \citet{xu2009robustness} is primarily designed for handling outliers and contaminated data. Thus, they do not offer any mechanism for the initial estimation of $\bx_i$ when several vector entries are missing. In this work, we instead take a \textit{distributionally robust} approach by considering uncertainty on the data distribution instead of defining an uncertainty set for each data point.  
In particular, we aim to fit the best parameters of a statistical learning model for the worst distribution in a given uncertainty set by solving the following: 
\begin{equation}\label{eq.DRO}
\quad \underset{\btheta}{\min} \quad \underset{P \in \mathcal{P}}{\max} \quad \mathbb{E}_{(\bx, y) \sim P} [\ell(\bx , y;\btheta)],
\end{equation}
where $\mathcal{P}$ is an uncertainty set over the underlying distribution of data. 
A \underline{key observation} is that defining the uncertainty set~$\mathcal{P}$ in~\eqref{eq.DRO} is easier and computationally more efficient than defining the uncertainty sets~$\{\mathcal{N}_i\}_{i=1}^n$ in~\eqref{eq.RobustUncertainty}. In particular, the uncertainty set~$\mathcal{P}$ can be obtained naturally by estimating low-order moments of data distribution \underline{using only available entries}. To explain this idea and to simplify the notations, let $\bz = (\bx, y)$,  $
 \bar{\bmu}^{\bz} \triangleq \mathbb{E}[\bz]$, and $ \bar{\bC}^{\bz} \triangleq \mathbb{E}[\bz \bz^T].$
While $\bar{\bmu}^{\bz}$ and $\bar{\bC}^{\bz}$ are typically not known exactly, one can estimate them (within certain confidence intervals) from the available data by simply ignoring missing entries (assuming the missing value pattern is completely at random, e.g., MCAR). Moreover, we can estimate the confidence intervals via bootstrapping. Particularly, we can estimate $\bmu_{\min}^{\bz},\bmu_{\max}^{\bz}, \bC^{\bz}_{\min}$, and $\bC^{\bz}_{\max}$ from data such that  $\bmu^{\bz}_{\min} \leq  \bar{\bmu}^{\bz} \leq \bmu^{\bz}_{\max}$ and $\bC^{\bz}_{\min} \leq \bar{\bC}^{\bz} \leq \bC_{\max}^{\bz}$ with high probability (where the inequalities for matrices and vectors denote component-wise relations). In Appendix~\ref{appendix: estimating_low_orders}, we show how a bootstrapping strategy can be used to obtain the confidence intervals described above. Given these estimated confidence intervals from data, \eqref{eq.DRO} can be reformulated as
\begin{equation}
\label{eq: general_framework}
\begin{split}
\min_{\btheta} \quad \max_{P} \;\; &\mathbb{E}_{ P} [\ell(\bz;\btheta)] \\
\st \quad & \bmu_{\min}^{\bz} \leq \mathbb{E}_{P} [\bz] \leq \bmu_{\max}^{\bz},\\
&  \bC_{\min}^{\bz} \leq \mathbb{E}_{P} [\bz\bz^T]  \leq \bC_{\max}^{\bz}.
\end{split}
\end{equation}
\citet{gao2017distributionally} utilize the distributionally robust optimization as~\eqref{eq.DRO} over the set of positive semi-definite (PSD) cones for robust inference under uncertainty. While their formulation considers $\ell_2$ balls for the constraints on low order moments of the data, we use $\ell_{\infty}$ constraints that are computationally more natural in the presence of missing entries when combined with bootstrapping. Furthermore, while it can be applied to general convex losses, their method relies on the ellipsoid and the existence of oracles for performing the steps of the ellipsoid method, which is not applicable in modern high-dimensional problems. Moreover, they assume concavity in data (the existence of some oracle to return the worst-case data points) that is practically unavailable even in convex loss functions (including linear regression and normal discriminant analysis studied in our work).

In Section~\ref{sec: robust_linear_regression}, we study the proposed distributionally robust framework described in~\eqref{eq: general_framework} for the ridge linear regression. We design efficient first-order convergent algorithms to solve the problem and show how we can use the algorithms for both inference and imputation in the presence of missing values. Further, in Appendix~\ref{appendix: robust_classification}, we study the proposed distributionally robust framework for the classification problems under the normality assumption of features. In particular, we show how Framework~\eqref{eq: general_framework} can be specialized to the robust normal discriminant analysis in the presence of missing values.
\section{Robust Linear Regression in the Presence of Missing Values} \label{sec: robust_linear_regression}
Let us specialize our framework to the ridge linear regression model. In the absence of missing data, ridge regression finds optimal regressor parameter $\btheta$ by solving
\begin{equation*}
\quad \underset{\btheta}{\min} \quad \Vert \bX \btheta - \by \Vert_2^2 + \lambda \| \btheta \|_2^2,  
\end{equation*}
or equivalently by solving:
\begin{equation} 
\min_{\btheta} \quad \btheta^T  \bX^T \bX \btheta - 2 \btheta^T \bX^T \by + \lambda \| \btheta\|_2^2.
\label{eq: linear_regression}
\end{equation}
Thus, having the second-order moments of the data $\bC = \bX^T \bX$ and $\bb = \bX^T \by$ is sufficient for finding the optimal solution. In other words, it suffices to compute the inner product of any two column vectors $\ba_i$, $\ba_j$ of $\bX$, and the inner product of any column $\ba_i$ of $\bX$ with vector $\by$. Since the matrix $\bX$ and vector $\by$ are not fully observed due to the existence of missing values, one can use the available data (see ~\eqref{eq: point_estimator} for details) to compute the point estimators $\bC_0$ and $\bb_0$. These point estimators can be highly inaccurate, especially when the number of non-missing rows for two given columns is small. In addition, if the pattern of missing entries does not follow the MCAR assumption, the point estimators are not unbiased estimators of $\bC$ and $\bb$.

\subsection{A Distributionally Robust Formulation of Linear Regression}
As we mentioned above, to solve the linear regression problem, we only need to estimate the second-order moments of the data ($\bX^T \bX$ and $\bX^T \by$). Thus, the distributionally robust formulation described in~\eqref{eq: general_framework} is equivalent to the following optimization problem for the linear regression model: 
\begin{equation}\label{robust_linear_regression}
\arraycolsep=1.4pt\def\arraystretch{1.3}
\begin{array}{lll}
    \displaystyle{\min_{\btheta} \quad \max_{\bC, \bb}} \quad \: &  \displaystyle{\btheta^T \bC \btheta - 2\bb^T \btheta + \lambda \| \btheta \|_2^2} \\
    \st & \bC_{0} - c\boldsymbol{\Delta} \leq \bC \leq \bC_0 + c\boldsymbol{\Delta}, \\
    & \bb_0 - c\boldsymbol{\delta} \leq \bb \leq \bb_0 + c\boldsymbol{\delta}, \\
    & \bC \succeq 0,
\end{array}
\end{equation}
where the last constraint guarantees that the covariance matrix is positive and semi-definite. We dicuss the procedure of estimating the confidence intervals ($\bb_0, \bC_0, \boldsymbol{\delta},$ and $\boldsymbol{\Delta}$) in Appendix~\ref{appendix: estimating_low_orders}.

\subsection{RIFLE for Ridge Linear Regression}
Since the objective function in~\eqref{robust_linear_regression} is convex in $\btheta$ (ridge regression) and concave in $\bb$ and $\bC$ (linear), the minimization and maximization sub-problems are interchangeable~\citep{sion1958general}. Thus, we can equivalently rewrite Problem~\eqref{robust_linear_regression} as:
\begin{equation}\label{robust_linear_regression2}
\vspace{-1mm}
\arraycolsep=1.4pt\def\arraystretch{1.3}
\begin{array}{lll}
    \displaystyle{\max_{\bC, \bb}} \quad \: & g(\bC, \bb) \\
    \st & \bC_{0} - c\boldsymbol{\Delta} \leq \bC \leq \bC_{0} + c\boldsymbol{\Delta}, \\
    & \bb_{0} - c\boldsymbol{\delta}\leq \bb \leq \bb_{0} + c\boldsymbol{\delta}, \\
    & \bC \succeq 0,
\end{array}
\end{equation}
where $g(\bb, \bC) = \min_{\btheta} \btheta^T \bC \btheta - 2 \bb^T \btheta + \lambda \| \btheta \|^2$. Function $g$ can be computed in closed-form given any pair of $(\bC, \bb)$ by setting $\btheta = (\bC+\lambda \bI)^{-1}\bb$. Thus, using Danskin's Theorem~\citep{danskin2012theory}, we can apply projected gradient ascent to function $g$ to find an optimal solution of~\eqref{robust_linear_regression2} as described in Algorithm~\ref{alg: Min_Max_Regression}. At each iteration of the algorithm, we first perform one step of projected gradient ascent on matrix $\bC$ and vector $\bb$; then we update $\btheta$ in closed-form for the obtained $\bC$ and $\bb$. We initialize $\bC$ and $\bb$ using entriwise point estimation on the available rows (see  Equation~\eqref{eq: point_estimator} in  Appendix~\ref{appendix: estimating_low_orders}).
\begin{algorithm}
	\caption{RIFLE for Ridge Linear Regression in the Presence of Missing Values} 
	\label{alg: Min_Max_Regression}
	\begin{algorithmic}[1]
	    \State \textbf{Input}: $\bC_0, \bb_0, \boldsymbol{\Delta}, \boldsymbol{\delta}, T$
        \State \textbf{Initialize}: $\bC = \bC_0, \bb = \bb_0$. 
        
        \FOR {$i = 1, \ldots, T$}
        
        \State Update $\bC = \Pi_{\boldsymbol{\Delta}+}\left[\bC + \alpha \btheta \btheta^T \right]$
        
        \State Update $\bb = \Pi_{\boldsymbol{\delta}}(\bb - 2\alpha\btheta)$
        \State Set $\btheta = (\bC+\lambda \bI)^{-1}\bb$ 
        \ENDFOR
	\end{algorithmic}
\end{algorithm}
The projection of $\bb$ to the box constraint $\bb_0 - c\boldsymbol{\delta} \leq \bb \leq \bb_0 + c\boldsymbol{\delta}$ can be done entriwise and has the following closed-form
\begin{equation*}
    \Pi_{\delta}(\bb_i) = \begin{cases}
        \bb_i  & \textrm{if } \quad \bb_{0i} - c\boldsymbol{\delta}_i\leq \bb_i \leq \bb_{0i} + c\boldsymbol{\delta}_i, \\
        \bb_{0i} - c\boldsymbol{\delta}_i  & \textrm{if } \quad \bb_i < \bb_{0i} - c\boldsymbol{\delta}_i, \\ 
        \bb_{0i} + c\boldsymbol{\delta}_i  & \textrm{if } \quad \bb_{0i} + c\boldsymbol{\delta}_i < \bb_i.
    \end{cases}
\end{equation*}
\begin{theorem}\label{thm: Robust_Regression_Convergence}
Let $(\tilde{\btheta}, \tilde{\bC}, \tilde{\bb})$ be the optimal solution of~\eqref{robust_linear_regression},  $\btheta^{*}(\bb, \bC) = \argmin_{\btheta} \btheta^T \bC \btheta - 2 \bb^T \btheta + \lambda \| \btheta \|^2$, and $D = \|\bC_0 - \tilde{\bC}\|_F^2 + \| \bb_0 - \tilde{\bb}\|_2^2$. Assume that for any given $\bb$ and $\bC$, within the uncertainty (constraint) sets described in~\eqref{robust_linear_regression},  $\|\btheta^{*}(\bb, \bC)\| \leq \tau$. Then Algorithm~\ref{alg: Min_Max_Regression} computes an $\epsilon$-optimal solution of the objective function in \eqref{robust_linear_regression2} in  ${\cal O}\Big(\frac{D(\tau+1)^2}{\lambda \epsilon}\Big)$ iterations.
\end{theorem}
\vspace{-2mm}
\begin{proof}
The proof is relegated to Appendix~\ref{appendix: proofs}.
\end{proof}
In Appendix~\ref{appendix: Nesterov_Acceleration}, we show how using the acceleration method of Nesterov can improve the convergence rate of Algorithm~\ref{alg: Min_Max_Regression} to ${\cal O}\Big(\sqrt{\frac{D(\tau+1)^2}{\epsilon \lambda}}\Big)$. A technical issue of Algorithm~\ref{alg: Min_Max_Regression} and its accelerated version presented in Appendix~\ref{appendix: Nesterov_Acceleration} is that projection of $\bC$ to the intersection of box constraints and the set of positive semidefinite matrices ($\Pi_{\boldsymbol{\Delta}+}\left[\bC\right]$) is challenging and cannot be done in closed-form. In the implementation of Algorithm~\ref{alg: Min_Max_Regression}, we relax the problem by removing the PSD constraint on $\bC$ to avoid this complexity and time-consuming singular value decomposition at each iteration. This relaxation does not drastically change the algorithm's performance, as our experiments show in Section~\ref{sec: numerical_results}. A more systematic approach is to write the dual problem of the maximization problem and handle the resulting constrained minimization problem with the Alternating Direction Method of Multipliers (ADMM). The detailed procedure of such an approach can be found in Appendix~\ref{appendix: admm}. All these algorithms are provably convergent to the optimal points of Problem~\eqref{robust_linear_regression}. In addition to theoretical convergence, we have numerically evaluated the convergence of resulting algorithms in Appendix~\ref{appendix: convergence}. Further, the proposed algorithms are \textbf{consistent}, as discussed in Appendix~\ref{appendix: consistency}.

\subsection{Performance Guarantees for RIFLE}
Thus far, we have discussed how to efficiently solve the robust linear regression problem in the presence of missing values. A natural question in this context is the statistical performance of the obtained optimal solution in the previous section on the unseen test data points. Theorem~\ref{thm: regression_performance} answers this question from two perspectives: Assuming that the missing values are distributed completely at random, our estimators are consistent. Moreover, for the finite case, Theorem~\ref{thm: regression_performance} part (b) states that with the proper choice of confidence intervals, with high probability, the test loss of the obtained solution is bounded by the training loss of the estimator. Note that the results regarding the performance of the robust estimator generally hold for MCAR missing pattern. However, we perform several experiments on datasets with MNAR patterns to show how RIFLE works in practice on such datasets in Section~\ref{sec: numerical_results}.
\begin{theorem}\label{thm: regression_performance}
Assume the data domain is bounded and that the missing pattern of the data follows MCAR. Let $\bX^{n \times d}$, $\by$ be the training data  drawn i.i.d. from the ground-truth distribution $P^*$ with low-order moments~$\bC^*$ and $\bb^*$. Further, assume that each entry of $\bX$ and $\by$ is missing with probability $p<1$. Let $(\tilde{\btheta}_n, \tilde{\bC}_n, \tilde{\bb}_n)$ be the solution of Problem~\eqref{robust_linear_regression}. 

\textbf{(a)} \textbf{Consistency of the Covariance Estimator}: As the number of data points goes to infinity, the estimated low-order marginals converge to the ground-truth values, almost surely. More precisely, 
\begin{gather}
    \lim_{n \rightarrow \infty} \tilde{\bC}_n = \mathbb{E}_{P^{*}}[\bx \bx^T], \quad a.s., \\
    \lim_{n \rightarrow \infty} \tilde{\bb}_n = \mathbb{E}_{P^{*}}[\bx y],\quad a.s.
\end{gather}
\textbf{(b)} 
Defining 
\begin{align*}
    & \normalfont L_{\textrm{train}}(\tilde{\btheta}_n) = \displaystyle{\tilde{\btheta}_n^T \tilde{\bC}_n \tilde{\btheta}_n - 2\tilde{\bb}_n \tilde{\btheta}_n + \lambda \| \tilde{\btheta}_n \|_2^2} \\  
    & \normalfont L_{\textrm{test}}(\tilde{\btheta}_n) =  \displaystyle{\tilde{\btheta}_n^T \bC^* \tilde{\btheta}_n - 2\bb^{*T} \tilde{\btheta}_n + \lambda \| \tilde{\btheta}_n \|_2^2},
\end{align*}
where $\bC^* = \mathbb{E}_{(\bx, y) \sim P^*}[\bx \bx^T]$ and $\bb^* = \mathbb{E}_{(\bx, y) \sim P^*}[\bx y]$ are the ground-truth second-order moments. Given $\normalfont V = \max_{i, j} \textrm{Var}(X_i X_j)$ (maximum variance of pairwise feature products), with the probability of at least $1 - \frac{d^2 V}{2c^2 \boldsymbol{\Delta}^2 n(1-p)}$, we have:
\begin{equation}
\normalfont
    L_{\textrm{test}}(\tilde{\btheta}) \leq L_{\textrm{train}}(\tilde{\btheta}),   
\end{equation}
where $\Delta = \min \{\Delta_{ij}\}$  and $c$ is the hyper-parameter for controlling the size of the confidence intervals as presented in~\eqref{robust_linear_regression} \end{theorem}
\begin{proof}
The proof is relegated to Appendix~\ref{appendix: proofs}.
\end{proof}

\subsection{Imputation of Missing Values and Going Beyond Linear Regression}
RIFLE can be used for imputing missing data. To this end, we impute different features of a given dataset independently. More precisely, to impute each feature containing missing values, we consider it as a target variable $\by$ and the rest of the features as the input $\bX$ in our methodology. Then, we train a model to predict the feature $\by$ given $\bX$ via Algorihm~\ref{alg: Min_Max_Regression} (or its ADMM version, Algorithm~\ref{alg: solving_dual_admm}, in the appendix). 
Let the obtained optimal solutions be $\bC^*, \bb^*,$ and $\btheta^*$. For a given missing entry, we can use $\btheta^*$ only if all other features in the row of that missing entry are available. 
However, that is not usually the case in practice, as each row can contain more than one missing entry. Therefore, one can learn a separate model for each missing pattern in the dataset. Let us clarify this point through the example in Figure~\ref{fig: Merging Datasets}. In this example, we have three different missing patterns (one missing pattern for each dataset). For missing entries in Dataset~1, the first forty features are available. 
Let $\mathbf{r}_j$ denote the vector of the first $40$ features in row $j$. Assume that we aim to impute entry $i \in \{41, \dots, 100\}$ in row $j$ where $i$ denoted by $x_{ji}$. To this end, we restrict $\bX$ to the first $40$ features. Moreover, we consider $y = x_i$ as the target variable. Then, we run Algorithm~\ref{alg: Min_Max_Regression} on $\bX$ and $y$ to obtain the optimal $\bC^*$, $\bb^*_{i}$, and $\btheta^*_{i}$. Consequently, we impute $x_{ji}$ as follows:
\begin{equation*}
    x_{ji} = \mathbf{r}_{j}^T \btheta^*_{i}
\end{equation*}
We can use the same methodology for imputing missing entries in each feature for missing patterns in Dataset~2 and Dataset~3. While this approach is reasonable for the missing pattern observed in Figure~\ref{fig: Merging Datasets}, in many practical problems, different rows can have distinct missing patterns. Thus, in the worst case, Algorithm~\ref{alg: Min_Max_Regression} must be executed once for each missing entry. Such an approach is computationally expensive and might be infeasible in large-scale datasets containing large amounts of missing entries. Alternatively, one can perform Algorithm~\ref{alg: Min_Max_Regression} only once to obtain $\bC^*$ and $\bb^*$ (considered the ``worst-case/pessimistic'' estimation of the moments). 
Then to impute each missing entry, $\bC^*$ and $\bb^*$ are restricted to the features available in that missing entry's row. Having the restricted $\bC^*$ and $\bb^*$, the regressor~$\btheta^{*}$ can be obtained in closed-form (line 6 in Algorithm~\ref{alg: Min_Max_Regression}). In this approach, we perform algorithm~\ref{alg: Min_Max_Regression} once and find the optimal $\btheta^*$ for each missing entry based on the estimated  $\bC^*$ and $\bb^*$. This approach can lead to sub-optimal solutions compared to the former approach, but it is much faster and more scalable.

\noindent\textbf{Beyond Linear Regression:} While the developed methods are primarily designed for ridge linear regression, one can apply non-linear transformations (kernels) to obtain models beyond linear. In Appendix~\ref{appendix: qrifle}, we show how to extend the developed algorithms to quadratic models. The RIFLE framework applied to the quadratically transformed data is called \textbf{QRIFLE}.
\section{Robust Classification Framework}
\label{sec: robust_classification}
In this section, we study the proposed framework in~\eqref{eq: general_framework} for the classification tasks in the presence of missing values. Since the target variable $y \in \mathcal{Y} = \{1, \dots, M\}$ takes discrete values in classification tasks, we consider the uncertainty sets over the data's first- and second-order marginals given each target value (label) separately. Therefore, the distributionally robust classification over low-order marginals can be described as:
\begin{equation}
\label{eq: Min_Max_Classification}
\begin{split}
     \min_{\mathbf{w}} \;\max_{P} \;\;& \mathbb{E}_{P}[\ell(\bx, y, \mathbf{w})]
     \\ \st \: \: &\bmu_{\textrm{min}, y} \: \leq  \: \mathbb{E}_P[\bx | y]\: \leq \: \bmu_{\textrm{max}, y} \quad \forall y \in \mathcal{Y} \\ 
     & \boldsymbol{\Sigma}_{\textrm{min}, y} \: \leq  \: \mathbb{E}_P[\bx \bx^T | y] \: \leq \: \boldsymbol{\Sigma}_{\textrm{max}, y} \quad \forall y \in \mathcal{Y}
\end{split}
\end{equation}
where $\boldsymbol{\mu}_{\min}, \boldsymbol{\mu}_{\max}, \boldsymbol{\Sigma}_{\min},$ and $\boldsymbol{\Sigma}_{\max}$ are the estimated confidence intervals for the first and second order of the data distribution. Unlike the robust linear regression task in Section~\ref{sec: robust_linear_regression}, the evaluation of the objective function in~\eqref{eq: Min_Max_Classification} might depend on higher-order marginals (beyond second-order) due to the nonlinearity of the loss function. As a result, Problem~\eqref{eq: Min_Max_Classification} is a non-convex non-concave intractable min-max optimization problem in general. For the sake of computational traceability, we restrict the distribution in the inner maximization problem to the set of normal distributions. In the following section, we specialize~\eqref{eq: Min_Max_Classification} to the quadratic discriminant analysis as a case study. The methodology can be extended to other popular classification algorithms, such as support vector machines and multi-layer neural networks.
\vspace{-2mm}
\subsection{Robust Quadratic Discriminant Analysis}
\vspace{-2mm}
Learning a logistic regression model on datasets containing missing values has been studied extensively in the literature~\citep{fung1989treatment, abonazel2018estimation}. Besides deleting missing values and imputation-based approaches, \citet{fung1989treatment} models the logistic regression task in the presence of missing values as a linear discriminant analysis problem where the underlying assumption is that the predictors follow normal distribution conditional on the labels. Mathematically speaking, they assume that the data points assigned to a specific label follow a Gaussian distribution, i.e., $\bx | y = i \sim N(\bmu_i, \boldsymbol{\Sigma})$. They use the available data to estimate the parameters of each Gaussian distribution. Therefore, the parameters of the logistic regression model can be assigned based on the estimated parameters of the Gaussian distributions for different classes. Similar to the linear regression case, the estimations of means and covariances are unbiased only when the data satisfies the MCAR condition. Moreover, when the number of data points in the dataset is small, the variance of the estimations can be very high. Thus, to train a logistic regression model that is robust to the percentage and different types of missing values, we specialize the general robust classification framework formulated in Equation~\eqref{eq: Min_Max_Classification} to the logistic regression model. Instead of considering a common covariance matrix for the conditional distributions of $\bx$ given labels $y$ (linear discriminant analysis), we assume a more general case where each conditional distribution has its own covariance matrix (quadratic discriminant analysis). 
 Assume that $\mathbf{x} | y \sim N(\bmu_y, \boldsymbol{\Sigma}_y)$ for $y = 0, 1$. We aim to find the optimal solution to the following problem:
\begin{equation}
\label{robust_logistic_regression}
\arraycolsep=1.4pt\def\arraystretch{1.3}
\begin{array}{lll}
    \displaystyle{\min_{\mathbf{w}} \max_{\boldsymbol{\mu}_0, \boldsymbol{\mu}_1, \boldsymbol{\Sigma}_0, \boldsymbol{\Sigma}_1}} \quad &  \displaystyle{\mathbb{E}_{\mathbf{x}| y=1 \sim N(\boldsymbol{\mu}_1, \boldsymbol{\Sigma}_1)} \Big[-\log \Big(\sigma(\mathbf{w}^T \mathbf{x}) \Big)\Big] \mathbb{P}(y = 1)} \: +  \\ & \displaystyle{\mathbb{E}_{\mathbf{x}| y=0 \sim N(\boldsymbol{\mu}_0, \boldsymbol{\Sigma}_0)} \Big[-\log \Big(1 - \sigma(\mathbf{w}^T \mathbf{x}) \Big)\Big]}\mathbb{P} (y = 0)  \\
    \st & \boldsymbol{\mu}_{\min_0} \leq \boldsymbol{\mu}_0 \leq \boldsymbol{\mu}_{\max_0}\\
    & \boldsymbol{\mu}_{\min_1} \leq \boldsymbol{\mu}_1 \leq \boldsymbol{\mu}_{\max_1}\\
    & \boldsymbol{\Sigma}_{\min_0} \leq \boldsymbol{\Sigma}_0 \leq \boldsymbol{\Sigma}_{\max_0}\\
    & \boldsymbol{\Sigma}_{\min_1} \leq \boldsymbol{\Sigma}_1 \leq \boldsymbol{\Sigma}_{\max_1}\\
\end{array}
\end{equation}
Where $\sigma(\bx) = 1/ \Big(1 + \exp(-\bx)\Big)$ is the sigmoid function.

To solve Problem~\eqref{robust_logistic_regression}, first, we focus on the scenario when the target variable has no missing values. In this case, each data point contributes to the estimation of either $(\bmu_1, \boldsymbol{\Sigma}_1)$ or $(\bmu_0, \boldsymbol{\Sigma}_0)$, depending on its label. 
Similar to the robust linear regression case, we can apply Algorithm~\ref{alg: estimating_confidence_interval_length} to estimate the confidence intervals for $\boldsymbol{\mu}_i, \boldsymbol{\Sigma}_i$ using data points whose target variable equals $i$ ($y = i$). 

Obviously, the objective function is convex in $\mathbf{w}$ since the logistic regression loss is convex, and the expectation of loss can be seen as a weighted summation, which is convex. Thus, fixing $\bmu, \boldsymbol{\Sigma}$ the outer minimization problem can be solved with respect to $\bw$ using standard first-order methods such as gradient descent.

Although the robust reformulation of logistic regression stated in~\eqref{robust_logistic_regression} is convex in $\bw$ and concave in $\bmu_0$ and $\bmu_1$, the inner maximization problem is intractable with respect to $\boldsymbol{\Sigma}_0$ and $\boldsymbol{\Sigma}_1$. We approximate Problem~\eqref{robust_logistic_regression} in the following manner:
\begin{equation}
\label{eq: robust_logistic_regression_finite}
\arraycolsep=1.4pt\def\arraystretch{1.3}
\begin{array}{lll}
    \displaystyle{\min_{\mathbf{w}} \max_{\boldsymbol{\mu}_0, \boldsymbol{\Sigma}_0, \boldsymbol{\mu}_1, \boldsymbol{\Sigma}_1}} \quad &  \displaystyle{\pi_1 \mathbb{E}_{\mathbf{x}|y=1 \sim N(\bmu_1, \boldsymbol{\Sigma}_1)} \Big[- \log \Big(\sigma(\mathbf{w}^T \mathbf{x}) \Big) \Big] + \pi_0 \mathbb{E}_{\mathbf{x}|y=0 \sim N(\bmu_0, \boldsymbol{\Sigma}_0)} \Big[- \log \Big(1 - \sigma (\mathbf{w}^T \mathbf{x})\Big) \Big]}, \\
    \st & \boldsymbol{\mu}_{\min_0} \leq \boldsymbol{\mu}_0 \leq \boldsymbol{\mu}_{\max_0}\\
    & \boldsymbol{\mu}_{\min_1} \leq \boldsymbol{\mu}_1 \leq \boldsymbol{\mu}_{\max_1}\\ &  \boldsymbol{\Sigma}_0 \in \{\boldsymbol{\Sigma}_{01}, \boldsymbol{\Sigma}_{02}, \dots, \boldsymbol{\Sigma}_{0k}\} \\   &  \boldsymbol{\Sigma}_1 \in \{\boldsymbol{\Sigma}_{11}, \boldsymbol{\Sigma}_{12}, \dots, \boldsymbol{\Sigma}_{1k}\},
\end{array}
\end{equation}
where $\pi_1 = \mathbb{P}(y = 1)$ and $\pi_0 = \mathbb{P}(y = 0)$. To compute optimal $\boldsymbol{\mu}_0$ and $\boldsymbol{\mu}_1$, we have:
\begin{equation}
\label{eq: mu_maximization}
    \max_{\bmu_1} \quad \mathbb{E}_{\mathbf{x} \sim N(\bmu_1, \boldsymbol{\Sigma}_1)} \Big[- \log \Big(\sigma(\mathbf{w}^T \mathbf{x}) \Big) \Big] 
    \quad \st \quad \boldsymbol{\mu}_{\min} \leq \boldsymbol{\bmu}_1 \leq \boldsymbol{\bmu}_{\max} 
\end{equation}

\begin{theorem}
\label{thm: mu_max}
Let $\ba[i]$ be the $i$-th element of vector $\ba$. The optimal solution of Problem~\eqref{eq: mu_maximization} has the following form:
\begin{equation}
\label{eq: mu_update}
    \bmu^*_{1} [i] = \begin{cases} \bmu_{\max}[i], \quad \quad  \mbox{if } \: \bw[i] \leq 0 \\ 
\bmu_{\min}[i], \quad \quad \mbox{if } \: \bw[i] > 0. \end{cases}
\end{equation}
\end{theorem}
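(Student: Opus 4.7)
The plan is to reduce the maximization over $\bmu_1$ to a one‑dimensional monotonicity argument and then to a separable linear program on a box, which yields the bang‑bang solution in the statement.

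First I would note that when $\mathbf{x} \sim N(\bmu_1,\boldsymbol{\Sigma}_1)$ the scalar $\mathbf{w}^T\mathbf{x}$ is $N(\mathbf{w}^T\bmu_1,\, \mathbf{w}^T\boldsymbol{\Sigma}_1\mathbf{w})$, so with $\sigma^2 := \mathbf{w}^T\boldsymbol{\Sigma}_1\mathbf{w}$ (independent of $\bmu_1$) and $t := \mathbf{w}^T\bmu_1$, the objective is
\begin{equation*}
F(\bmu_1) \;=\; \mathbb{E}_{z\sim N(t,\sigma^2)}\bigl[-\log\sigma(z)\bigr] \;=:\; f(t).
\end{equation*}
Thus $F$ depends on $\bmu_1$ only through the linear functional $t=\mathbf{w}^T\bmu_1$.

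Next I would show that $f$ is strictly decreasing in $t$. Writing $g(z) := -\log\sigma(z) = \log(1+e^{-z})$ gives $g'(z) = -\sigma(-z) < 0$ for every $z\in\mathbb{R}$, so $g$ is strictly decreasing. Using the location‑shift form $z = t+u$ with $u\sim N(0,\sigma^2)$, Leibniz's rule justifies differentiation under the expectation (the integrand and its derivative are dominated by an integrable Gaussian tail), giving
\begin{equation*}
f'(t) \;=\; \mathbb{E}_{u\sim N(0,\sigma^2)}\bigl[g'(t+u)\bigr] \;<\; 0.
\end{equation*}
Hence $f$ is strictly decreasing in $t$, so maximizing $F(\bmu_1)$ over the box $\bmu_{\min}\le\bmu_1\le\bmu_{\max}$ is equivalent to
\begin{equation*}
\min_{\bmu_1} \;\;\mathbf{w}^T\bmu_1 \quad \text{s.t.}\quad \bmu_{\min}\le \bmu_1 \le \bmu_{\max}.
\end{equation*}

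This last problem is separable across coordinates: coordinate $i$ contributes $\mathbf{w}[i]\,\bmu_1[i]$, so the optimum is $\bmu_1^*[i]=\bmu_{\min}[i]$ when $\mathbf{w}[i]>0$ and $\bmu_1^*[i]=\bmu_{\max}[i]$ when $\mathbf{w}[i]<0$; when $\mathbf{w}[i]=0$ the choice is immaterial, and taking $\bmu_{\max}[i]$ agrees with the stated rule for $\mathbf{w}[i]\le 0$. This recovers exactly formula~\eqref{eq: mu_update}. The only subtle step is the monotonicity of $f$; everything else is either a change of variables or a trivial separable LP. I expect the main (mild) obstacle to be the justification of differentiating under the expectation, which follows from the Gaussian integrability of $g'$ together with its boundedness $|g'(z)|\le 1$; an alternative route, if one wants to avoid differentiation, is to invoke first‑order stochastic dominance of $N(t_2,\sigma^2)$ over $N(t_1,\sigma^2)$ for $t_2>t_1$ and use the monotonicity of $g$ to conclude $f(t_2)<f(t_1)$.
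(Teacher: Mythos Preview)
Your argument is correct: reducing to the scalar $t=\mathbf{w}^T\bmu_1$ via the distribution of $\mathbf{w}^T\mathbf{x}$, establishing monotonicity of $f(t)=\mathbb{E}[-\log\sigma(z)]$ through $g'(z)=-\sigma(-z)<0$, and then solving the resulting separable linear program on the box is exactly the right route and yields~\eqref{eq: mu_update}. The paper in fact states this theorem without supplying a proof, so your write-up fills that gap; the argument you give is the natural one and there is nothing to compare against.
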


Note that we relaxed~\eqref{robust_logistic_regression} by taking the maximization problem over a finite set of $\boldsymbol{\Sigma}$ estimations. We estimate each $\boldsymbol{\Sigma}$ by bootstrapping on the available data using Algorithm~\ref{alg: estimating_confidence_interval_length}. Define $f_{i} (\bw)$ as: 
\begin{equation}
\label{eq: f_i_definition}
\arraycolsep=1.4pt\def\arraystretch{1.3}
\begin{array}{lll}
    f_i(\bw) = \displaystyle{\pi_1 \mathbb{E}_{\mathbf{x} \sim N(\bmu^{*}_{1}, \boldsymbol{\Sigma}_{i1})} \Big[- \log \Big(\sigma(\mathbf{w}^T \mathbf{x}) \Big) \Big]}
\end{array}
\end{equation}
Similarly, we can define:
\begin{equation}
\label{eq: g_i_definiion}
\arraycolsep=1.4pt\def\arraystretch{1.3}
\begin{array}{lll}
    g_i(\bw) = \displaystyle{\pi_0 \mathbb{E}_{\mathbf{x} \sim N(\bmu^{*}_{0}, \boldsymbol{\Sigma}_{i0})} \Big[- \log \Big(1 - \sigma(\mathbf{w}^T \mathbf{x}) \Big) \Big]}
\end{array}
\end{equation}
Since the maximization problem is over a finite set, we can rewrite Problem~\eqref{eq: robust_logistic_regression_finite} as:
\begin{equation}
\label{eq: robust_logistic_regression_finite_linear_combination}
\arraycolsep=1.4pt\def\arraystretch{1.3}
\begin{array}{lll}
    \displaystyle{\min_{\mathbf{w}} \max_{i, j \in \{1, \dots, k\}}} \quad &  \displaystyle{f_i(\bw) + g_j (\bw)}  = \displaystyle{\min_{\mathbf{w}} \max_{p_1, \dots, p_k, q_1, \dots, q_k}} \quad &  \displaystyle{\sum_{i = 1}^{k} p_i f_i(\bw) + \sum_{j = 1}^{k} p_i g_j(\bw)} \\
    \st & \sum_{i=1}^{k} p_i = 1, \quad p_i \geq 0 \\
        & \sum_{j=1}^{k} q_j = 1, \quad q_j \geq 0
\end{array}
\end{equation}

Since the maximum of several functions is not necessarily smooth (differentiable), we add a quadratic regularization term to the maximization problem, accelerating the convergence rate \citep{nouiehed2019solving} as follows: 
\begin{equation}
\label{eq: robust_logistic_regression_finite_regularized}
\arraycolsep=1.4pt\def\arraystretch{1.3}
\begin{array}{lll}
    \displaystyle{\min_{\mathbf{w}} \max_{p_1, \dots, p_k, q_1, \dots, q_k}} \quad &  \displaystyle{\sum_{i = 1}^{k} p_i f_i(\bw)} - \delta \sum_{i=1}^{k} p_i ^2 + \sum_{j = 1}^{k} q_j g_j(\bw) - \delta \sum_{j=1}^{k} q_j ^2 \\
    \st & \sum_{i=1}^{k} p_i = 1, \quad p_i \geq 0 \\
        & \sum_{j=1}^{k} q_j = 1, \quad q_j \geq 0
\end{array}
\end{equation}
First, we show how to solve the inner maximization problem. Note that the $p_i$'s and $q_i$'s are independent. We show how to find optimal $p_i$'s. Optimizing with respect to $q_i$'s is similar. Since the maximization problem is a constrained quadratic program, we can write the Lagrangian function as follows:
\begin{equation}
\label{eq: robust_logistic_regression_finite_lagrangian}
\arraycolsep=1.4pt\def\arraystretch{1.3}
\begin{array}{lll}
    \displaystyle{\max_{p_1, \dots, p_k}} \quad &  \displaystyle{\sum_{i = 1}^{k} p_i f_i(\bw)} - \delta \sum_{i=1}^{k} p_i ^2 - \lambda (\sum_{i = 1}^{k} p_i - 1)\\
    \st \quad p_i \geq 0
\end{array}
\end{equation}
Having the optimal $\lambda$, the above problem has a closed-form solution with respect to each $p_i$, which can be written as:
\begin{equation*}
    p^{*}_i = \left[\frac{-\lambda + f_i}{2 \delta}\right]_{+}
\end{equation*}
Since $p^{*}_i$ is a non-increasing function with respect to $\lambda$, we can find the optimal value of $\lambda$ using the following bisection algorithm. Algorithm~\ref{alg: bisection} demonstrates how to find an $\epsilon$-optimal $\lambda$ and $p_i^{*}$'s efficiently using the bisection idea. 
\begin{algorithm}
	\caption{Finding the optimal $\lambda$ and $p_i$'s  using the bisection idea} 
	\label{alg: bisection}
	\begin{algorithmic}[1]
        \State \textbf{Initialize}: $\lambda_{\textrm{low}} = 0, \lambda_{\textrm{high}} = \max_{i} f_i, p_i = 0 \quad \forall i \in \{1, 2, \dots, k\}$. 
        
        \WHILE {$|\sum _{i=1}^ {n} p_k - 1| > \epsilon$}
            
            \State $\lambda = \frac{\lambda_{\textrm{low}} + \lambda_{\textrm{high}}}{2}$
                        
            \State Set $p_i = [\frac{-\lambda + f_i}{2\delta}]_{+} \quad \forall i \in \{1, 2, \dots, k\}$
            
            \IF {$\sum_{i=1}^k p_i < 1$}
                \State $\lambda_{\textrm{high}} = \lambda$
            \ELSE 
                \State $\lambda_{\textrm{low}} = \lambda$
            \ENDIF
        \ENDWHILE
        
        \State \textbf{return} $\lambda, p_1, p_2, \dots, p_k$.
	\end{algorithmic}
\end{algorithm}
\begin{remark}
An alternative method for finding optimal $\lambda$, and $p_i$'s is to sort $f_i$ values in $\mathcal{O}(k \log k)$ first, and then finding the smallest $f_i$ such that if we set $\lambda = f_i$, the sum of $p_i$'s is bigger than $1$ (let $j$ be the index of that value). Without loss of generality, assume that  $f_1 \leq \dots \leq f_k$. Then, $\sum_{i = j}^k \frac{-\lambda + f_i}{2\delta} = 1$, which has a closed-form solution with respect to $\lambda$.   
\end{remark} 
To update $\bw$, we need to solve the following optimization problem:
\begin{equation}
\label{eq: robust_logistic_regression_minimization}
\arraycolsep=1.4pt\def\arraystretch{1.3}
\begin{array}{lll}
    \displaystyle{\min_{\mathbf{w}}} \quad &  \displaystyle{ \sum_{i=1}^{k} p^*_i f_i(\bw) + \sum_{j=1}^{k} q^*_j g_i(\bw)}, \\
\end{array}
\end{equation}
Similar to the standard statistical learning framework, we solve the following empirical risk minimization problem by applying the gradient descent to $\bw$ on a finite data sample. Define $\hat{f}_i$ as follows:
\begin{equation}
\label{eq: f_i_erm_definition}
\hat{f}_i (\bw) = \pi_1 \sum_{t=1}^n \Big[- \log \Big(\sigma(\mathbf{w}^T \mathbf{x}_t) \Big) \Big],  
\end{equation}
where $\bx_1, \dots, \bx_n$ are generated from the distribution $\mathcal{N}(\bmu_1^*, \boldsymbol{\Sigma}_{1i})$.
The empirical risk minimization problem can be written as follows: 
\begin{equation}
\label{eq: erm_logistic_regression_minimization}
\arraycolsep=1.4pt\def\arraystretch{1.3}
\begin{array}{lll}
    \displaystyle{\min_{\mathbf{w}}} \quad &  \displaystyle{ \sum_{i=1}^{k} p^*_i \hat{f}_i(\bw) + \sum_{j=1}^{k} q^*_j \hat{g}_i(\bw)}, 
\end{array}
\end{equation}
Algorithm~\ref{alg: robust_normal_discriminant analysis} summarizes the robust linear discriminant analysis method for the case where the label of all data points is available.
\begin{algorithm}
	\caption{Robust Quadratic Discriminant Analysis in the Presence of Missing Values} 
	\label{alg: robust_normal_discriminant analysis}
	\begin{algorithmic}[1]
        \State \textbf{Input}: $\bX_0, \bX_1$: matrix of data points with labels $0$ and $1$ respectively, $T:$ Number of iterations, $\alpha:$ Step-size.   
        
        \State Estimate $\boldsymbol{\mu}_{\textrm{min}_0}$ and $\boldsymbol{\mu}_{\textrm{max}_0}$ using the available entries of $\bX_0$.
        
        \State Estimate $\boldsymbol{\mu}_{\textrm{min}_1}$ and $\boldsymbol{\mu}_{\textrm{max}_1}$ using the available entries of $\bX_1$.

        \State Estimate $\mathbf{\Sigma}_{01}, \dots, \mathbf{\Sigma}_{0k}$ using bootstrap estimator on the available data of $\bX_0$.
        
        \State Estimate $\boldsymbol{\Sigma}_{11}, \dots, \boldsymbol{\Sigma}_{1k}$ using bootstrap estimator on the available data of $\bX_1$.
        
        \FOR {$i = 1, \ldots, T$}
            \State Compute $\bmu^*_1$ and $\bmu^*_0$ by Equation~\eqref{eq: mu_update}.
            
            \State Find optimal $p_1, \dots, p_k$, and $q_1, \dots, q_k$ using Algorithm~\ref{alg: bisection}.
            
            \State $\bw = \bw - \alpha \Bigg( \sum_{i=1}^{k} p^*_i \nabla \hat{f}_i(\bw) + \sum_{j=1}^{k} q^*_j \nabla \hat{g}_i(\bw) \Bigg)$
        \ENDFOR
        
	\end{algorithmic}
\end{algorithm}
Theorem~\ref{thm: Robust_Logistic_Regression_Convergence} demonstrates the convergence of gradient descent algorithm applied to~\eqref{eq: erm_logistic_regression_minimization} in $\mathcal{O} \Big(\frac{k}{\epsilon} \log(\frac{M}{\epsilon})\Big)$ iterations to an $\epsilon$-optimal solution. 
\begin{theorem}\label{thm: Robust_Logistic_Regression_Convergence}
Assume that $M = \max_{i} f_i$. Gradient descent algorithm requires  $\mathcal{O} \Big(\frac{k}{\epsilon} \log(\frac{M}{\epsilon})\Big)$ gradient evaluations for converging to an $\epsilon$-optimal saddle point of the optimization problem~\eqref{eq: erm_logistic_regression_minimization}.
\end{theorem}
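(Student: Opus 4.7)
The plan is to reformulate the minimization in~\eqref{eq: erm_logistic_regression_minimization} as minimization of a single convex smooth envelope $F(\bw)$ obtained by absorbing the inner maximization, and then to apply the standard convergence rate of gradient descent on smooth convex objectives. Concretely, I would define
\[
F(\bw) \;\triangleq\; \max_{p,q \in \Delta_k} \; \sum_{i=1}^{k} p_i\,\hat{f}_i(\bw) - \delta \|p\|^2 \;+\; \sum_{j=1}^{k} q_j\,\hat{g}_j(\bw) - \delta \|q\|^2,
\]
so that the regularized min--max problem~\eqref{eq: robust_logistic_regression_finite_regularized} is precisely $\min_{\bw} F(\bw)$, and its $\epsilon$-optimal saddle points correspond to $\epsilon$-minimizers of $F$ paired with the (unique) inner maximizers returned by Algorithm~\ref{alg: bisection}.

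The first step is to verify that $F$ is convex: every $\hat{f}_i, \hat{g}_j$ is a convex function of $\bw$ (logistic loss is convex), any convex combination preserves convexity, and the pointwise maximum of convex functions is convex. The second step is to establish smoothness of $F$. The inner objective is linear in the payoffs $\hat{f}_i(\bw), \hat{g}_j(\bw)$ and $2\delta$-strongly concave in $(p,q)$, so the maximizer $(p^*(\bw), q^*(\bw))$ is unique and Lipschitz in $\bw$ with constant $\mathcal{O}(G/\delta)$, where $G$ bounds $\|\nabla_{\bw} \hat{f}_i\|, \|\nabla_{\bw} \hat{g}_j\|$ over the region of interest. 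Combining this with Danskin's theorem gives $\nabla F(\bw) = \sum_i p_i^*(\bw)\,\nabla \hat{f}_i(\bw) + \sum_j q_j^*(\bw)\,\nabla \hat{g}_j(\bw)$, and a direct computation yields a Lipschitz constant $L = \mathcal{O}(1 + G^2/\delta)$ for $\nabla F$.

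With $F$ convex and $L$-smooth, the textbook bound for gradient descent gives $F(\bw_T) - F^{\star} \leq \mathcal{O}(L\,\|\bw_0 - \bw^{\star}\|^2 / T)$, so $T = \mathcal{O}(1/\epsilon)$ outer steps suffice. Each outer step has two sub-costs. First, computing $p^*(\bw), q^*(\bw)$ via Algorithm~\ref{alg: bisection}: since the optimal multiplier $\lambda$ lies in $[0,M]$ and each candidate $p_i = [(-\lambda + f_i)/(2\delta)]_+$ is monotone in $\lambda$, the bisection reaches the precision required to certify an $\epsilon$-inexact inner solution in $\mathcal{O}(\log(M/\epsilon))$ halvings, each evaluating $\sum_i p_i$ in $\mathcal{O}(k)$ arithmetic. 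Second, forming $\nabla F$ requires evaluating the $2k$ gradients $\nabla \hat{f}_i, \nabla \hat{g}_j$, i.e.\ $\mathcal{O}(k)$ gradient evaluations. Multiplying outer iterations by per-iteration cost yields the advertised $\mathcal{O}\!\bigl(\tfrac{k}{\epsilon}\log(M/\epsilon)\bigr)$ bound.

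The main obstacle I anticipate is the inexact-oracle bookkeeping: I would need to show that using an $\mathcal{O}(\epsilon)$-suboptimal maximizer $(\tilde p, \tilde q)$ in place of $(p^*, q^*)$ yields an $\mathcal{O}(\epsilon)$-inexact gradient of $F$, and that the outer gradient-descent rate is preserved up to constants under such inexact gradients (a standard but non-trivial refinement, e.g.\ via a Devolder--Glineur--Nesterov style analysis). Calibrating the bisection tolerance to the outer accuracy, expressed through $\delta$ and $M = \max_i f_i$, is what pins down the $\log(M/\epsilon)$ factor; everything else is routine smooth convex optimization.
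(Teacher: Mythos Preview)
The paper does not actually supply a proof of this theorem: unlike Theorems~\ref{thm: Robust_Regression_Convergence}--\ref{thm: regression_performance}, there is no ``proof relegated to Appendix~B'' line following the statement, and no argument for it appears anywhere in the manuscript or its appendices. So there is nothing to compare your proposal against directly.

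On its own merits, your plan is the natural route and is essentially sound. Forming the envelope $F(\bw)$, using the $2\delta$-strong concavity of the regularized inner problem to get a unique Lipschitz maximizer, invoking Danskin to obtain smoothness of $F$, and then quoting the $\mathcal{O}(L/\epsilon)$ rate for gradient descent on smooth convex functions is exactly how one would expect to prove a result of this shape; it also mirrors the style of argument the paper uses for Theorem~\ref{thm: Robust_Regression_Convergence} in the regression case. Two cautions. First, your cost accounting mixes units: the bisection contributes $\mathcal{O}(k\log(M/\epsilon))$ \emph{arithmetic} operations per outer step (on already-computed values $f_i(\bw)$), whereas forming $\nabla F$ contributes $\mathcal{O}(k)$ \emph{gradient evaluations}; strictly counted, the number of gradient evaluations is $\mathcal{O}(k/\epsilon)$ and the $\log(M/\epsilon)$ factor only enters the overall arithmetic complexity. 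The paper's stated bound appears to conflate these, and your write-up inherits that looseness---worth flagging explicitly. Second, you correctly identify the inexact-oracle bookkeeping as the real technical step; note in addition that in Algorithm~\ref{alg: robust_normal_discriminant analysis} the $\bmu^*$-update and the resampling of $\bx_t$ from $\mathcal{N}(\bmu^*,\boldsymbol{\Sigma}_{1i})$ make $\hat{f}_i$ depend on $\mathrm{sign}(\bw)$, which breaks global smoothness of $F$ across coordinate hyperplanes. A fully rigorous version would need to either freeze $\bmu^*$ and the samples (so that \eqref{eq: erm_logistic_regression_minimization} is a fixed smooth convex problem, which is the most literal reading of the theorem) or argue separately that this non-smoothness does not spoil the rate.
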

 
In Appendix~\ref{appendix: robust_classification}, we extend the methodology to the case where $y$ contains missing entries.
\section{Experiments}
\label{sec: numerical_results}
In this section, we evaluate RIFLE's performance on a diverse set of inference tasks in the presence of missing values. We compare RIFLE's performance to several state-of-the-art approaches for data imputation on synthetic and real-world datasets. 
The experiments are designed in a manner that the sensitivity of the model to factors such as the number of samples, data dimension, types, and proportion of missing values can be evaluated. The description of all datasets used in the experiments can be found in Appendix~\ref{appendix: datasets}.
\subsection{Evaluation Metrics}
We need access to the ground-truth values of the missing entries to evaluate RIFLE and other state-of-the-art imputation approaches. Hence, we artificially mask a proportion of available data entries and predict them with different imputation methods. A method performs better than others if the predicted missing entries are closer to the ground-truth values. To measure the performance of RIFLE and the existing approaches on a regression task for a given test dataset consisting of $N$ data points, we use normalized root mean squared error (NRMSE), defined as:
\begin{equation*}
    \textrm{NRMSE} = \frac{\sqrt{\frac{1}{N} \sum_{i=1}^{N} (y_i - \hat{y}_i)^2}}{\sqrt{\frac{1}{N} \sum_{i=1}^{N} (y_i - \bar{y})^2}}  
\end{equation*}
where $y_i$, $\hat{y}_i$, and $\bar{y}$ represent the true value of the $i$-th data point, the predicted value of the $i$-th data point, and the average of true values of data points, respectively. In all experiments, generated missing entries follow either a missing completely at random (MCAR) or a missing not at random (MNAR) pattern. A discussion on the procedure of generating these patterns can be found in Appendix~\ref{appendix: MNAR}.
\subsection{Tuning Hyper-parameters of RIFLE}
The hyper-parameter $c$ in~\eqref{robust_linear_regression2} controls the robustness of the model by adjusting the size of confidence intervals. This parameter is tuned by performing a cross-validation procedure over the set $\{0.1, 0.25, 0.5, 1, 2, 5, 10, 20,$ $50, 100\}$, and the one with the lowest NMRSE is chosen. The default value in the implementation is $c=1$ since it consistently performs well over different experiments. Furthermore, $\lambda$, the hyper-parameter for the ridge regression regularizer, is tuned by choosing $20\%$ of the data as the validation set from the set $\{0.01, 0.1, 0.5, 1, 2, 5, 10, 20, 50\}$. To tune $K$, the number of bootstrap samples for estimating the confidence intervals, we tried $10, 20, 50$, and $100$. No significant difference is observed in terms of the test performance for the above values. 

Furthermore, we tune the hyper-parameters of the competing packages as follows. For KNN-Imputer~\citep{troyanskaya2001missing}, we try $\{2, 10, 20, 50\}$ for the number of neighbors ($K$) and pick the one with the highest performance. For MICE~\citep{buuren2010mice} and Amelia~\citep{honaker2011amelia}, we generate $5$ different imputed data and pick the one with the highest performance on the test data. MissForest has multiple hyper-parameters. We keep the criterion as ``MSE'' since our performance evaluation measure is NRMSE. Moreover, we tune the number of iterations and number of estimations (number of trees) by checking values from $\{5, 10, 20\}$ and $\{50, 100, 200\}$, respectively. We do not change the structure of the neural networks for MIDA~\citep{gondara2018mida} and GAIN~\citep{yoon2018gain}, and the default versions are performed for imputing datasets.

\vspace{-2mm}
\subsection{RIFLE Consistency}
\vspace{-2mm}
In Theroem~\ref{thm: regression_performance} Part (a), we demonstrated that RIFLE is consistent.  
\begin{figure}[ht]
\centering
\includegraphics[width=0.8\columnwidth]{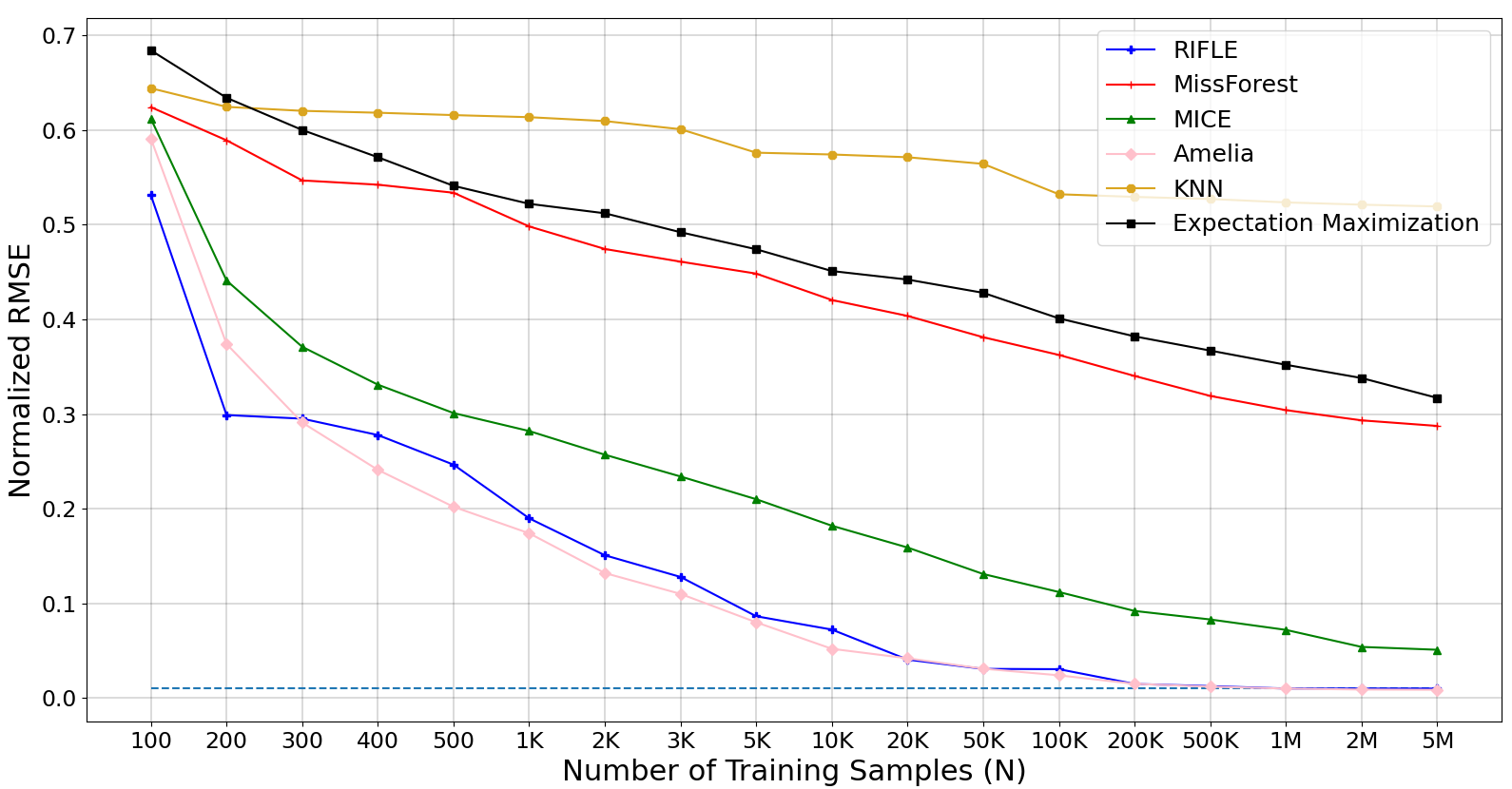}
\caption{Comparing the consistency of RIFLE, MissForest, KNN Imputer, MICE, Amelia, and Expectation Maximization methods on a synthetic dataset containing $40\%$ of missing values.}
\label{fig: consistency_different_methods}
\end{figure}
In Figure~\ref{fig: consistency_different_methods}, we investigate the consistency of RIFLE on synthetic datasets with different proportions of missing values. The synthetic data has $50$ input features following a jointly normal distribution with the mean whose entries are randomly chosen from the interval $(-100, 100)$. Moreover, the covariance matrix equals $\Sigma = S S^T$ where $S$ elements are randomly picked from $(-1, 1)$. The dimension of $S$ is $50 \times 20$.  The target variable is a linear function of input features added to a mean zero normal noise with a standard deviation of $0.01$. As depicted in Figure~\ref{fig: consistency_different_methods}, RIFLE requires fewer samples to recover the ground-truth parameters of the model compared to MissForest, KNN Imputer, Expectation Maximization~\citep{dempster1977maximum}, and MICE. Amelia's performance is significantly good since the predictors have a joint normal distribution and the linear underlying model. Note that by increasing the number of samples, the NRMSE of our framework converges to $0.01$, which is the standard deviation of the zero-mean Gaussian noise added to each target value (the dashed line).

\begin{table}[]
\centering
\vspace{-5mm}
\resizebox{\textwidth}{!}{
\begin{tabular}{|c|c|c|c|c|c|c|c|c|}
\hline
\textbf{Dataset Name} & \textbf{RIFLE} & \textbf{QRIFLE} & \textbf{MICE} & \textbf{Amelia} & \textbf{GAIN} & \textbf{MissForest} & \textbf{MIDA} & \textbf{EM} \\ \hline
Spam (30\%)           & {\ul 0.87} $\pm 0.009$  & \textbf{0.82} $\pm 0.009$ & 1.23 $\pm 0.012$          & 1.26 $\pm 0.007$            & 0.91 $\pm 0.005$          & 0.90 $\pm 0.013$ & 0.97 $\pm 0.008$ &  $0.94 \pm 0.004$         \\ \hline
Spam (50\%)           & {\ul 0.90} $\pm 0.013$ & \textbf{0.86} $\pm 0.014$ & 1.29 $\pm 0.018$         & 1.33 $\pm 0.024$            & 0.93 $\pm 0.015$         & 0.92 $\pm 0.011$          & 0.99 $\pm 0.011$ & $0.97 \pm 0.008$    \\ \hline
Spam (70\%)           & {\ul 0.92} $\pm 0.017$ & \textbf{0.91} $\pm 0.019$ & 1.32 $\pm 0.028$          & 1.37 $\pm 0.032$   & 0.97 $\pm 0.014$ & 0.95 $\pm 0.016$  & 0.99 $\pm 0.018$ & $0.98 \pm 0.017$ \\ \hline
Housing (30\%)        & 0.86 $\pm 0.015$     &  0.89 $\pm 0.018$     & 1.03 $\pm 0.024$          & 1.02 $\pm 0.016$            & \textbf{0.82} $\pm 0.015$ & {\ul 0.84} $\pm 0.018$          & 0.93 $\pm 0.025$ & $0.95 \pm 0.011$         \\ \hline
Housing (50\%)        & \textbf{0.88} $\pm 0.021$ & 0.90 $\pm 0.024$ & 1.14 $\pm 0.029$          & 1.09 $\pm 0.027$            & \textbf{0.88} $\pm 0.019$ & \textbf{0.88} $\pm 0.018$  & 0.98 $\pm 0.029$ & $0.96 \pm 0.016$        \\ \hline
Housing (70\%)        & \textbf{0.92} $\pm 0.026$  & 0.95 $\pm 0.028$ & 1.22 $\pm 0.036$          & 1.18 $\pm 0.038$            & 0.95 $\pm 0.027$         & {\ul 0.93} $\pm 0.024$          & 1.02 $\pm 0.037$  & $0.98 \pm 0.017$        \\ \hline
Clouds (30\%)         & 0.81 $\pm 0.018$ & 0.79 $\pm 0.019$           & 0.98 $\pm 0.024$          & 1.04 $\pm 0.027$            & {\ul 0.76} $\pm 0.021$    & \textbf{0.71} $\pm 0.011$       & 0.83 $\pm 0.022$ & $0.86 \pm 0.013$ \\ \hline
Clouds (50\%)         & 0.84 $\pm 0.026$ & 0.84 $\pm 0.028$  & 1.10 $\pm 0.041$          & 1.13 $\pm 0.046$            & {\ul 0.82} $\pm 0.027$    & \textbf{0.75} $\pm 0.023$       & 0.88 $\pm 0.033$  & $0.89 \pm 0.018$ \\ \hline
Clouds (70\%)         & {\ul 0.87} $\pm 0.029$ & 0.90 $\pm 0.033$    & 1.16 $\pm 0.044$         & 1.19 $\pm 0.048$            & 0.89 $\pm 0.035$          & \textbf{0.81} $\pm 0.031$       & 0.93 $\pm 0.044$  & $0.92 \pm 0.023$        \\ \hline
Breast Cancer (30\%)  & \textbf{0.52} $\pm 0.023$ & {\ul 0.54} $\pm 0.027$ & 0.74 $\pm 0.031$          & 0.81 $\pm 0.032$  & 0.58 $\pm 0.024$          & 0.55 $\pm 0.016$ & 0.70 $\pm 0.026$ & $0.67 \pm 0.014$ \\ \hline
Breast Cancer (50\%)  & \textbf{0.56} $\pm 0.026$ & 0.59 $\pm 0.027$ & 0.79 $\pm 0.029$          & 0.85 $\pm 0.033$            & 0.64 $\pm 0.025$          & {\ul 0.59} $\pm 0.022$          & 0.76 $\pm 0.035$ & $0.69 \pm 0.022$ \\ \hline
Breast Cancer (70\%)  & \textbf{0.59} $\pm 0.031$ & 0.65 $\pm 0.034$ & 0.86 $\pm 0.042$          & 0.92 $\pm 0.044$            & 0.70 $\pm 0.037$          & {\ul 0.63} $\pm 0.028$          & 0.82 $\pm 0.035$ & $0.67 \pm 0.014$ \\ \hline
Parkinson (30\%)      & 0.57 $\pm 0.016$  & 0.55 $\pm 0.016$         & 0.71 $\pm 0.019$         & 0.67 $\pm 0.021$            & \textbf{0.53} $\pm 0.015$ & {\ul 0.54} $\pm 0.010$          & 0.62 $\pm 0.017$  & $0.64 \pm 0.011$  \\ \hline
Parkinson (50\%)      & {\ul 0.62} $\pm 0.022$ & 0.64 $\pm 0.025$    & 0.77 $\pm 0.029$          & 0.74 $\pm 0.034$            & \textbf{0.61} $\pm 0.022$ & 0.65 $\pm 0.014$                & 0.71 $\pm 0.027$   & $0.69 \pm 0.022$       \\ \hline
Parkinson (70\%)      & \textbf{0.67} $\pm 0.027$ & 0.74 $\pm 0.033$ & 0.85 $\pm 0.038$          & 0.82 $\pm 0.037$            & {\ul 0.69} $\pm 0.031$    & 0.73 $\pm 0.022$                & 0.78 $\pm 0.038$     & $0.75 \pm 0.029$    \\ \hline
\end{tabular}
}
\vspace{-3mm}
\caption{Performance comparison of RIFLE, QRIFLE (Quadratic RIFLE), and state-of-the-art methods on several UCI datasets. We applied to impute methods on three different missing-value proportions for each dataset. The best imputer is highlighted with bold font, and the second-best imputer is underlined. Each experiment is done $5$ times, and the average and the standard deviation of performances are reported.}
\label{tab:imputation_uci}
\end{table}

\vspace{-3mm}
\subsection{Data Imputation via RIFLE}
\vspace{-2mm}
As explained in Section~\ref{sec: robust_linear_regression}, while the primary goal of RIFLE is to learn a robust regression model in the presence of missing values, it can also be used as an imputation tool. We run RIFLE and several state-of-the-art approaches on five datasets from the UCI repository~\citep{Dua:2019} (Spam, Housing, Clouds, Breast Cancer, and Parkinson datasets) with different proportions of MCAR missing values (the description of the datasets can be found in Appendix~\ref{appendix: datasets}). Then, we compute the NMRSE of imputed entries. Table~\ref{tab:imputation_uci} shows the performance of RIFLE compared to other approaches for the datasets where the proportion of missing values are relatively high $\Big(\frac{n(1-p)}{d} \approx  \mathcal{O}(1)\Big)$. RIFLE outperforms these methods in almost all cases and performs slightly better than MissForest, which uses a highly non-linear model (random forest) to impute missing values.

\subsection{Sensitivity of RIFLE to the Number of Samples and Proportion of Missing Values}
In this section, we analyze the sensitivity of RIFLE and other state-of-the-art approaches to the number of samples and the proportion of missing values. 
\begin{figure}[ht]
\centering
\vspace{-3mm}
\includegraphics[width=0.8\columnwidth]{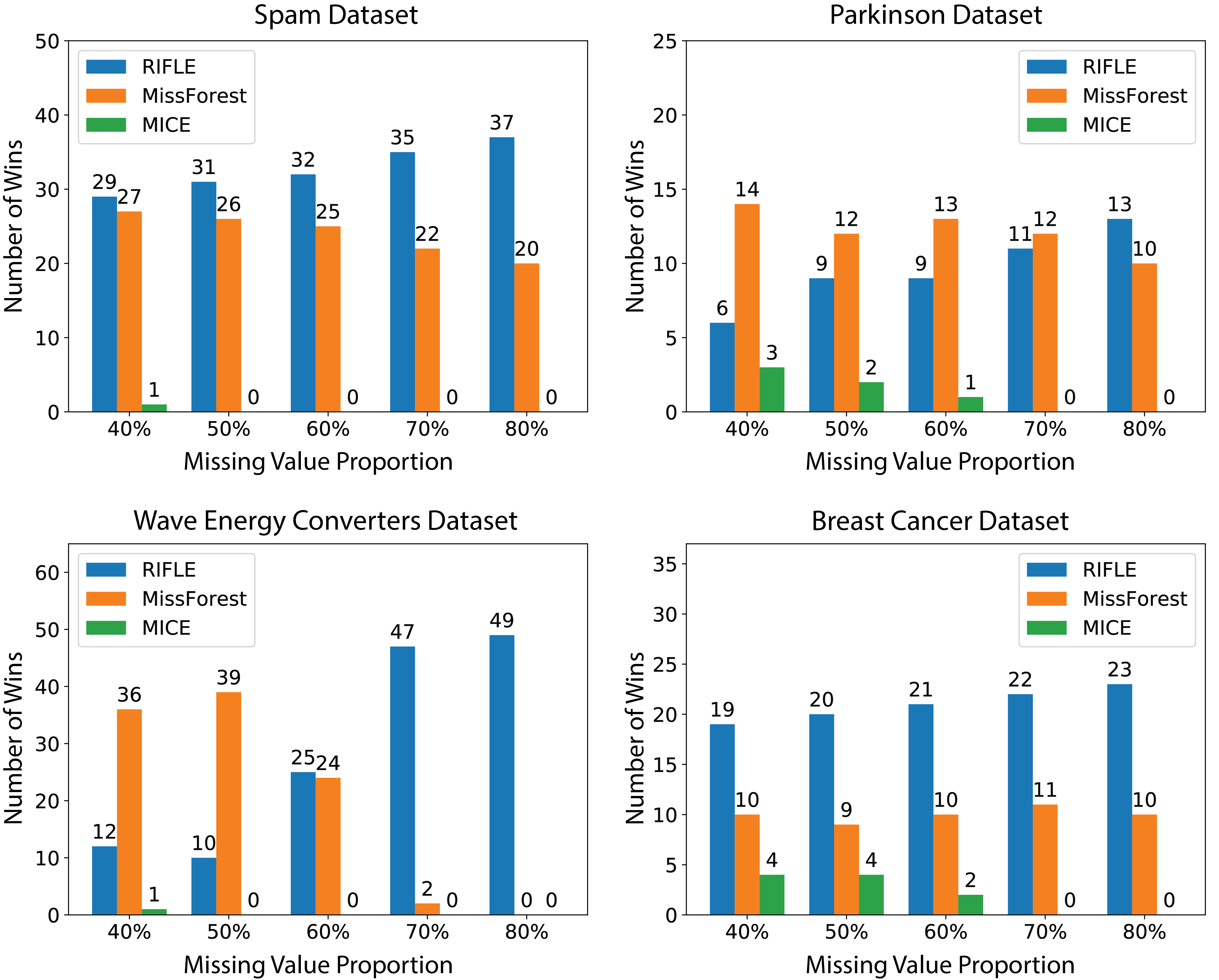}
\vspace{-3mm}
\caption{Performance Comparison of RIFLE, MICE, and MissForest on four UCI datasets: Parkinson, Spam, Wave Energy Converter, and Breast Cancer. For each dataset, we count the number of features that each method outperforms the others.}
\label{fig: counts}
\end{figure}
In the experiment in Figure~\ref{fig: counts}, we create $5$ datasets containing $40\%$, $50\%$, $60\%$, $70\%$, and $80\%$ of MCAR missing values, respectively, for four real datasets (Spam, Parkinson, Wave Energy Converter, and Breast Cancer) from UCI Repository~\citep{Dua:2019} (the description of the datasets can be found in Appendix~\ref{appendix: datasets}). Given a feature in a dataset containing missing values, we say an imputer wins that feature if the imputation error in terms of NRMSE for that imputer is less than the error of the other imputers. Figure~\ref{fig: counts} reports the number of features won by each imputer on the created datasets described above. As we observe, the number of wins for RIFLE increases as we increase the proportion of missing values. This observation shows that the sensitivity of RIFLE as an imputer to the proportion of missing values is less than MissForest and MICE in general.

\begin{figure}[ht]
\centering
\vspace{-3mm}
\includegraphics[width=0.8\columnwidth]{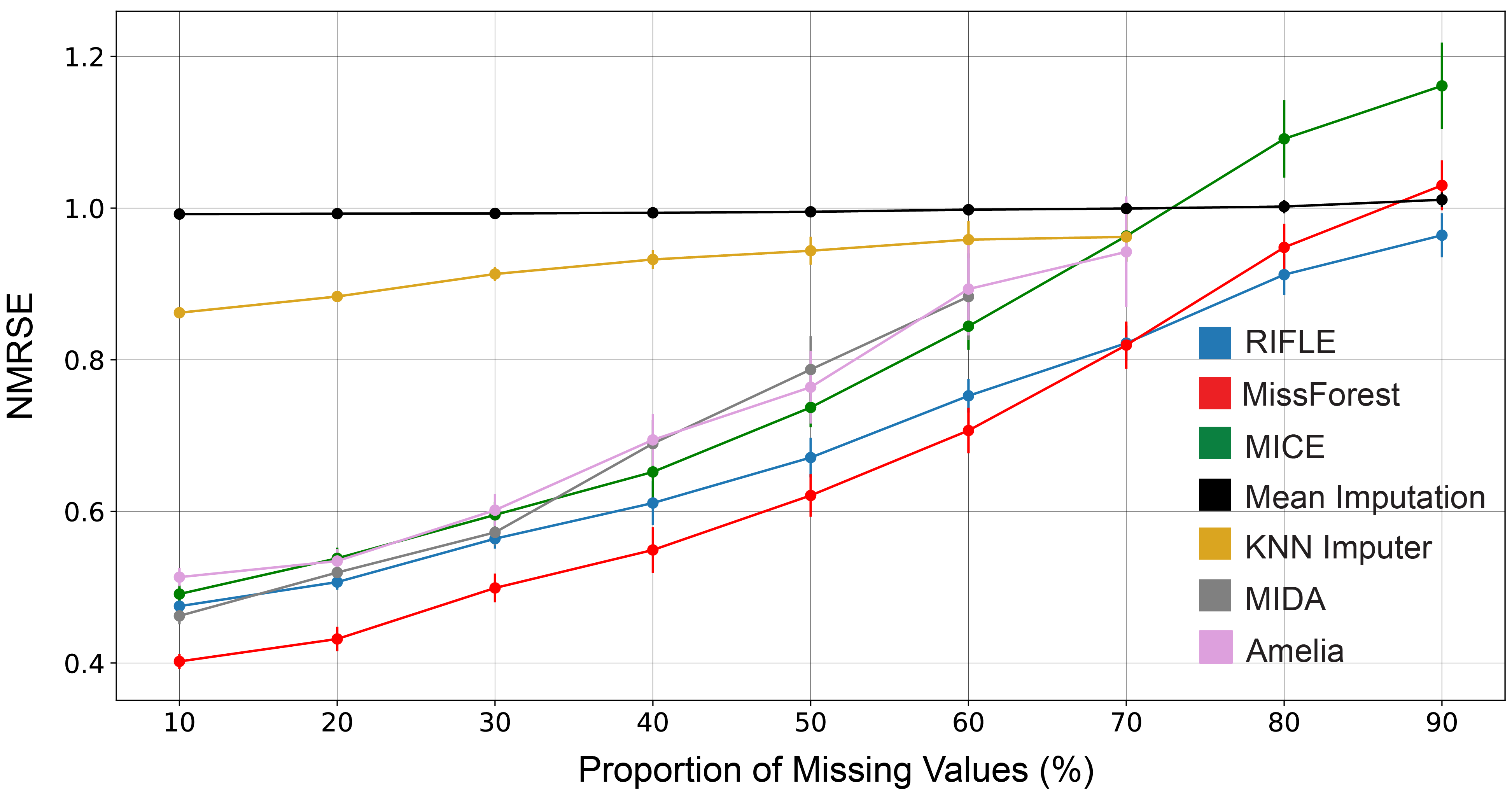}
\vspace{-3mm}
\caption{Sensitivity of RIFLE, MissForest, Amelia, KNN Imputer, MIDA, and Mean Imputer to the percentage of missing values on the Drive dataset. Increasing the percentage of missing value entries degrades the benchmarks' performance compared to RIFLE. KNN-imputer implementation cannot be executed on datasets containing  $80\%$ (or more) missing entries. Moreover, Amelia and MIDA do not converge to a solution when the percentage of missing value entries is higher than $70\%$.}
\label{fig: sensitivity_missing_value_proportion}
\end{figure}
Figure~\ref{fig: counts} does not show how the increase in the proportion of missing values changes the NRMSE of imputers. Next, we analyze the sensitivity of RIFLE and several imputers to change in missing value proportions. Fixing the proportion of missing values, we generate $10$ random datasets containing missing values in random locations on the Drive dataset (the description of datasets is available in Appendix~\ref{appendix: datasets}). We impute the missing values for each dataset with RIFLE, MissForest, Mean Imputation, and MICE. Figure~\ref{fig: sensitivity_missing_value_proportion} shows the average and the standard deviation of these $4$ imputers' performances for different proportions of missing values ($10\%$ to $90\%$). 
Figure~\ref{fig: sensitivity_missing_value_proportion} depicts the sensitivity of MissForest and RIFLE to the proportion of missing values in the Drive dataset. We select 400 data points for each experiment with different proportions of missing values (from $10\%$ to $90\%$) and report the average NRMSE of imputed entries.  Finally, in Figure~\ref{fig: sensitivity_number_of_samples}, we have evaluated RIFLE and other methods on the BlogFeedback dataset (see Appendix~\ref{appendix: datasets}) containing $40\%$ missing values. The results show that RIFLE's performance is less sensitive to decreasing the number of samples.
\begin{figure}[ht]
\centering
\includegraphics[width=0.8\columnwidth]{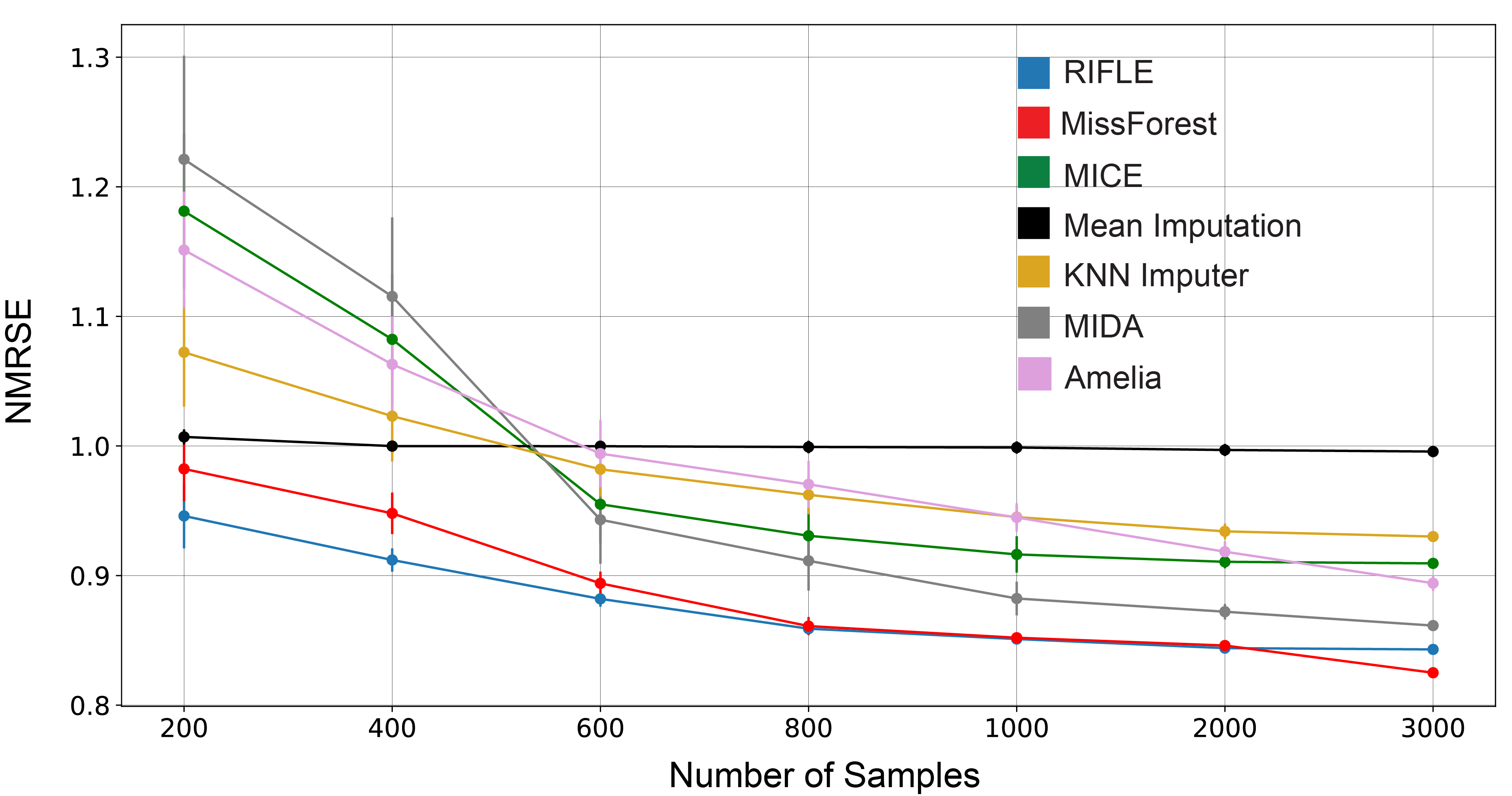}
\vspace{-3mm}
\caption{Sensitivity of RIFLE, MissForest, MICE, Amelia, Mean Imputer, KNN Imputer, and MIDA to the number of samples for the imputations of Blog Feedback dataset containing $40\%$ of MCAR missing values. When the number of samples is limited, RIFLE outperforms other methods, and its performance is very close to the non-linear imputer MissForest for larger samples.}
\label{fig: sensitivity_number_of_samples}
\end{figure}

\subsection{Performance Comparison on Real Datasets}
In this section, we compare the performance of RIFLE to several state-of-the-art approaches, including MICE~\citep{buuren2010mice}, Amelia~\citep{honaker2011amelia}, MissForest~\citep{stekhoven2012missforest}, KNN Imputer~\citep{raghunathan2001multivariate}, and MIDA~\citep{gondara2018mida}. There are two primary ways to do this. One method to predict a continuous target variable in a dataset with many missing values is first to impute the missing data with a state-of-the-art package, then run a linear regression. An alternative approach is to directly learn the target variable, as we discussed in Section~\ref{sec: robust_linear_regression}.

Table~\ref{tab:real_datasets_mnar} compares the performance of mean imputation, MICE, MIDA, MissForest, and KNN to that of RIFLE on three datasets: NHANES, Blog Feedback, and superconductivity. Both Blog Feedback and Superconductivity datasets contain $30\%$ of MNAR missing values generated by Algorithm~\ref{alg: MNAR_Generator}, with $10000$ and $20000$ training samples, respectively. The description of the NHANES data and its distribution of missing values can be found in Appendix~\ref{appendix: datasets}. 

\noindent{\textbf{Efficiency of RIFLE:}} We perform RIFLE for $1000$ iterations and the step size of $0.01$ in the above experiments. At each iteration, the main operation is to find the optimal $\btheta$ for any given $\bb$ and $\bC$. The average time of each method on each dataset is reported in Table~\ref{tab:time} in Appendix~\ref{appendix: time}. The main reason for the time efficiency of RIFLE compared to MICE, MissForest, MIDA, and KNN Imputer is that it directly predicts the target variable without imputation of all missing entries.

\begin{table}[ht]
\begin{center}
\begin{tabular}{|c|c|c|c|c}
\hline
\multirow{2}{*}{\textbf{Methods}} & \multicolumn{3}{c|}{\textbf{Datasets}}      \\ \cline{2-4} 
                                  & Super Conductivity & Blog Feedback & NHANES \\ \hline
Regression on Complete Data       & 0.4601             & 0.7432        & 0.6287 \\ \hline
\textbf{RIFLE}                             & $\textbf{0.4873} \pm 0.0036$             & $0.8326 \pm 0.0085$        & $\textbf{0.6304} \pm 0.0027$ \\ \hline
Mean Imputer + Regression      & $0.6114 \pm 0.0006$             & $0.9235 \pm 0.0003$        & $0.6329 \pm 0.0008$ \\ \hline
MICE + Regression      & $0.5078 \pm {0.0124}$             & $0.8507 \pm 0.0325$        & $0.6612 \pm 0.0282$ \\ \hline
EM + Regression                       & $0.5172 \pm 0.0162$              & $0.8631 \pm 0.0117$        & $0.6392 \pm 0.0122$ \\ \hline
MIDA Imputer + Regression     & $0.5213 \pm 0.0274$             & $0.8394 \pm 0.0342$        & $0.6542 \pm 0.0164$ \\ \hline
MissForest                        & $0.4925 \pm 0.0073$             & $\textbf{0.8191} \pm 0.0083$        & $0.6365 \pm 0.0094$  \\ \hline
KNN Imputer                       & $0.5438 \pm 0.0193$              & $0.8828 \pm 0.0124$        & $0.6427 \pm 0.0135$ \\ \hline
\end{tabular}    
\end{center}
\caption{Normalized RMSE of RIFLE and several state-of-the-art Methods on Superconductivity, blog feedback, and NHANES datasets. The first two datasets contain $30\%$ Missing Not At Random (MNAR) missing values in the training phase generated by Algorithm~\ref{alg: MNAR_Generator}. Each method applied $5$ times to each dataset, and the result is reported as the average performance $\pm$ standard deviation of experiments in terms of NRMSE.}
\label{tab:real_datasets_mnar} 
\end{table}
Since MICE and MIDA cannot predict values during the test phase without data imputation, we use them in a pre-processing stage to impute the data. Then we apply the linear regression to the imputed dataset. On the other hand, RIFLE, KNN imputer, and MissForest can predict the target variable without imputing the training dataset. Table~\ref{tab:real_datasets_mnar} shows that RIFLE outperforms all other state-of-the-art approaches executed on the three mentioned datasets. In particular, RIFLE outperforms MissForest, while the underlying model RIFLE uses is simpler (linear) compared to the nonlinear random forest model utilized by Missforest.

\subsubsection{Performance of RIFLE on Classification Tasks}
In Section~\ref{sec: robust_classification}, we discussed how to specialize RIFLE to robust normal discriminant analysis in the presence of missing values. Since the maximization problem over the second moments of the data ($\boldsymbol{\Sigma}$) is intractable, we solved the maximization problem over a set of $k$ covariance matrices estimated by bootstrap sampling. To investigate the effect of choosing $k$ on the performance of the robust classifier, we train robust normal discriminant analysis models for different values of $k$ on two training datasets (Avila and Magic) containing $40\%$ MCAR missing values. The description of the datasets can be found in Appendix~\ref{appendix: datasets}. For $k=1$, there is no maximization problem, and thus, it is equivalent to the classifier proposed in~\citet{fung1989treatment}. As shown in Figure~\ref{fig: number_of_estimations}, increasing the number of covariance estimations generally enhances the accuracy of the classifier in the test phase. However, as shown in Theorem~\ref{thm: Robust_Logistic_Regression_Convergence}, the required time for completing the training phase grows linearly regarding the number of covariance estimations.   
\begin{figure}[ht]
\centering
\vspace{-4mm}
\includegraphics[width=1\columnwidth]{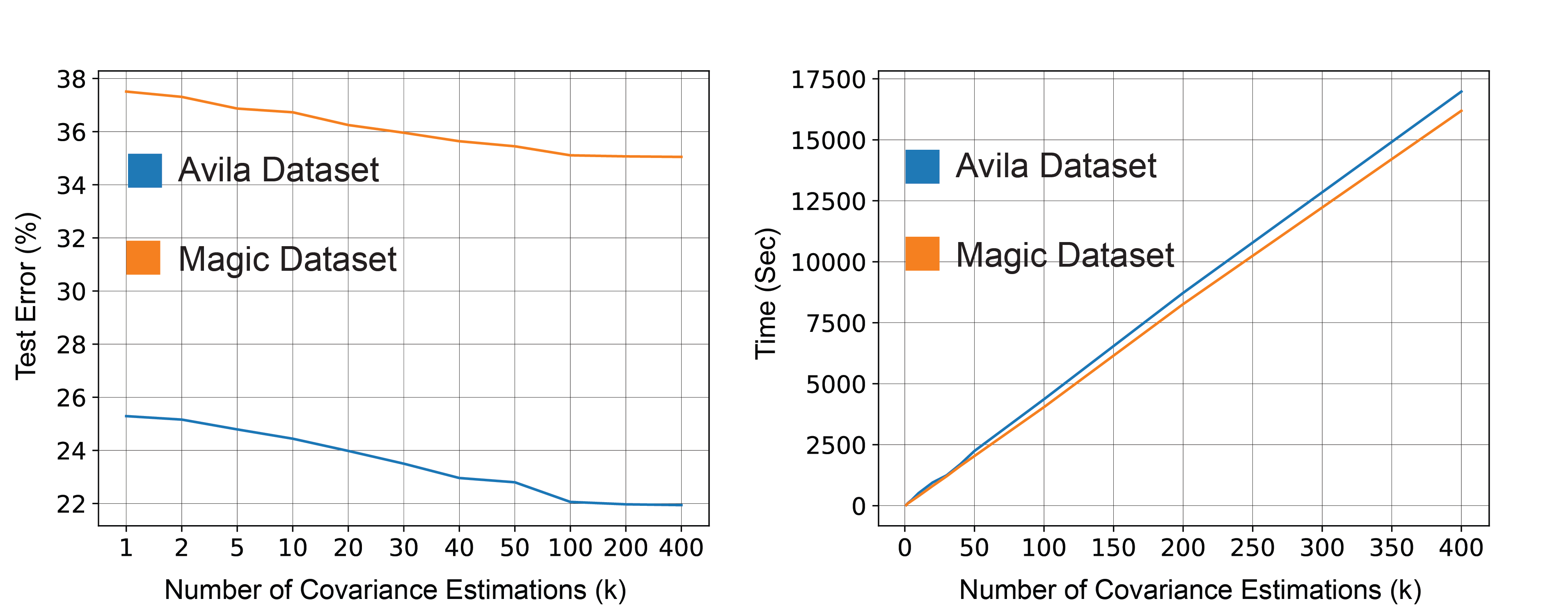}
\caption{Effect of the number of covariance estimations on the performance (left) and run time (right) of robust LDA on Avila and Magic datasets. Increasing the number of covariance estimations ($k$) improves the model's accuracy on the test data. However, it takes longer training time. }
\label{fig: number_of_estimations}
\end{figure}

\begin{table}
\resizebox{\textwidth}{!}{
\begin{tabular}{|c|c|c|c|}
\hline
\multirow{2}{*}{\textbf{Number of Training Data Points}} & \multicolumn{3}{c|}{\textbf{Method}}                  \\ \cline{2-4} 
                                                         & LDA     & Robust LDA & Robust QDA \\ \hline
50                                                       & $52.38\% \pm 3.91\%$ & $62.14\% \pm 1.78\%$  & $61.36\% \pm 1.62\%$    \\ \hline
100                                                      & $61.24\% \pm 1.89\%$ & $68.46\% \pm 1.04\%$  & $70.07\% \pm 0.95\%$    \\ \hline
200                                                      & $73.49\% \pm 0.97\%$ & $73.35\% \pm 0.67\%$                        & $73.51\% \pm 0.52\%$    \\ \hline
\end{tabular}
}
\caption{Sensitivity of Linear Discriminant Analysis, Robust LDA (Common Covariance Matrices), and Robust QDA (Different Covariance matrices for two groups) to the number of training samples.}
\label{tab: lda_vs_robust}
\end{table}
\vspace{-3mm}
\subsubsection{Comparison of Robust Linear Regression and Robust QDA}
\vspace{-2mm}
An alternative approach to the robust QDA presented in Section~\ref{sec: robust_classification} is to apply the robust linear regression algorithm (Section~\ref{sec: robust_linear_regression}) and mapping the solutions to each one of the classes by thresholding (positive value maps to Label $1$ and negative values to label $-1$). 

Table~\ref{tab:robust_linear_regression_vs_lda} compares the performance of two classifiers on three different datasets. As demonstrated in the table, when all features are continuous, quadratic discriminant analysis has a better performance. It shows the QDA model relies highly on the normality assumption, while robust linear regression handles the categorical features better than robust QDA.
\begin{table}
\centering
\resizebox{\textwidth}{!}{
\begin{tabular}{cc|c|c|c|c|c|c|c}
\cline{3-8}
\multicolumn{1}{l}{}                       & \multicolumn{1}{l|}{} & \multicolumn{6}{c|}{\textbf{Accuracy of Methods}}                      \\ \hline
\multicolumn{1}{|c|}{\textbf{Dataset}}              & Feature Type          & RIFLE   & Robust QDA & MissForest & MICE & KNN Imputer & EM \\ \hline
\multicolumn{1}{|c|}{Glass Identification} & Continuous            & $67.12\% \pm 1.84\%$ & $\mathbf{69.54\% \pm 1.97\%}$  & $65.76\% \pm 1.49\%$     & $62.48\% \pm 2.45\%$ & $60.37\% + \pm 1.12\%$ & $68.21\% + \pm 0.94\%$     \\ \hline
\multicolumn{1}{|c|}{Annealing}            & Mixed                 & $63.41\% \pm 2.44\%$ & $59.51\% \pm 2.21\%$    & $\mathbf{64.91\% \pm 1.35\%}$    & $60.66\% \pm 1.59\%$ & $57.44\% \pm 1.44\%$ & $59.43\% + \pm 1.29\%$ \\ \hline
\multicolumn{1}{|c|}{Abalone}              & Mixed                 & $68.41\% \pm 0.74\%$ & $63.27\% \pm 0.76\%$    & $\mathbf{69.40\% \pm 0.42\%}$    & $63.12\% \pm 0.98\%$ & $62.43\% \pm 0.38\%$ & $62.91\% + \pm 0.37\%$    \\ \hline
\multicolumn{1}{|c|}{Lymphography}         & Discrete              & $\mathbf{66.32\% \pm 1.05\%}$ & $58.15\% \pm 1.21\%$    & $66.11\% \pm 0.94\%$    & $55.73\% \pm 1.24$ & $57.39\% \pm 0.88\%$ & $59.55\% + \pm 0.68\%$    \\ \hline
\multicolumn{1}{|c|}{Adult}                & Discrete              & $\mathbf{72.42\% \pm 0.06\%}$ & $60.36\% \pm 0.08$    & $70.34\% \pm 0.03\%$    & $63.30\% \pm 0.14\%$ & $60.14\% \pm 0.00$ & $60.69\% + \pm 0.01\%$     \\ \hline
\end{tabular}
}
\caption{Accuracy of RIFLE, MICE, KNN-Imputer, Expectation Maximization (EM), and Robust QDA on different discrete, mixed, and continuous datasets. Robust QDA can perform better than other methods when the input features are continuous, and the target variable is discrete. However, RIFLE results in higher accuracy in mixed and discrete settings.}
\label{tab:robust_linear_regression_vs_lda}
\end{table}


\noindent \textbf{Limitations and Future Directions:} The proposed framework for robust regression in the presence of missing values is limited to linear models. While in Appendix~\ref{appendix: qrifle}, we use polynomial kernels to apply non-linear transformations on the data, such an approach can potentially increase the number of missing values in the kernel space generated by the composition of the original features. A future direction is to develop efficient algorithms for non-linear regression models such as multi-layer neural networks, decision tree regressors, gradient boosting regressors, and support vector regression models. In the case of robust classification, the methodology is extendable to any loss beyond quadratic discriminant analysis. Unlike the regression case, a limitation of the proposed method for robust classification is its reliance on the Gaussianity assumption of data distribution (conditioned on each data label). A natural extension is to assume the underlying data distribution follows a mixture of Gaussian distributions. 

\noindent \textbf{Conclusion:} In this paper, we proposed a distributionally robust optimization framework over the distributions with the low-order marginals within the estimated confidence intervals for inference and imputation of datasets in the presence of missing values. We developed algorithms for regression and classification with convergence guarantees. The method's performance is evaluated on synthetic and real datasets with different numbers of samples, dimensions, missing value proportions, and types of missing values. In most experiments, RIFLE consistently outperforms other existing methods.

\acks{This work was supported by the NIH/NSF Grant 1R01LM013315-01, the NSF CAREER Award CCF-2144985, and the AFOSR Young Investigator Program Award FA9550-22-1-0192.} 

\newpage


\bibliography{references}

\begin{thebibliography}{36}
\providecommand{\natexlab}[1]{#1}
\providecommand{\url}[1]{\texttt{#1}}
\expandafter\ifx\csname urlstyle\endcsname\relax
  \providecommand{\doi}[1]{doi: #1}\else
  \providecommand{\doi}{doi: \begingroup \urlstyle{rm}\Url}\fi

\bibitem[Abonazel \& Ibrahim(2018)Abonazel and Ibrahim]{abonazel2018estimation}
Mohamed~Reda Abonazel and Mohamed~Gamal Ibrahim.
\newblock On estimation methods for binary logistic regression model with
  missing values.
\newblock \emph{International Journal of Mathematics and Computational
  Science}, 4\penalty0 (3):\penalty0 79--85, 2018.

\bibitem[Assl{\"a}nder et~al.(2018)Assl{\"a}nder, Cloos, Knoll, Sodickson,
  Hennig, and Lattanzi]{asslander2018low}
Jakob Assl{\"a}nder, Martijn~A Cloos, Florian Knoll, Daniel~K Sodickson,
  J{\"u}rgen Hennig, and Riccardo Lattanzi.
\newblock Low rank alternating direction method of multipliers reconstruction
  for mr fingerprinting.
\newblock \emph{Magnetic resonance in medicine}, 79\penalty0 (1):\penalty0
  83--96, 2018.

\bibitem[Barazandeh \& Razaviyayn(2020)Barazandeh and
  Razaviyayn]{barazandeh2020solving}
Babak Barazandeh and Meisam Razaviyayn.
\newblock Solving non-convex non-differentiable min-max games using proximal
  gradient method.
\newblock \emph{arXiv preprint arXiv:2003.08093}, 2020.

\bibitem[Beaulieu-Jones et~al.(2018)Beaulieu-Jones, Lavage, Snyder, Moore,
  Pendergrass, and Bauer]{beaulieu2018characterizing}
Brett~K Beaulieu-Jones, Daniel~R Lavage, John~W Snyder, Jason~H Moore, Sarah~A
  Pendergrass, and Christopher~R Bauer.
\newblock Characterizing and managing missing structured data in electronic
  health records: data analysis.
\newblock \emph{JMIR medical informatics}, 6\penalty0 (1):\penalty0 e11, 2018.

\bibitem[Bertsimas et~al.(2017)Bertsimas, Pawlowski, and
  Zhuo]{bertsimas2017predictive}
Dimitris Bertsimas, Colin Pawlowski, and Ying~Daisy Zhuo.
\newblock From predictive methods to missing data imputation: an optimization
  approach.
\newblock \emph{The Journal of Machine Learning Research}, 18\penalty0
  (1):\penalty0 7133--7171, 2017.

\bibitem[Bubeck(2014)]{bubeck2014convex}
S{\'e}bastien Bubeck.
\newblock Convex optimization: Algorithms and complexity.
\newblock \emph{arXiv preprint arXiv:1405.4980}, 2014.

\bibitem[Buuren \& Groothuis-Oudshoorn(2010)Buuren and
  Groothuis-Oudshoorn]{buuren2010mice}
S~van Buuren and Karin Groothuis-Oudshoorn.
\newblock mice: Multivariate imputation by chained equations in r.
\newblock \emph{Journal of statistical software}, pp.\  1--68, 2010.

\bibitem[Cai et~al.(2006)Cai, Heydari, and Lin]{cai2006iterated}
Zhipeng Cai, Maysam Heydari, and Guohui Lin.
\newblock Iterated local least squares microarray missing value imputation.
\newblock \emph{Journal of bioinformatics and computational biology},
  4\penalty0 (05):\penalty0 935--957, 2006.

\bibitem[Danskin(2012)]{danskin2012theory}
John~M Danskin.
\newblock \emph{The theory of max-min and its application to weapons allocation
  problems}, volume~5.
\newblock Springer Science \& Business Media, 2012.

\bibitem[Dempster et~al.(1977)Dempster, Laird, and Rubin]{dempster1977maximum}
Arthur~P Dempster, Nan~M Laird, and Donald~B Rubin.
\newblock Maximum likelihood from incomplete data via the em algorithm.
\newblock \emph{Journal of the Royal Statistical Society: Series B
  (Methodological)}, 39\penalty0 (1):\penalty0 1--22, 1977.

\bibitem[Dua \& Graff(2017)Dua and Graff]{Dua:2019}
Dheeru Dua and Casey Graff.
\newblock {UCI} machine learning repository, 2017.
\newblock URL \url{http://archive.ics.uci.edu/ml}.

\bibitem[Fung \& Wrobel(1989)Fung and Wrobel]{fung1989treatment}
Karen~Yuen Fung and Barbara~A Wrobel.
\newblock The treatment of missing values in logistic regression.
\newblock \emph{Biometrical Journal}, 31\penalty0 (1):\penalty0 35--47, 1989.

\bibitem[Gabay \& Mercier(1976)Gabay and Mercier]{gabay1976dual}
Daniel Gabay and Bertrand Mercier.
\newblock A dual algorithm for the solution of nonlinear variational problems
  via finite element approximation.
\newblock \emph{Computers \& mathematics with applications}, 2\penalty0
  (1):\penalty0 17--40, 1976.

\bibitem[Gao \& Kleywegt(2017)Gao and Kleywegt]{gao2017distributionally}
Rui Gao and Anton~J Kleywegt.
\newblock Distributionally robust stochastic optimization with dependence
  structure.
\newblock \emph{arXiv preprint arXiv:1701.04200}, 2017.

\bibitem[Ghahramani \& Jordan(1994)Ghahramani and
  Jordan]{ghahramani1994supervised}
Zoubin Ghahramani and Michael~I Jordan.
\newblock Supervised learning from incomplete data via an em approach.
\newblock In \emph{Advances in neural information processing systems}, pp.\
  120--127, 1994.

\bibitem[Gondara \& Wang(2018)Gondara and Wang]{gondara2018mida}
Lovedeep Gondara and Ke~Wang.
\newblock {MIDA}: Multiple imputation using denoising autoencoders.
\newblock In \emph{Pacific-Asia Conference on Knowledge Discovery and Data
  Mining}, pp.\  260--272. Springer, 2018.

\bibitem[He \& Yuan(2015)He and Yuan]{he2015non}
Bingsheng He and Xiaoming Yuan.
\newblock On non-ergodic convergence rate of douglas--rachford alternating
  direction method of multipliers.
\newblock \emph{Numerische Mathematik}, 130\penalty0 (3):\penalty0 567--577,
  2015.

\bibitem[Honaker et~al.(2011)Honaker, King, and Blackwell]{honaker2011amelia}
James Honaker, Gary King, and Matthew Blackwell.
\newblock Amelia ii: A program for missing data.
\newblock \emph{Journal of statistical software}, 45\penalty0 (7):\penalty0
  1--47, 2011.

\bibitem[Hong et~al.(2016)Hong, Luo, and Razaviyayn]{hong2016convergence}
Mingyi Hong, Zhi-Quan Luo, and Meisam Razaviyayn.
\newblock Convergence analysis of alternating direction method of multipliers
  for a family of nonconvex problems.
\newblock \emph{SIAM Journal on Optimization}, 26\penalty0 (1):\penalty0
  337--364, 2016.

\bibitem[Kim et~al.(2005)Kim, Golub, and Park]{kim2005missing}
Hyunsoo Kim, Gene~H Golub, and Haesun Park.
\newblock Missing value estimation for dna microarray gene expression data:
  local least squares imputation.
\newblock \emph{Bioinformatics}, 21\penalty0 (2):\penalty0 187--198, 2005.

\bibitem[Little \& Rubin(2019)Little and Rubin]{little2019statistical}
Roderick~JA Little and Donald~B Rubin.
\newblock \emph{Statistical analysis with missing data}, volume 793.
\newblock John Wiley \& Sons, 2019.

\bibitem[Nesterov(1983)]{nesterov1983method}
Yurii~E Nesterov.
\newblock A method for solving the convex programming problem with convergence
  rate o (1/k\^{} 2).
\newblock In \emph{Dokl. akad. nauk Sssr}, volume 269, pp.\  543--547, 1983.

\bibitem[Nouiehed et~al.(2019)Nouiehed, Sanjabi, Huang, Lee, and
  Razaviyayn]{nouiehed2019solving}
Maher Nouiehed, Maziar Sanjabi, Tianjian Huang, Jason~D Lee, and Meisam
  Razaviyayn.
\newblock Solving a class of non-convex min-max games using iterative first
  order methods.
\newblock \emph{Advances in Neural Information Processing Systems},
  32:\penalty0 14934--14942, 2019.

\bibitem[Pedersen et~al.(2017)Pedersen, Mikkelsen, Cronin-Fenton, Kristensen,
  Pham, Pedersen, et~al.]{pedersen129785missing}
AB~Pedersen, EM~Mikkelsen, D~Cronin-Fenton, NR~Kristensen, TM~Pham, L~Pedersen,
  et~al.
\newblock Missing data and multiple imputation in clinical epidemiological
  research. clin epidemiol. 2017; 9: 157-66, 2017.

\bibitem[Raghunathan et~al.(2001)Raghunathan, Lepkowski, Van~Hoewyk,
  Solenberger, et~al.]{raghunathan2001multivariate}
Trivellore~E Raghunathan, James~M Lepkowski, John Van~Hoewyk, Peter
  Solenberger, et~al.
\newblock A multivariate technique for multiply imputing missing values using a
  sequence of regression models.
\newblock \emph{Survey methodology}, 27\penalty0 (1):\penalty0 85--96, 2001.

\bibitem[Schafer(1997)]{schafer1997analysis}
Joseph~L Schafer.
\newblock \emph{Analysis of incomplete multivariate data}.
\newblock Chapman and Hall/CRC, 1997.

\bibitem[Shivaswamy et~al.(2006)Shivaswamy, Bhattacharyya, and
  Smola]{shivaswamy2006second}
Pannagadatta~K Shivaswamy, Chiranjib Bhattacharyya, and Alexander~J Smola.
\newblock Second order cone programming approaches for handling missing and
  uncertain data.
\newblock \emph{Journal of Machine Learning Research}, 7\penalty0
  (Jul):\penalty0 1283--1314, 2006.

\bibitem[Sion et~al.(1958)]{sion1958general}
Maurice Sion et~al.
\newblock On general minimax theorems.
\newblock \emph{Pacific Journal of mathematics}, 8\penalty0 (1):\penalty0
  171--176, 1958.

\bibitem[Stekhoven \& B{\"u}hlmann(2012)Stekhoven and
  B{\"u}hlmann]{stekhoven2012missforest}
Daniel~J Stekhoven and Peter B{\"u}hlmann.
\newblock Missforest—non-parametric missing value imputation for mixed-type
  data.
\newblock \emph{Bioinformatics}, 28\penalty0 (1):\penalty0 112--118, 2012.

\bibitem[Sterne et~al.(2009)Sterne, White, Carlin, Spratt, Royston, Kenward,
  Wood, and Carpenter]{sterne2009multiple}
Jonathan~AC Sterne, Ian~R White, John~B Carlin, Michael Spratt, Patrick
  Royston, Michael~G Kenward, Angela~M Wood, and James~R Carpenter.
\newblock Multiple imputation for missing data in epidemiological and clinical
  research: potential and pitfalls.
\newblock \emph{Bmj}, 338:\penalty0 b2393, 2009.

\bibitem[Troyanskaya et~al.(2001)Troyanskaya, Cantor, Sherlock, Brown, Hastie,
  Tibshirani, Botstein, and Altman]{troyanskaya2001missing}
Olga Troyanskaya, Michael Cantor, Gavin Sherlock, Pat Brown, Trevor Hastie,
  Robert Tibshirani, David Botstein, and Russ~B Altman.
\newblock Missing value estimation methods for dna microarrays.
\newblock \emph{Bioinformatics}, 17\penalty0 (6):\penalty0 520--525, 2001.

\bibitem[Xia et~al.(2017)Xia, Zhang, Cai, Li, Pan, Yan, and
  Ning]{xia2017adjusted}
Jing Xia, Shengyu Zhang, Guolong Cai, Li~Li, Qing Pan, Jing Yan, and Gangmin
  Ning.
\newblock Adjusted weight voting algorithm for random forests in handling
  missing values.
\newblock \emph{Pattern Recognition}, 69:\penalty0 52--60, 2017.

\bibitem[Xu et~al.(2009)Xu, Caramanis, and Mannor]{xu2009robustness}
Huan Xu, Constantine Caramanis, and Shie Mannor.
\newblock Robustness and regularization of support vector machines.
\newblock \emph{Journal of machine learning research}, 10\penalty0 (7), 2009.

\bibitem[Yoon et~al.(2018)Yoon, Jordon, and Van Der~Schaar]{yoon2018gain}
Jinsung Yoon, James Jordon, and Mihaela Van Der~Schaar.
\newblock Gain: Missing data imputation using generative adversarial nets.
\newblock \emph{arXiv preprint arXiv:1806.02920}, 2018.

\bibitem[Zhang et~al.(2018)Zhang, Ye, Zhang, Tang, Wen, Fardad, and
  Wang]{zhang2018systematic}
Tianyun Zhang, Shaokai Ye, Kaiqi Zhang, Jian Tang, Wujie Wen, Makan Fardad, and
  Yanzhi Wang.
\newblock A systematic dnn weight pruning framework using alternating direction
  method of multipliers.
\newblock In \emph{Proceedings of the European Conference on Computer Vision
  (ECCV)}, pp.\  184--199, 2018.

\bibitem[Zhang et~al.(2008)Zhang, Song, Wang, and Zhang]{zhang2008sequential}
Xiaobai Zhang, Xiaofeng Song, Huinan Wang, and Huanping Zhang.
\newblock Sequential local least squares imputation estimating missing value of
  microarray data.
\newblock \emph{Computers in biology and medicine}, 38\penalty0 (10):\penalty0
  1112--1120, 2008.

\end{thebibliography}
\bibliographystyle{tmlr}

\newpage
\appendix
\section{A Review of Missing Value Imputation Methods in the Literature}
\label{appendix: related_works}
The fundamental idea behind many data imputation approaches is that the missing values can be predicted based on the available data of other data points and correlated features. One of the most straightforward imputation techniques is to replace missing values by the mean (or median) of that feature calculated from what data is available see~\citet[Chapter~3]{little2019statistical}. However, this na\"ive approach ignores the correlation between features and does not preserve the variance of features. Another class of imputers has been developed based on the least-square methods \citep{raghunathan2001multivariate, kim2005missing, zhang2008sequential, cai2006iterated}. \citet{raghunathan2001multivariate} learns a linear model with multivariate Gaussian noise for the feature with the least missing entries. It repeats the same procedure on the updated data to impute the next feature with the least missing entries until all features are completely imputed. One drawback of this approach is that the error from the imputation of previous features can be propagated to subsequent features. To impute entries of a given feature in a dataset, \citet{kim2005missing} learns several univariate regression models that consider that feature as the response. Then it takes the average of these predictions as the final value of imputation. This approach fails to learn the correlations involving more than two features. 

Many more complex algorithms have been developed for imputation, although many are sensitive to initial assumptions and may not converge. For instance, KNN-Imputer imputes a missing feature of a data point by taking the mean value of the $K$ closest complete data points \citep{troyanskaya2001missing}. MissForest, on the other hand, imputes the missing values of each feature by learning a random forest classifier using other training data features \citep{stekhoven2012missforest}. MissForest does not need to assume that all features are continuous~\citep{honaker2011amelia} or categorical \citep{schafer1997analysis}. However, both KNN-imputer and MissForest do not guarantee statistical or computational convergence for their algorithms. Moreover, when the proportion of missing values is high, both are likely to have a severe drop in performance, as demonstrated in Section~\ref{sec: numerical_results}. The Expectation Maximization (EM) algorithm is another popular approach that learns the parameters of a prior distribution on the data using available values based on the EM algorithm of~\citet{dempster1977maximum}; see also \citet{ghahramani1994supervised} and \citet{honaker2011amelia}. The EM algorithm is also used in Amelia, which fits a jointly normal distribution to the data using EM and the bootstrap technique \citep{honaker2011amelia}. While Amelia demonstrates a superior performance on datasets following a normal distribution, it is highly sensitive to the violation of the normality assumption (as discussed in~\citet{bertsimas2017predictive}). \citet{ghahramani1994supervised} adopt the EM algorithm to learn a joint Bernoulli distribution for the categorical data and a joint Gaussian distribution for the continuous variables independently. While those algorithms can be viewed as inference methods based on low-order estimates of moments, they do not consider \emph{uncertainty} in such low-order moments estimates. By contrast, our framework utilizes \textit{robust optimization} to consider the uncertainty around the estimated moments. Moreover, our optimization procedure for imputation and prediction is guaranteed to converge despite some of the algorithms mentioned above. 

Another popular method for data imputation is multiple imputations by chained equations (MICE). MICE learns a parametric distribution for each feature conditional on the remaining features. For instance, it assumes that the current target variable is a linear function of other features with a zero-mean Gaussian noise. Each feature can have its distinct distribution and parameters (e.g., Poisson regression, logistic regression). Based on the learned parameters of conditional distributions, MICE can generate one or more imputed datasets~\citep{buuren2010mice}.
More recently, several neural network-based imputers have been proposed. GAIN (Generative Adversarial Imputation Network) learns a generative adversarial network based on the available data and then imputes the missing values using the trained generator~\citep{yoon2018gain}. One advantage of GAIN over other existing GAN imputers is that it does not need a complete dataset during the training phase. MIDA (Multiple Imputation using Denoising Autoencoders) is an auto-encoder-based approach that trains a denoising auto-encoder on the available data considering the missing entries as noise. Similar to other neural network-based methods, these algorithms suffer from their black-box nature. They are challenging to interpret/explain, making them unpopular in mission-critical healthcare approaches. In addition, no statistical or computational guarantees are provided for these algorithms.

\citet{bertsimas2017predictive} formulates the imputation task as a constrained optimization problem where the constraints are determined by the underlying classification model such as KNN ($k$-nearest neighbors), SVM (Support Vector Machine), and Decision Trees. Their general framework is non-convex, and the authors relax the optimization for each choice of the cost function using first-order methods. The block coordinate descent algorithm then optimizes the relaxed problem. They show the convergence and accuracy of their proposed algorithm numerically, while a theoretical analysis that guarantees the algorithm's convergence is absent in their work.

\section{Estimating Confidence Intervals of Low-order Moments}
\label{appendix: estimating_low_orders}
In this section, we explain the methodology of estimating confidence intervals for $\mathbb{E}[\bz_i]$ and $\mathbb{E}[\bz_i \bz_j]$. Let $\bX^{n\times d}$ and $\by$ be the data matrix and target variables for $n$ given data points respectively whose entries are in $\Tilde{\mathbb{R}} = \mathbb{R} \cup \{*\}$, where $*$ symbol represents a missing entry. Moreover, assume that $\ba_i$ represents the $i$-th column (feature) of matrix $\bX$. We define: 
\begin{equation*}
    \Tilde{\ba}_i (k) = \begin{cases}
    \ba_i (k) & \text{if} \: \ba_i (k) \neq \text{*} \\
    0 & \text{if} \: \ba_i (k) = \text{*}
    \end{cases}
\end{equation*}
Thus, $\Tilde{\ba}$ is obtained by replacing the missing values with $0$. We estimate the confidence intervals for the mean and covariance of features using multiple bootstrap samples on the available data. Let $\bC_{0} [i][j]$ and $\boldsymbol{\Delta}_{0} [i][j]$ be the center and the radius of the confidence interval for $\bC[i][j]$, respectively. We compute the center of the confidence interval for $\bC[i][j]$ as follows:
\begin{equation} \label{eq: point_estimator}
    \bC_{0} [i][j] = \frac{1}{m_{ij}} \Tilde{\ba}_i^T \Tilde{\ba}_j
\end{equation}
where $m_{i} = |\{k: \ba_i(k) \neq *\}|$ and $m_{ij} = |\{k: \ba_i(k) \neq *, \ba_j(k) \neq *\}|$. This estimator is obtained from the rows where both features are available.  More precisely, let $\bM$ be the mask of the input data matrix $\bX$ defined as:
\begin{equation*}
    \bM_{ij} = \begin{cases}
        0,  & \textrm{if } \bX_{ij} \textrm{ is missing,} \\
        1,  & \textrm{otherwise}.
    \end{cases}
\end{equation*}
Assume that $m_{ij} = (\bM^T \bM)_{ij}$, which is the number of rows in the dataset where both features $i$ and $j$ are available. To estimate the confidence intervals for $\bC_{ij}$, we use Algorithm~\ref{alg: estimating_confidence_interval_length}. First, we select multiple ($K$) samples of size $N = m_{ij}$ from the rows where both features are available. Each one of these samples with size $m_{ij}$ is obtained by applying a bootstrap sampler (sampling with replacement) on the $m_{ij}$ rows where both features are available. Then, we compute the second-order moment of two features for each sample. 

To find the radius of confidence intervals for each given pair $(i, j)$ of features, we choose $k$ different bootstrap samples with length $n$ on the rows where both features $i$ and $j$ are available. Then, we compute $\bC_0[i][j]$ of two features in each bootstrap sample. The standard deviation of these estimations determines the radius of the corresponding confidence interval. Algorithm~\ref{alg: estimating_confidence_interval_length} summarizes the required steps for computing the confidence interval radius for the $ij$-th entry of covariance matrix $\boldsymbol{\Delta}$. Note that the confidence intervals for $\boldsymbol{\mu}$ can be computed similarly.
\begin{algorithm}
	\caption{Estimating Confidence Interval Length $\boldsymbol{\Delta}_{ij}$ for Feature $i$ and Feature $j$.} 
	\label{alg: estimating_confidence_interval_length}
	\begin{algorithmic}[1]
	
	    \State \textbf{Input}: $K:$ Number of bootstrap estimations
	    
	    \FOR {$t = 1, \ldots, K$}
	    \State Pick $n$ samples with replacement from the rows where both $i$-th and $j$-th are available.
        \State Let $(\hat{X}_{i1}, \hat{X}_{j1}), \dots, (\hat{X}_{in}, \hat{X}_{jn})$ be the $i$-th and $j$-th features of the selected samples	    
	    \State $C_t = \frac{1}{n} \sum_{r=1}^{n} \hat{X}_{ir} \hat{X}_{jr}$ 
	    
	    \ENDFOR
	    
	    \State $\boldsymbol{\Delta}_{ij} = \textrm{std}(C_1, C_2, \dots, C_K)$
	    
	\end{algorithmic}
\end{algorithm}
Having $\bC_{0}$ and $\boldsymbol{\Delta}$, the confidence interval for the matrix $\bC$ is computed as follows:
\begin{gather*}
    \bC_{\min} = \bC_{0} - c \boldsymbol{\Delta} \\
    \bC_{\max} = \bC_{0} + c \boldsymbol{\Delta},
\end{gather*}

Computing $\bb_{\min}$ and $\bb_{\max}$ can be done in the same manner. 
The hyper-parameter $c$ is defined to control the robustness of the model by tuning the length of confidence intervals. A larger $c$ corresponds to bigger confidence intervals and, thus, a more robust estimator. On the other hand, large values for $c$ lead to very large confidence intervals that can adversely affect the performance of the trained model. 

\begin{remark}
Since the computation of confidence intervals for different entries of the covariance matrix are independent of each other, they can be computed in parallel. In particular, if $\gamma$ cores are available, $\ceil{d/\gamma}$ features (columns of the covariance matrix) can be assigned to each one of the available cores.
\end{remark}

\section{Solving Robust Ridge Regression with the Optimal Convergence Rate}
\label{appendix: Nesterov_Acceleration}
The convergence rate of Algorithm~\ref{alg: Min_Max_Regression} to the optimal solution of Problem~\eqref{robust_linear_regression} can be slow in practice since the algorithm requires to do a matrix inversion for updating $\btheta$ and applying the box constraint to $\bC$ and $\bb$ at each iteration. While we update the minimization problem in closed-form with respect to $\btheta$, we can speed up the convergence rate of the maximization problem by applying Nesterov's acceleration method to function $g(\bb, \bC)$ in~\eqref{robust_linear_regression2}. Since function $g$ is the minimum of convex functions, its gradient with respect to $\bC$ and $\bb$ can be computed using Danskin's theorem. Algorithm~\ref{alg: Min_Max_Regression_Nesterov} describes the steps to optimize Problem~\eqref{robust_linear_regression2} using Nesterov's acceleration method.

\begin{algorithm}
	\caption{Applying the Nesterov's Acceleration Method to Robust Linear Regression} 
	\label{alg: Min_Max_Regression_Nesterov}
	\begin{algorithmic}[1]
	    \State $\bC_0, \bb_0, \boldsymbol{\Delta}, \boldsymbol{\delta}, T$
        \State \textbf{Initialize}: $\bC_1 = \bC_0, \bb_1 = \bb_0, \gamma_0 = 0, \gamma_1 = 1$. 
        
        \FOR {$i = 1, \ldots, T$}
        \State $\gamma_{i+1} = \frac{1 + \sqrt{1 + 4\gamma_i^2}}{2}$
        \State $Y_{\bC_i} = \bC_{i} + \frac{\gamma_i - 1}{\gamma_{i+1}} (\bC_{i} - \bC_{i-1})$
        \State $\bC_{i+1} = \Pi_{\Delta+} \Big(Y_{\bC_i} + \frac{1}{L} \btheta \btheta^T \Big)$
        \State $Y_{\bb_i} = \bb_{i} + \frac{\gamma_i - 1}{\gamma_{i+1}} (\bb_{i} - \bb_{i-1})$
        \State $\bb_{i+1} = \Pi_{\boldsymbol{\delta}} (Y_{\bb_i} - \frac{2\btheta}{L})$
        
        \State Set $\btheta = (\bC_{i+1} + \lambda I)^{-1} \bb_{i+1}$
        \ENDFOR
	\end{algorithmic}
\end{algorithm}

\begin{theorem}\label{thm: Nesterov_Convergence}
Let $(\tilde{\btheta}, \tilde{\bC}, \tilde{\bb})$ be the optimal solution of~\eqref{robust_linear_regression} and $D = \|\bC_0 - \tilde{\bC}\|_F^2 + \| \bb_0 - \tilde{\bb}\|_2^2$. Assume that for any given $\bb$ and $\bC$, within the uncertainty sets described in~\eqref{robust_linear_regression},  $\|\btheta^{*}(\bb, \bC)\| \leq \tau$. Then, Algorithm~\ref{alg: Min_Max_Regression} computes an $\epsilon$-optimal solution of the objective function in ${\cal O}\Big(\sqrt{\frac{D(\tau+1)^2}{\lambda \epsilon}}\Big)$ iterations.
\end{theorem}
\begin{proof}
The proof is relegated to Appendix~\ref{appendix: proofs}.
\end{proof}

\section{Solving the Dual Problem of the Robust Ridge Linear Regression via ADMM}
\label{appendix: admm}
The Alternating Direction Method of Multipliers (ADMM) is a popular algorithm for efficiently solving linearly constrained optimization problems~\citep{gabay1976dual, hong2016convergence}. It has been extensively applied to large-scale optimization problems in machine learning and statistical inference in recent years~\citep{asslander2018low, zhang2018systematic}. Consider the following optimization problem consisting of two blocks of variables $\bx$ and $\by$ that are linearly coupled:

\begin{equation}\label{General_ADMM_Optimization_Problem}
\arraycolsep=1.4pt\def\arraystretch{1.3}
\begin{array}{lll}
    \displaystyle{\min_{\bw, \bz}} \quad &  \displaystyle{f(\bw) + g(\bz)} \\ 
    \st &  \bA\bw + \bB\bz = \bc, 
\end{array}
\end{equation}

The augmented Lagrangian of the above problem can be written as:

\begin{equation}\label{General_ADMM_Optimization_Problem_Augmented}
\arraycolsep=1.4pt\def\arraystretch{1.3}
\begin{array}{lll}
    \displaystyle{\min_{\bw, \bz}} \quad &  \displaystyle{f(\bw) + g(\bz) + \langle \bA \bw + \bB \bz - \bc, \boldsymbol{\lambda} \rangle + \frac{\rho}{2}\| \bA \bw + \bB \bz - \bc\|^2} \\ 
\end{array}
\end{equation}

ADMM schema updates the primal and dual variables iteratively as presented in Algorithm~\ref{alg: General_ADMM}. 
\begin{algorithm}
	\caption{General ADMM Algorithm} 
	\label{alg: General_ADMM}
	\begin{algorithmic}[1]
        \FOR {$t = 1, \ldots, T$}
        
        \State $\bw^{t+1} = \argmin_{\bw} f(\bw) + \langle \bA\bw + \bB\bz^t -\bc, \boldsymbol{\lambda} \rangle + \frac{\rho}{2} \| \bA\bw + \bB\bz^t - \bc\|^2$
        
        \State $\bz^{t+1} = \argmin_{\bz} f(\bw^{t+1}) + \langle \bA\bw^{t+1} + \bB\bz - \bc, \boldsymbol{\lambda} \rangle + \frac{\rho}{2} \| \bA\bw^{t+1} + \bB\bz - \bc\|^2$
        
        \State $\boldsymbol{\lambda}^{t+1} = \boldsymbol{\lambda}^{t} + \rho (\bA\bw^{t+1} + \bB\bz^{t+1} - \bc)$

        \ENDFOR
	\end{algorithmic}
\end{algorithm}

As we mentioned earlier, simultaneous projection of $\bC$ to the set of positive semi-definite matrices and the box constraint $\bC_{\min} \leq \bC \leq \bC_{\max}$ in Algorithm~\ref{alg: Min_Max_Regression} and Algorithm~\ref{alg: Min_Max_Regression_Nesterov} is computationally expensive. Moreover, careful step-size tuning is necessary to avoid inconsistency and guarantee convergence in that algorithm.

An alternative approach for solving Problem~\eqref{robust_linear_regression} that avoids removing the PSD constraint in the implementation of Algorithm~\ref{alg: Min_Max_Regression} and Algorithm~\ref{alg: Min_Max_Regression_Nesterov} is to solve the dual of the inner maximization problem.  
Since the maximization problem is concave with respect to $\bC$ and $\bb$, and the relative interior of the feasible set of constraints is non-empty, the duality gap is zero. Hence, instead of solving the inner maximization problem, we can solve its dual which is a minimization problem. Theorem~\ref{thm: Dual} describes the dual problem of the inner maximization problem in ~\eqref{robust_linear_regression}. Thus, Problem~\eqref{robust_linear_regression} can be alternatively formulated as a minimization problem rather than a min-max problem. We can solve such a constrained minimization problem efficiently via the ADMM algorithm. As we will show, the ADMM algorithm applied to the dual problem does not need tuning of step-size or applying simultaneous projections to the box constraints and positive semi-definite (PSD) constraints. 
\begin{theorem}\label{thm: Dual} (Dual Problem)
The inner maximization problem described in~\eqref{robust_linear_regression} can be equivalently formulated as:
\begin{equation*}\label{eq: robust_linear_regression_dual1}
\arraycolsep=1.4pt\def\arraystretch{1.3}
\begin{array}{lll}
    \displaystyle{\min_{\bA, \bB, \bd, \be, \bH}} \quad &  \displaystyle{- \langle \bb_{\min}, \bd \rangle + \langle \bb_{\max}, \be \rangle -  \langle \bC_{\min}, \bA \rangle + \langle \bC_{\max}, \bB \rangle + \lambda \| \btheta\|^2}\\
    \st &  - \btheta \btheta^T - \bA + \bB - \bH = 0, \\
        & 2\btheta - \bd + \be = 0, \\
        & \bA, \bB, \bd, \be \geq 0, \\
        & \bH \succeq 0.
\end{array}
\end{equation*}
Therefore, Problem~\eqref{robust_linear_regression} can be alternatively written as:
\begin{equation}\label{eq: robust_linear_regression_dual}
\arraycolsep=1.4pt\def\arraystretch{1.3}
\begin{array}{lll}
    \displaystyle{\min_{\btheta, \bA, \bB, \bd, \be, \bH}} \quad &  \displaystyle{- \langle \bb_{\min}, \bd \rangle + \langle \bb_{\max}, \be \rangle -  \langle \bC_{\min}, \bA \rangle + \langle \bC_{\max}, \bB \rangle + \lambda \| \btheta\|^2}\\
    \st &  - \btheta \btheta^T - \bA + \bB - \bH = 0, \\
        & 2\btheta - \bd + \be = 0, \\
        & \bA, \bB, \bd, \be \geq 0, \\
        & \bH \succeq 0.
\end{array}
\end{equation}
\end{theorem}
\begin{proof}
The proof is relegated to Appendix~\ref{appendix: proofs}.
\end{proof}
To apply the ADMM method to the dual problem, we require to divide the optimization variables into two blocks as in~\eqref{General_ADMM_Optimization_Problem} such that both sub-problems in Algorithm~\ref{alg: General_ADMM} can be efficiently solved. To do so, first, we introduce the auxiliary variables $\bd^{'}, \be^{'}, \btheta^{'}, \bA^{'}$ and $\bB^{'}$ to the dual problem. Also, let $\bG = \bH + \btheta^{'} \btheta^{'T}$. Therefore, Problem~\eqref{eq: robust_linear_regression_dual} is equivalent to:
\begin{equation}\label{robust_linear_regression_dual_admm_auxiliary_variables}
\arraycolsep=1.4pt\def\arraystretch{1.3}
\begin{array}{lll}
    \displaystyle{\min_{\btheta, \bA, \bB, \bd, \be, \bH}} \quad &  \displaystyle{- \langle \bb_{\min}, \bd \rangle + \langle \bb_{\max}, \be \rangle -  \langle \bC_{\min}, \bA \rangle + \langle \bC_{\max}, \bB \rangle + \lambda \| \btheta\|^2}\\
    \st &  \bB - \bA = \bG, \\
        & 2\btheta - \bd + \be = 0, \\
        & \bA = \bA^{'}, \bB = \bB^{'}, \\
        & \bd = \bd^{'}, \be = \be^{'}, \btheta = \btheta^{'}, \\
        & \bA^{'}, \bB^{'}, \bd^{'}, \be^{'} \geq 0, \\
        & \bG \succeq \btheta^{'} \btheta^{'T}.
\end{array}
\end{equation}
Since handling both constraints on $\btheta$ in Problem~\eqref{eq: robust_linear_regression_dual} is difficult, we interchange $\btheta$ with $\btheta^{'}$ in the first constraint. Moreover, the non-negativity constraints on $\bA, \bB, \bd$ and $\be$ are exchanged with non-negativity constraints on $\bA^{'}, \bB^{'}, \bd^{'}$ and $\be^{'}$. 
For the simplicity of presentation, assume that $\bc_1^t = \bb_{\min} - \bmu_d^t + \rho \bd^{'t} + \boldeta^t$, $\bc_2^t = -\bb_{\max} - \bmu_e^t + \rho \be^{'t} - \boldeta^t$, $\bc_3^t = -\bmu_{\btheta} +  \rho \btheta^{'t} - 2 \boldeta^t$, $\bD_1^t = \rho \bA^{'t} - \rho \bG^t + \bgamma^t - \bM_A^t + \bC_{\min}$, and $\bD_2^t = \rho \bB^{'t} + \rho \bG^t - \bgamma^t - \bM_B^t - \bC_{\max}$. Algorithm~\ref{alg: solving_dual_admm} describes the ADMM algorithm applied to Problem~\eqref{robust_linear_regression_dual_admm_auxiliary_variables}.
\begin{corollary}\label{cor: ADMM} If the feasible set of Problem~\eqref{robust_linear_regression} has non-empty interior, then Algorithm~\ref{alg: solving_dual_admm} converges to an $\epsilon$-optimal solution of Problem~\eqref{robust_linear_regression_dual_admm_auxiliary_variables} in $\mathcal{O}(\frac{1}{\epsilon})$ iterations.
\end{corollary}
\begin{proof}
Since the inner maximization problem, in~\eqref{robust_linear_regression} is convex, and its feasible interior set is not empty, the duality gap is zero by Slater's condition. Thus, according to Theorem~$6.1$ in~\citet{he2015non}, Algorithm~\ref{alg: solving_dual_admm} converges to an optimal solution of the primal-dual problem with a linear rate. Moreover, the sequence of constraint residuals converges to zero with a linear rate as well. 
\end{proof}
\begin{remark}
The optimal solution obtained from the ADMM algorithm can be different from the one given by Algorithm~\ref{alg: Min_Max_Regression} because we remove the positive semi-definite constraint on $\bC$ in the latter. We investigate the difference between solutions of two algorithms in three cases: First, we generate a small positive semi-definite matrix $\bC^*$ and the matrix of confidence intervals ($\boldsymbol{\Delta}$) as follows:
\begin{equation*}
    \bC^* = \begin{bmatrix}
    97 & 40 & 92 \\
    40 & 17 & 38 \\
    92 & 38 & 88
  \end{bmatrix}, \quad \boldsymbol{\Delta} = \begin{bmatrix}
    0.2 & 0.3 & 0.2 \\
    0.3 & 0.1 & 0.2 \\
    0.1 & 0.3 & 0.1
  \end{bmatrix}. 
\end{equation*}
Moreover, let $\bb^*$ and $\boldsymbol{\delta}$ are generated as follows:
\begin{equation*}
    \bb^* = \begin{bmatrix}
    6.65 \\
    8.97 \\
    5.40
  \end{bmatrix}, \quad \boldsymbol{\delta} = \begin{bmatrix}
    0.1 \\
    0.2 \\
    0.2
  \end{bmatrix}. 
\end{equation*}
Initializing both algorithms with a random matrix within $\bC_{\min} = \bC^* - \boldsymbol{\Delta}$ and $\bC_{\max} = \bC^* + \boldsymbol{\Delta}$, and a random vector within $\bb_{\min} = \bb^* - \boldsymbol{\delta}$ and $\bb_{\max} = \bb^* + \boldsymbol{\delta}$, ADMM algorithm returns a different solution from Algorithm~\ref{alg: Min_Max_Regression}. 
Besides, the difference in the performance of algorithms during the test phase can be observed in the experiments on synthetic datasets depicted in Figure~\ref{fig: consistency_different_methods} as well, especially when the number of samples is smaller. 
\end{remark}
\begin{algorithm}[H]
	\caption{Applying ADMM to the Dual Reformulation of Robust Linear Regression} 
	\label{alg: solving_dual_admm}
	\begin{algorithmic}[1]
	    \State \textbf{Given:} $\bb_{\min}, \bb_{\max}, \bC_{\min}, \bC_{\max}, \lambda, \rho$
        \State \textbf{Initialize}: $\bC_1 = \bC_0, \bb_1 = \bb_0, \gamma_0 = 0, \gamma_1 = 1$. 
        
        \FOR {$t = 0, \ldots, T$}
        \State $\btheta^{t+1} = \frac{1}{6\lambda + 7\rho} (2\bc_1^t - 2\bc_2^t - 3\bc_3^t)$
        
        \State $\bd^{t+1} = \frac{1}{6\lambda + 7\rho} (\frac{6\rho + 4\lambda}{\rho} \bc_1^t + \frac{4\rho + 4\lambda}{\rho} \bc_2^t + 2\bc_3^t)$
        
        \State $\be^{t+1} = \frac{2}{6\lambda + 7\rho} (\frac{\rho + 2\lambda}{\rho} \bc_1^t + \frac{3\rho + 2\lambda}{\rho} \bc_2^t - \bc_3^t)$
        
        \State $\bA^{'t+1} = \max(\bA^{t} + \frac{\bM_A^{t}}{\rho}, 0)$
        
        \State $\bB^{' t+1} = \max(\bB^{t} + \frac{\bM_B^{t}}{\rho}, 0)$

        \State $\bG^{t+1} = [\bB^{t} - \bA^{t} + \frac{\bgamma ^{t}}{\rho} - \btheta^{'t} \btheta^{'t T} ]_+ + \btheta^{'t} \btheta^{'t T}$

        \State $\bd^{' t+1} = \max(\bd^{t} + \frac{\bmu_d^t}{\rho}, 0)$
        
        \State $\be^{' t+1} = \max(\be^{t} + \frac{\bmu_e^{t}}{\rho}, 0)$
        
        \State $\btheta^{' t+1} = \argmin_{\btheta^{'}} \quad \| \btheta^{t+1} -  \btheta^{'}\|^2 + \langle \bmu_{\btheta}^t, \btheta^{t+1} - \btheta^{'} \rangle \quad \st \: \bG^{t+1} \succeq \btheta^{'T} \btheta^{'}$ 
        
        \State $\bA^{t+1} = \frac{1}{3\rho} (2\bD_1^t + \bD_2^t)$
        
        \State $\bB^{t+1} = \frac{1}{3\rho} (\bD_1^t + 2\bD_2^t)$
        
        \State $\bM_A^{t+1} = \bM_A^t + \rho (\bA^{t+1} - \bA^{' t+1})$
        \State $\bM_B^{t+1} = \bM_B^t + \rho (\bB^{t+1} - \bB^{' t+1})$
        \State $\bmu_d^{t+1} = \bmu_d^t + \rho (\bd^{t+1} - \bd^{' t+1})$
        \State $\bmu_e^{t+1} = \bmu_e^t + \rho (\be^{t+1} - \be^{' t+1})$
        \State $\bmu_{\btheta}^{t+1} = \bmu_{\theta}^t + \rho ({\btheta}^{t+1} - {\btheta}^{' t+1})$
        
        \State $\boldeta^{t+1} = \boldeta^t + \rho (2\btheta^{t+1} - \bd^{t+1} + \be^{t+1})$
        
        \State  $\bgamma^{t+1} = \bgamma^t + \rho (\bB^{t+1} - \bA^{t+1} - \bG^{t+1})$
        \ENDFOR
	\end{algorithmic}
\end{algorithm}
Now, we show how to apply ADMM schema to Problem~\eqref{robust_linear_regression_dual_admm_auxiliary_variables} to obtain Algorithm~\ref{alg: solving_dual_admm}. As we discussed earlier, we consider two separate blocks of variables $\bw = (\btheta, \bd, \be, \bG, \bB^{'}, \bA^{'})$  and $\bz = (\bd^{'}, \be^{'}, \btheta^{'}, \bB, \bA)$. Assigning $\boldsymbol{\Gamma}, \boldeta, \bM_{A}, \bM_{B}, \bmu_{d}, \bmu_{e},$ and $\bmu_{\theta}$ to the constraints of Problem~\eqref{robust_linear_regression_dual_admm_auxiliary_variables} in order, we can write the corresponding augmented Lagrangian function as:
\begin{equation}\label{robust_linear_regression_dual_augmented_lagrangian}
\arraycolsep=1.4pt\def\arraystretch{1.3}
\begin{array}{lll}
    \displaystyle{\min_{\btheta, \btheta^{'}, \bA, \bA^{'}, \bB, \bB^{'}, \bd, \bd^{'}, \be, \be^{'}, \bG}} \quad &  \displaystyle{- \langle \bb_{\min}, \bd \rangle + \langle \bb_{\max}, \be \rangle -  \langle \bC_{\min}, \bA \rangle + \langle \bC_{\max}, \bB \rangle + \lambda \| \btheta\|^2}\\
    & + \langle \bA - \bA^{'}, \bM_A \rangle + \frac{\rho}{2} \| \bA - \bA^{'}\|_F^2 \\
    & + \langle \bB - \bB^{'}, \bM_B \rangle + \frac{\rho}{2} \| \bB - \bB^{'}\|_F^2 \\
    & +  \langle \bd - \bd^{'}, \bmu_d \rangle + \frac{\rho}{2} \| \bd - \bd^{'}\|^2 \\
    & + \langle \be - \be^{'}, \bmu_e \rangle + \frac{\rho}{2} \| \be - \be^{'}\|^2 \\
    & + \langle \btheta - \btheta^{'}, \bmu_{\theta} \rangle + \frac{\rho}{2} \| \btheta - \btheta^{'}\|^2 \\
    & + \langle 2\btheta - \bd + \be, \boldeta \rangle + \frac{\rho}{2} \| 2\btheta - \bd + \be\|^2 \\
    & + \langle \bB - \bA - \bG, \boldsymbol{\Gamma} \rangle + \frac{\rho}{2} \| \bB - \bA - \bG\|_F^2 \\ 
    \st &  \bA^{'}, \bB^{'}, \bd^{'}, \be^{'} \geq 0, \\
        & \mathbf{G} \succeq \btheta^{'} \btheta^{'T},
\end{array}
\end{equation}
At each iteration of the ADMM algorithm, the parameters of one block are fixed, and the optimization problem is solved with respect to the parameters of the other block. For the simplicity of presentation, let $\bc_1^t = \rho \btheta^{'t} - \bmu_{\btheta}^t - 2\boldeta^t, \bc_2^t = \rho \bd^{'t} - \bmu_{\bd}^t - \bb_{\min} + \boldeta^t, \bc_3^t = \rho \be^{'t} - \bmu_{e}^t + \bb_{\max} - \boldeta^t, \bD_1^t = \rho \bA^{'t} - \rho \bG^t + \boldsymbol{\Gamma}^t - \bM_A^t + \bC_{\min},$ and $\bD_2^t = \rho \bB^{'t} + \rho \bG^t - \boldsymbol{\Gamma}^t - \bM_B^t - \bC_{\max}$. 

We have two non-trivial problems containing positive semi-definite constraints. The sub-problem with respect to $\bG$ can be written as:
\begin{equation}\label{sub_projection_G}
\arraycolsep=1.4pt\def\arraystretch{1.3}
\begin{array}{lll}
    \displaystyle{\min_{\bG}} \quad &  \displaystyle{ \langle \bB^t - \bA^t - \bG, \boldsymbol{\Gamma}^t \rangle + \frac{\rho}{2} \| \bB^t - \bA^t - \bG\|_F^2
    }\\
    \st & \bG \succeq \btheta^{'t} \btheta^{'tT},
\end{array}
\end{equation}
By completing the square, and changing the variable $\bG^{'} = \bG - \btheta^{'t} \btheta^{'tT}$, equivalently we require to solve the following problem:
\begin{equation}\label{sub_problem_G_prime}
\arraycolsep=1.4pt\def\arraystretch{1.3}
\begin{array}{lll}
    \displaystyle{\min_{\bG^{'}}} \quad &  \displaystyle{\frac{\rho}{2} \| \bG^{'} - (\bB^t - \bA^t - \btheta^{'t} \btheta^{'tT} + \frac{\boldsymbol{\Gamma}^t}{\rho})\|_F^2
    }\\
    \st & \boldsymbol{G}^{'} \succeq 0,
\end{array}
\end{equation}
Thus, $\bG^{'*} = [\bB^{t} - \bA^{t} + \frac{\boldsymbol{\Gamma} ^{t}}{\rho} - \btheta^{'t} \btheta^{'t T} ]_+$, where $[\bA]_+$ is the projection to the set of PSD matrices, which can be done by setting the negative eigenvalues of $\bA$ in its singular value decomposition to zero.

The other non-trivial sub-problem in Algorithm~\eqref{alg: solving_dual_admm} is the minimization with respect to $\btheta^{'}$ (Line 10). By completing the square, it can be equivalently formulated as: 
\begin{equation}\label{projection_psd}
\arraycolsep=1.4pt\def\arraystretch{1.3}
\begin{array}{lll}
    \displaystyle{\min_{\btheta^{'}}} \quad &  \displaystyle{\| \btheta^{'} - (\btheta^{t+1} + \frac{\bmu_{\btheta}^{t}}{\rho}) \|_2^2}\\
    \st &  \bG^{t+1} \succeq \btheta^{'} \btheta^{'T},
\end{array}
\end{equation}
Let $\bG = \boldsymbol{U} \boldsymbol{\Lambda} \boldsymbol{U}^T$ be the singular value decomposition of the matrix $\bG$ where $\boldsymbol{\Lambda}$ is a diagonal matrix containing the eigenvalues of the matrix G. Set $\boldsymbol{\alpha} = \btheta^{t+1} + \frac{\bmu_{\btheta}^{t}}{2}$. Since $\boldsymbol{U}^T \boldsymbol{U} = \boldsymbol{I}$, we have: 
\begin{gather*}
    \| \boldsymbol{U}^T \btheta - \boldsymbol{U}^T \boldsymbol{\alpha} \|^2 = \| \btheta - \boldsymbol{\alpha} \|_2^2
\end{gather*}
Set $\boldsymbol{\beta} = \boldsymbol{U}^T \btheta^{'}$, then Problem~\eqref{projection_psd} can be reformulated as:
\begin{equation}\label{projection_psd_reformulation}
\arraycolsep=1.4pt\def\arraystretch{1.3}
\begin{array}{lll}
    \displaystyle{\min_{\boldsymbol{\beta}}} \quad &  \displaystyle{\| \boldsymbol{\beta} - \boldsymbol{U}^T  \boldsymbol{\alpha}\|_2^2}\\
    \st &  \boldsymbol{\beta} \boldsymbol{\beta}^T \preceq \boldsymbol{\Lambda}.
\end{array}
\end{equation}
Note that the constraint of the above optimization problem is equivalent to the following:
\begin{gather*}
\boldsymbol{\beta} \boldsymbol{\beta}^T \preceq \boldsymbol{\Lambda} \Leftrightarrow \begin{bmatrix}
1 & \boldsymbol{\beta}^T \\
\boldsymbol{\beta} & \boldsymbol{\Lambda}
\end{bmatrix} \succeq 0 \Leftrightarrow \boldsymbol{\beta}^T \boldsymbol{\Lambda}^{-1} \boldsymbol{\beta} \leq 1 \Leftrightarrow \sum_{i=1}^{n} \frac{\boldsymbol{\beta}_{i}^2}{\lambda_i} \leq 1,
\end{gather*}
where $\lambda_i = \boldsymbol{\Lambda}_{ii}$   
Since the block matrix is symmetric, using Schur Complement, it is positive semi-definite if and only if $\boldsymbol{\Lambda}$ is positive semi-definite and $ 1 - \boldsymbol{\beta}^t \boldsymbol{\Lambda}^{-1} \boldsymbol{\beta} \geq 0$ (the third inequality above). 

Set $\boldsymbol{\gamma} = \boldsymbol{U}^T \boldsymbol{\alpha}$, then we can write Problem~\eqref{projection_psd_reformulation} as:
\begin{equation}\label{projection_psd_final_problem}
\arraycolsep=1.4pt\def\arraystretch{1.3}
\begin{array}{lll}
    \displaystyle{\min_{\boldsymbol{\beta}}} \quad &  \displaystyle{\| \boldsymbol{\beta} - \boldsymbol{\gamma} \|_2^2}\\
    \st &  \sum_{i=1}^{n} \frac{\boldsymbol{\beta}_{i}^2}{\lambda_i} \leq 1,
\end{array}
\end{equation}
It can be easily shown that the optimal solution has the form $\boldsymbol{\beta}^*_i = \frac{\boldsymbol{\gamma}_i}{1 + \frac{\bmu^{*}}{\lambda_i}}$, where $\bmu^{*}$ is the optimal Lagrangian multiplier corresponding to the constraint of Problem~\eqref{projection_psd_final_problem}. The optimal Lagrangian multiplier can be obtained by the bisection algorithm similar to Algorithm~\ref{alg: bisection}. Having $\boldsymbol{\beta}^{*}$, the optimal $\btheta$ can be computed by solving the linear equation $\boldsymbol{U}^T \btheta^* = \boldsymbol{\beta}^*$. 

\section{Quadratic RIFLE: Using Kernels to Go Beyond Linearity}
\label{appendix: qrifle}
A natural extension of RILFE to non-linear models is to transform the original data via multiple Kernels and then apply RIFLE to the obtained data. To this end, we applied Polynomial Kernels to the original data that considers the polynomial transformations of features and their interactions. A drawback of this approach is that if the original data contains $d$ features, and the order of the polynomial Kernel is $t$, the number of features in the transformed data will be $\mathcal{O} (d^t)$ that increases the runtime of the prediction/imputation drastically. Thus, we only consider $t = 2$, which leads to a dataset containing the interaction of different features of the original data. We call the RIFLE algorithm applied on the data transformed by Quadratic Kernel Quadratic RIFLE (QRIFLE). Table~\ref{tab:imputation_uci} demonstrates the performance of QRIFLE alongside RIFLE and other state-of-the-art approaches. Moreover, we applied QRIFLE on a regression task where the correlation between predictors and the target variable is quadratic (Figure~\ref{fig: qrifle}). We can observe that QRIFLE works better than RIFLE when the percentage of missing values is not high.

\subsection{Performance of RIFLE and QRIFLE on Synthetic Non-linear Data}
A natural question is how RIFLE performs when the underlying model is non-linear. To evaluate RIFLE and other methods, we have generated jointly normal data similar to the experiment in Figure~\ref{fig: consistency}. Here, we have $5000$ data points, and the data dimension is $d = 5$. The target variable has the following quadratic relationship with the input features: 
\begin{equation*}
y = x_1^2 + 3x_3^2 - 6x_5^2 - 0.9 x_1 x_4 + 9 x_2 x_3 + 3.2 x_4 x_5 - 1.7 x_2 x_5 - 5 x_1 - 2x_3 + 7x_4 + 4.6    
\end{equation*}

\vspace{-3mm}
\begin{figure}[ht]
\centering
\includegraphics[width=0.99\columnwidth]{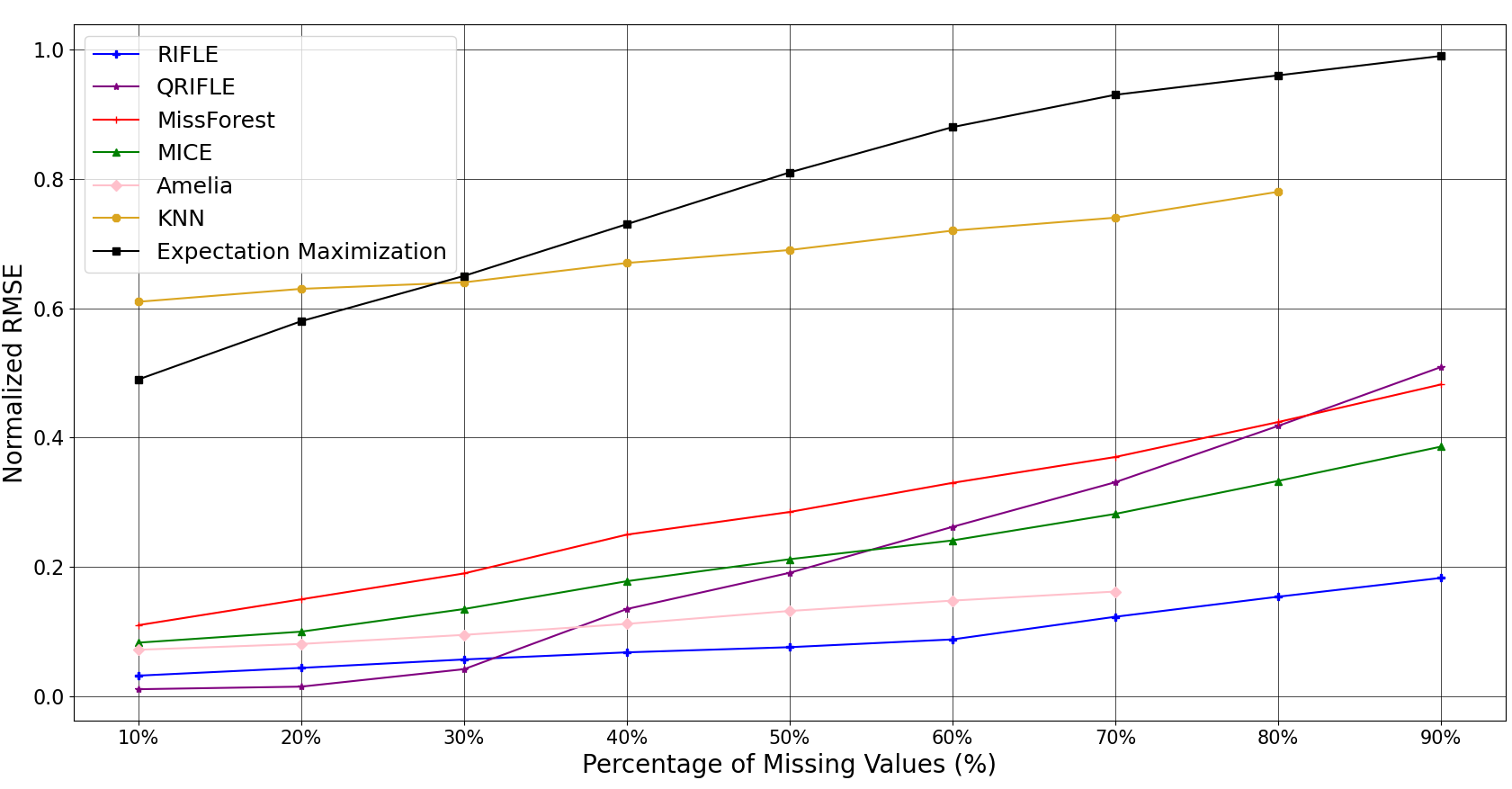}
\vspace{-2mm}
\caption{Performance of RIFLE, QRIFLE, MissForest, Amelia, KNN Imputer, MICE, Expectation Maximization to the percentage of missing values on Quadratic artificial datasets with different percentages of missing values.}
\label{fig: qrifle}
\end{figure}

We evaluated the performance of KNN-Imputer~\citep{troyanskaya2001missing}, MICE~\citep{buuren2010mice}, Amelia~\citep{honaker2011amelia}, MissForest~\citep{stekhoven2012missforest}, and Expectation Maximization~\citep{dempster1977maximum}, alongside the RIFLE and QRIFLE. QRIFLE is the RIFLE application on the original data transformed by a polynomial kernel with the degree of $2$. Although QRIFLE can learn the quadratic models, the number of missing values in the new features (interaction terms) will be higher than the original data. For instance, if, on average, $50\%$ of entries are missing in the original features, there will be $75\%$ of missing entries in the interaction terms. Moreover, the computation complexity will be increased since we have $d^2$ features instead of $d$ if we use QRIFLE. Figure~\ref{fig: qrifle} demonstrates the performance of the aforementioned methods on the artificial data with $5000$ samples containing different percentages of missing values. We generated $5$ artificial datasets for each missing value percentage, and each method is performed $5$ times on the datasets. We reported the average performances for each method in Figure~\ref{fig: qrifle}. For small percentages of missing values, QRIFLE performs better than other approaches. However, by increasing the percentage of missing values, QRIFLE performance drops, and RIFLE works much better than RIFLE. 
\section{Robust Quadratic Discriminant Analysis (Presence of Missing Values in the Target Feature)}
\label{appendix: robust_classification}
In Section~\ref{sec: robust_classification} we formalized robust quadratic discriminant analysis assuming the target variable is fully available. In this appendix, we study Problem~\eqref{robust_logistic_regression} when the target variable contains missing values.

 If the target feature contains missing values, the proposed algorithm for solving the optimization problem~\eqref{eq: robust_logistic_regression_finite} does not exploit the data points whose target feature is unavailable. However, such points can contain valuable statistical information about the underlying data distribution. Thus, we apply an Expectation Maximization (EM) procedure on the dataset as follows:

Assume that a dataset consisting of $n+m$ samples. Let $(\bX_1, y_1), \dots, (\bX_n, y_n)$ be $n$ samples whose target variable is available and $(\bX_{n+1}, z_1), \dots, (\bX_{n+m}, z_m)$ are $m$ samples where their corresponding labels are missing. Similar to the previous case, we assume:
\begin{equation*}
\bX_i | z_i = j \sim \mathcal{N}(\bmu_j, \boldsymbol{\Sigma}_j), \quad j = 0, 1.    
\end{equation*}

Thus, the probability of observing a data point $x_i$ can be written as:

\begin{gather*}
    P (\bX_i = \bx_i) = \pi_0 P(\bX_i = \bx_i | z_i = 0) + \pi_1 P(\bX_i = \bx_i | \bz_i = 1) \\
    = \pi_0 \mathcal{N} (\bx_i; \mu_0, \boldsymbol{\Sigma}_0) + \pi_1 \mathcal{N} (\bx_i; \bmu_1, \boldsymbol{\Sigma}_1) 
\end{gather*}

The log of likelihood function can be formulated as follows:

\begin{equation*}
    \ell(\bmu_0, \boldsymbol{\Sigma}_0, \bmu_1, \boldsymbol{\Sigma}_1) = \sum_{i=1}^{n+m} \log \Big( \pi_0 \mathcal{N} (\bx_i; \bmu_0, \boldsymbol{\Sigma}_0) + \pi_1 \mathcal{N} (\bx_i; \bmu_1, \boldsymbol{\Sigma}_1) \Big)
\end{equation*}

We apply Expectation Maximization procedure to jointly update $\boldsymbol{\Sigma}_0, \boldsymbol{\Sigma}_1, \bmu_0, \bmu_1$ and $z_i$'s. Note that the posterior distribution of $z_i$ can be written as:

\begin{equation*}
    P (Z_i = t | \bX_i = \bx_i) = \frac{P(\bX_i = \bx_i | Z_i = t) P(Z_i = t)}{P(\bX_i = \bx_i)} = \frac{\pi_t \mathcal{N} (\bx_i; \bmu_t, \boldsymbol{\Sigma}_t)}{P(\bX_i = \bx_i)}
\end{equation*}

We update $z_i$ values in the E-step by comparing the posterior probabilities for two possible labels. Precisely, we assign label $1$ to $Z_i$ if and only if:
\begin{equation*}
    \pi_1 \mathcal{N} (\bx_i; \bmu_1, \boldsymbol{\Sigma}_1) > \pi_0 \mathcal{N} (\bx_i; \bmu_0, \boldsymbol{\Sigma}_0)
\end{equation*}

In M-step, we estimate $\boldsymbol{\Sigma}_0, \boldsymbol{\Sigma}_1, \bmu_0, \bmu_1, \pi_0$ and $pi_0$ by fixing the $z_i$ values. Since in M-step, all labels (both already available $y_i$'s and estimated $z_i$'s in E-step) are assigned, updating the aforementioned parameters can be done as follows:
\begin{gather}
    \bmu_1[j] = \frac{1}{|\mathcal{S}_1 \cap \mathcal{T}_j|}\sum_{i \in \mathcal{S}_1 \cap \mathcal{T}_j} \bx_i[j] \\
    \bmu_0[j] = \frac{1}{|\mathcal{S}_0 \cap \mathcal{T}_j|}\sum_{i \in \mathcal{S}_0 \cap \mathcal{T}_j} \bx_i[j] \\
    \boldsymbol{\Sigma}_1[i][j] = \frac{1}{|\mathcal{S}_1 \cap \mathcal{T}_i \cap \mathcal{T}_j|}\sum_{t \in \mathcal{S}_1 \cap \mathcal{T}_i \cap \mathcal{T}_j} \bx_t[i] \bx_t [j] \\
    \boldsymbol{\Sigma}_0[i][j] = \frac{1}{|\mathcal{S}_0 \cap \mathcal{T}_i \cap \mathcal{T}_j|}\sum_{t \in \mathcal{S}_0 \cap \mathcal{T}_i \cap \mathcal{T}_j} \bx_t[i] \bx_t [j]\\
    \pi_1 = \frac{|S_1|}{|S_0 \cup S_1|}\\
    \pi_0 = \frac{|S_0|}{|S_0 \cup S_1|}
\end{gather}
We apply the M-step and E-step iteratively to obtain $\boldsymbol{\Sigma}_1$ and $\boldsymbol{\Sigma}_0$. Based on the random initialization of $z_i$'s we can obtain different values for $\boldsymbol{\mu}_0, \boldsymbol{\mu}_1, \boldsymbol{\Sigma}_0$ and $\boldsymbol{\Sigma}_1$. Having these estimations, we apply Algorithm~\ref{alg: robust_normal_discriminant analysis} to solve the robust normal discriminant analysis formulated in~\eqref{eq: robust_logistic_regression_finite}.
\begin{algorithm}
	\caption{Expectation Maximization Procedure for Learning a Robust Normal Discriminant Analysis} 
	\label{alg: EM_Robust_Logistic_Regression}
	\begin{algorithmic}[1]
        \State \textbf{Input}: $T$: Number of EM iterations, $k$: Number of covariance estimations at each iteration. 
        
        \State \textbf{Initialize}: Set each missing labels randomly to $0$ or $1$.  
        
        \FOR {$i = 1, \ldots, T$}
    
            \State Estimate $k$ covariance matrices by sampling with replacement from the available entries
            
            \State Find an optimal $\bw$ for Problem~\eqref{eq: robust_logistic_regression_finite_linear_combination}
            
            \State Update the missing labels using the new $\bw$ obtained above.
        \ENDFOR
        
	\end{algorithmic}
\end{algorithm}

\section{Generating Missing Values Patterns in Numerical Experiments}
\label{appendix: MNAR}
In this appendix, we define MCAR and MNAR patterns and discuss how to generate them in a given dataset. Formally, the distribution of missing values in a dataset follows a missing completely at random (MCAR) pattern if the probability of having a missing value for a given entry is constant, independent of other available and missing entries. On the other hand, a dataset follows a Missing At Random (MAR) pattern if the missingness of each entry only depends on the available data of other features. Finally, if the distribution of missing values does not follow an MCAR or MAR pattern, we call it missing not at random (MNAR). 

To generate the MCAR pattern on a given dataset, we fix a constant probability $0 < p < 1$ and make each data entry unavailable with the probability of $p$. On the other hand, the generation of the MNAR pattern is based on the idea that if the value of an entry is farther from the mean of its corresponding feature, then the probability of missingness for that entry is larger. 

The generation of the MNAR pattern is based on the idea that if the value of an entry is farther from the mean of its corresponding feature, then the probability of missingness for that entry is larger. Algorithm~\ref{alg: MNAR_Generator} describes the procedure of generating MNAR missing values for a given column of a dataset:
\begin{algorithm}[H]
	\caption{Generating MNAR Pattern for a Given Column of a Dataset} 
	\label{alg: MNAR_Generator}
	\begin{algorithmic}[1]
        \State \textbf{Input}: $x_1, x_2, \dots, x_n$: The entries of the current column in the dataset, $a, b$: Hyper-parameters controlling the percentage of missing values
        
        \State \textbf{Initialize}: Set $\mu = \frac{1}{n} \sum_{i=1}^n x_i$ and $\sigma^2 = \frac{1}{n} \sum_{i=1}^n x_i^2 - \mu^2$.  
        
        \FOR {$i = 1, \ldots, n$}
    
            \State $x^{'}_i = \frac{x_i - \mu}{\sigma}$
            
            \State $p_i = F(a |x^{'}_i| + b)$
            
            \State Set $x_i = *$ with probability of $p_i$
            
        \ENDFOR
	\end{algorithmic}
\end{algorithm}
Note that $F$ in the above algorithm is the cumulative distribution function of a standard Gaussian random variable. $a$ and $b$ control the percentage of missing values in the given column. As $a$ and $b$ increase, the probability of having more missing values is higher. Since the availability of each data entry depends on its value, the generated missing pattern is missing not at random (MNAR).

\vspace{-3mm}
\section{Proof of Lemmas and Theorems}
\vspace{-3mm}
\label{appendix: proofs}
In this appendix, we prove all lemmas and theorems presented in the article. First, we prove the following lemma that is useful in several convergence proofs:
\begin{lemma}\label{lemma: Lipschitz constant}
Let $\btheta^{*}(\bb, \bC) = \argmin_{\btheta} \btheta^T \bC \btheta - 2 \bb^T \btheta + \lambda \| \btheta \|^2$. Assume that for any given $\bb$ and $\bC$, $\|\btheta^{*}(\bb, \bC)\| \leq \tau$. Then, the Lipschitz constant of the gradient of the function $g(\bb, \bC) = \min_{\btheta} \btheta^T \bC \btheta - 2 \bb^T \btheta + \lambda \| \btheta \|^2$ used in Problem~\eqref{robust_linear_regression2} is  equal to
 $L = \frac{2(\tau+1)^2}{\lambda}$.
 \end{lemma}
\begin{proof} Since the problem is convex in $\btheta$ and concave in $\bC$ and $\bb$, we have:
\begin{gather*}
\min_{\btheta} \quad \max_{\bC, \bb} \quad \btheta^T \bC \btheta - 2 \bb^T \btheta + \lambda \| \btheta \| ^2 = - \min_{\bC, \bb} \quad \max_{\btheta} \quad - \btheta^T \bC \btheta + 2 \bb^T \btheta - \lambda \| \btheta \| ^2     
\end{gather*}
Assume that $h(\btheta, \bC, \bb) \triangleq - \btheta^T \bC \btheta + 2 \bb^T \btheta - \lambda \| \btheta \| ^2$. Define $L_{11}$, $L_{12}$ as follows:
\begin{gather*}
    \|\nabla_{\bb, \bC} h(\btheta, \bb_1, \bC_1) - \nabla_{\bb, \bC} h(\btheta, \bb_2, \bC_2) \| \leq L_{11} \| (\bC_1, \bb_1) - (\bC_2, \bb_2)\| \\
    \|\nabla_{\theta} h(\btheta, \bb_1, \bC_1) - \nabla_{\btheta} h(\btheta, \bb_2, \bC_2) \| \leq L_{12} \| (\bC_1, \bb_1) - (\bC_2, \bb_2)\|
\end{gather*}
$h(\btheta, \bb, \bC)$ is convex in $\bC$ and $\bb$ and strongly concave with respect to $\btheta$. According to Lemma 1 in~\citet{barazandeh2020solving}, $g^{'} = - g =  \max_{\btheta} h(\btheta, \bb, \bC)$ is Lipschitz continuous with the Lipschitz constant equal to:
\begin{equation*}
    L_{g} = L_{g^{'}} = L_{11} + \frac{L_{12}^2}{\sigma},
\end{equation*}

where $\sigma = 2\lambda$ is the strong-concavity modulus of $- \btheta^T \bC \btheta + 2 \bb^T \btheta - \lambda \| \btheta \| ^2$. 
Note that
\begin{gather*}
    \nabla_{\bb, \bC} \: h(\btheta, \bb, \bC) = - \btheta \btheta^T + 2\btheta \rightarrow \\
    \nabla_{\bb, \bC} h(\btheta, \bb_1, \bC_1) - \nabla_{\bb, \bC} h(\btheta, \bb_2, \bC_2) = 0
\end{gather*}
Thus, $L_{11} = 0$. On the other hand,
\begin{gather*}
    \nabla_{\btheta} \: h(\btheta, \bb, \bC) = -2\btheta \bC + 2\bb - 2\lambda \btheta  \rightarrow \\
    \nabla_{\btheta} h(\btheta, \bb_1, \bC_1) - \nabla_{\btheta} h(\btheta, \bb_2, \bC_2) = -2(\bC_1 - \bC_2)\btheta + 2(\bb_1 - \bb_2) \leq 2\|\bC_1 - \bC_2\|_2 \|\btheta\|_2 + 2\|\bb_1 - \bb_2\|_2 \\ \leq 2\|(\bC_1, \bb1) - (\bC_2, \bb_2)\|_2 \|\btheta\|_2 + 2\|(\bC_1, \bb1) - (\bC_2, \bb_2)\|_2 \leq (2\|\btheta\|_2 + 2) \|(\bC_1, \bb1) - (\bC_2, \bb_2)\|_2  
\end{gather*}
Therefore, $L_{12} = 2 \max \|\btheta \|_2 + 2$, which means $L_g = \frac{2(\max \|\btheta \| + 1)^2}{\lambda}$. Note that $\btheta$ is computed exactly in Algorithm~\ref{alg: Min_Max_Regression} and Algorithm~\ref{alg: Min_Max_Regression_Nesterov} at each iterations. Thus, during the optimization procedure the norm of $\btheta$ is bounded by the maximum norm of $\btheta$ for any given $\bb$ and $\bC$: 
\begin{equation*}
\max \|\btheta\|_2 \leq \max_{\bb, \bC} \btheta^{*}(\bb, \bC) \leq  \tau
\end{equation*}
As a result, $L_g = \frac{2(\tau + 1)^2}{\lambda}$.
\end{proof}

\noindent \textbf{Proof of Theorem}~\ref{thm: Robust_Regression_Convergence}:
Since the set of feasible solutions for $\bb$ and $\bC$ defines a compact set, and function $g$ is a concave function with respect to  $\bb$ and $\bC$, the projected gradient ascent algorithm converges to the global maximizer of $g$ in $T = \mathcal{O}(\frac{LD}{\epsilon})$ iterations \citep[Theorem 3.3]{bubeck2014convex}, where $D = \|\bC_0 - \bC^*\|_F^2 + \| \bb_0 - \bb^{*}\|_2^2$ and $L$ is the Lipschitz constant of function $g$, which is equal to $\frac{2(\tau + 1)^2}{\lambda}$ according to Lemma~\ref{lemma: Lipschitz constant}. \newline

\noindent \textbf{Proof of Theorem}~\ref{thm: Nesterov_Convergence}
Algorithm~\ref{alg: Min_Max_Regression_Nesterov} applies the projected Nesterov acceleration method on the concave function $g$. As proved 
in~\citet{nesterov1983method}, the rate of convergence of this method conforms to the lower bound of first-order oracles for the general convex minimization (concave maximization) problems, which is ${\cal O}(\sqrt{\frac{LD^2}{\epsilon}})$. We compute the Lipschitz constant $L$ that appeared in the iteration complexity bound by Lemma~\ref{lemma: Lipschitz constant}. \newline

\noindent \textbf{Proof of Theorem}~\ref{thm: Dual}:
First, note that if we multiply the objective function by $-1$, Problem~\eqref{robust_linear_regression} can be equivalently formulated as:
\begin{equation}\label{robust_linear_regression_negative_maximum}
\arraycolsep=1.4pt\def\arraystretch{1.3}
\begin{array}{lll}
    \displaystyle{\max_{\btheta} \quad \min_{\bC, \bb}} \quad &  \displaystyle{-\btheta^T \bC \btheta + 2\bb^T \btheta - \lambda \| \btheta\|_2^2}\\
    \st &  - \bC + \bC_{\min} \leq 0, \\
        & \bC - \bC_{\max} \leq 0, \\
        &  - \bb + \bb_{\min} \leq 0, \\
        & \bb - \bb_{\max} \leq 0, \\
        & - \bC \preceq 0
\end{array}
\end{equation}

If we assign $\bA, \bB, \bd, \be, \bH$ to the constraints respectively, then the Lagrangian function can be written as: 
\begin{equation}\label{Lagrangian_Function}
\arraycolsep=1.4pt\def\arraystretch{1.3}
\begin{array}{lll}
    \displaystyle{L(\bC, \bb, \bA, \bB, \bd, \be, \bH)} = \quad &  \displaystyle{-\btheta^T \bC \btheta + 2\bb^T \btheta + \langle \bA, - \bC + \bC_{\min} \rangle}\\
        &+   \langle \bB, \bC - \bC_{\max} \rangle + \langle \bd, - \bb + \bb_{\min} \rangle\\
        &+  \langle \be, \bb - \bb_{\max} \rangle - \langle \bC, \bH \rangle - \lambda \| \btheta\|_2^2,
\end{array}
\end{equation}
The dual problem is defined as:
\begin{equation}\label{General_Dual}
\arraycolsep=1.4pt\def\arraystretch{1.3}
\begin{array}{lll}
    \displaystyle{\max_{\bA, \bB, \bd, \be, \bH} \min_{\bC, \bb} L(\bC, \bb, \bA, \bB, \bd, \be, \bH)} 
\end{array}
\end{equation}
The minimization of $L$ takes the following form:
\begin{equation}\label{L_minimization}
\arraycolsep=1.4pt\def\arraystretch{1.3}
\begin{array}{lll}
    \displaystyle{ \min_{\bC, \bb} } \quad & \langle \bC, -\btheta \btheta^T - \bA + \bB - \bH \rangle  + \langle \bb, 2\btheta - \bd + \be \rangle - \lambda \| \btheta\|_2^2\\
    &-   \langle \bB, \bC_{\max} \rangle + \langle \bA, \bC_{\min} \rangle -  \langle \be, \bb_{\max} \rangle + \langle \bd, \bb_{\min} \rangle, 
\end{array}
\end{equation}

To avoid $-\infty$ value for the above minimization problem, it is required to set $-\btheta \btheta^T - \bA + \bB - \bH$ and $2\btheta - \bd + \be$ to zero. Thus the dual problem of~\eqref{robust_linear_regression_negative_maximum} is formulated as:
\begin{equation}\label{robust_linear_regression_dual}
\arraycolsep=1.4pt\def\arraystretch{1.3}
\begin{array}{lll}
    \displaystyle{\max_{\bA, \bB, \bd, \be, \bH}} \quad &  \displaystyle{\bb_{\min}^T \bd - \bb_{\max}^T \be +  \langle \bC_{\min}, \bA \rangle - \langle \bC_{\max}, \bB \rangle - \lambda \| \btheta\|_2^2}\\
    \st &  - \btheta \btheta^T - \bA + \bB - \bH = 0, \\
        & 2\btheta - \bd + \be = 0, \\
        & \bA, \bB, \bd, \be \geq 0, \\
        & \bH \succeq 0
\end{array}
\end{equation}
Since the duality gap is zero, Problem~\eqref{robust_linear_regression} can be equivalently formulated as:
\begin{equation}\label{robust_linear_regression_dual_complete}
\arraycolsep=1.4pt\def\arraystretch{1.3}
\begin{array}{lll}
    \displaystyle{\max_{\btheta, \bA, \bB, \bd, \be, \bH}} \quad &  \displaystyle{\bb_{\min}^T \bd - \bb_{\max}^T \be +  \langle \bC_{\min}, \bA \rangle - \langle \bC_{\max}, \bB \rangle - \lambda \| \btheta\|^2}\\
    \st &  - \btheta \btheta^T - \bA + \bB - \bH = 0, \\
        & 2\btheta - \bd + \be = 0, \\
        & \bA, \bB, \bd, \be \geq 0, \\
        & \bH \succeq 0.
\end{array}
\end{equation}
We can multiply the objective function by $-1$ and change the maximization to minimization, which gives the dual problem described in~\eqref{eq: robust_linear_regression_dual}. \newline

\noindent \textbf{Proof of Theorem}~\ref{thm: regression_performance}:

\noindent \textbf{(a)} Let $\boldsymbol{\Delta}_n$ be the estimated confidence matrix obtained from $n$ samples. The first part of the theorem is true, if $\boldsymbol{\Delta}_n$ converges to $0$ as $n$, the number of samples goes to infinity (the same argument works for $b$ and $\delta$). Assume that $\{(\bx_{i1}, \bx_{i2})\}_{i=1}^{n}$ is an i.i.d bootstrap sample over data points that both features $\bX_1$ and $\bX_2$ are available. Since the distribution of missing values is completely at random (MCAR), we have $\mathbb{E}[\bx_{i1} \bx_{i2}] = \mathbb{E}[\bX_1 \bX_2]$. Therefore, $\mathbb{E}[\frac{1}{n} \sum_{i=1}^{n} \bx_{i1} \bx_{i2}] = \mathbb{E}[\bX_1 \bX_2]$. 
Moreover, since the samples are drawn independently,  $\textrm{Var}[\frac{1}{n} \sum_{i=1}^{n} \bx_{i1} \bx_{i2}] = \frac{1}{n^2} \sum_{i=1}^n \textrm{Var}[\bx_{i1} \bx_{i2}] = \frac{n}{n^2} \textrm{Var}[\bX_{1} \bX_{2}] = \frac{1}{n} \textrm{Var}[\bX_{1} \bX_{2}]$. Since the variance of the product of every two features is bounded, according to the weak law of large numbers:
\begin{equation*}
    \lim_{n \rightarrow \infty} \textrm{Pr} \Big(|\frac{1}{n} \sum_{i=1}^{n} \bx_{i1} \bx_{i2} - \mathbb{E}[X_1 X_2]| \geq \epsilon \Big) = 0
\end{equation*}
Therefore, for any given bootstrap sample of features $X_1$ and $X_2$, the estimation converges in probability to the ground-truth value. This means the size of the confidence interval $\Delta_{12}$ converges in probability to 0. Therefore, the estimation of $\mathbb{E}[X_1 X_2]$ is consistent by the definition of consistency. With the same argument, we can prove the consistency of the estimator for any given features $X_i$ and $X_j$. 
\\

\noindent \textbf{(b)} Fix two features $i$ and $j$. Let $(\hat{\bX}_{i1}, \hat{\bX}_{j1}), \dots, (\hat{\bX}_{im}, \hat{\bX}_{jm})$ be $m = \lceil n(1-p) \rceil$ i.i.d pairs sampled via bootstrap from the entries where both features $i$ and $j$ are available. Define $Z_{t} = \hat{X}_{it} \hat{X}_{jt}$ (for simplicity we do not consider the dependence of $Z_t$ to $i$ and $j$ in the notation). Assume that we initialize $\bC_{ij} = \frac{1}{m} \sum_{t = 1}^m Z_t$. Note that, $\mathbb{E} [Z_t] = \mathbb{E} [\hat{X}_{it} \hat{X}_{jt}] = \bC^{*}_{ij}$. According to Chebyshev's inequality, we have:
\begin{equation*}
\textrm{Pr} \Bigg[ \quad \Bigg|\frac{1}{m} \sum_{t=1}^{m} \Big(Z_t - \mathbb{E} [Z_t] \Big) \Bigg| \geq \boldsymbol{\Delta}_{ij} \Bigg] \leq \frac{\textrm{Var}(\frac{1}{m} \sum_{t=1}^m Z_t)}{c^2 \boldsymbol{\Delta}_{ij}^2}  
\end{equation*}
Note that $Z_i$'s are iid samples, thus:
\begin{equation*}
\textrm{Var}(\frac{1}{m} \sum_{t=1}^m Z_t) = \frac{1}{m}\textrm{Var}(Z_t) \leq \frac{1}{m}\max_{i, j} \textrm{Var}(\hat{X}_i \hat{X}_j) = \frac{V}{m} = \frac{V}{n(1-p)} 
\end{equation*}
Let $\boldsymbol{\Delta} = \min \{ \boldsymbol{\Delta}_{ij}\}$. Then, based on the two above inequalities, we have:
\begin{equation*}
\textrm{Pr} \Bigg[ \quad \Bigg|\bC_0[i][j] - \bC^{*}[i][j] \Bigg| \geq \boldsymbol{\Delta} \Bigg] \leq \frac{V}{c^2 \boldsymbol{\Delta}^2 n(1-p)} 
\end{equation*}
Using a union bound argument, with the probability of at least $1 - \frac{Vd^2}{2c^2 \boldsymbol{\Delta}^2 n(1-p)}$, we have:
$\bC_0 - c\boldsymbol{\Delta} \leq \bC^* \leq \bC_0 + c\boldsymbol{\Delta}$, which means the actual covariance matrix is within the confidence intervals we have considered. In that case, for $(\tilde{\btheta}, \tilde{\bb}, \tilde{\bC})$, we have:
\begin{gather*}
\tilde{\btheta}^T \tilde{\bC} \tilde{\btheta} - 2 \tilde{\bb}^T \tilde{\btheta} = \max_{\bC, \bb} \quad \tilde{\btheta}^T \bC \tilde{\btheta} - 2 \bb^T \tilde{\btheta} \geq
\tilde{\btheta}^T \bC^* \tilde{\btheta} - 2 \bb^{*T} \tilde{\btheta},
\end{gather*}
which completes the proof.

\newpage
\noindent \textbf{Proof of Theorem}~\ref{thm: mu_max}: 
Since the objective function is \textbf{convex} with respect to $\bmu_1$, and the constraint on $\bmu_1$ is closed and bounded (compact), an optimal solution exists to the problem on the boundaries (note that the problem is convex \textbf{maximization}.) Therefore, for any entry of the $\bmu_1$, it should either take $\bmu_{\min}[i]$
or $\bmu_{\max}[i]$, which gives the provided solution in the theorem.

\vspace{-3mm}
\section{Dataset Descriptions}
\vspace{-3mm}
\label{appendix: datasets}
In this section, we introduce the datasets used in Section~\ref{sec: numerical_results} to evaluate the performance of RIFLE on regression and classification tasks. Except for the NHANES, all other datasets contain no missing values. For those datasets, we generate MCAR and MNAR missing values artificially (for MNAR patterns, we apply Algorithm~\ref{alg: MNAR_Generator} to the datasets).

\subsection*{Datasets for Evaluating RIFLE on Regression and Imputation Tasks}

\begin{itemize}

    \item \textbf{NHANES}: The percentage of missing values varies for different features of the NHANES dataset. There are two sources of missing values in NHANES data: Missing entries during data collection and missing entries resulting from merging different datasets in the NHANES collection. On average, approximately $20\%$ of data is missing.

    \item \textbf{Super Conductivity}\footnote{\url{https://archive.ics.uci.edu/ml/datasets/Superconductivty+Data}}: Super Conductivity datasets contains $21263$ samples describing superconductors and their relevant features ($81$ attributes). All features are continuous, and the assigned task is to predict the critical temperature based on the given $81$ features. We have used this dataset in experiments summarized in Figure~\ref{fig: ADMM}, Figure~\ref{fig: nesterov}, and Table~\ref{tab:real_datasets_mnar}.
    
    \item \textbf{BlogFeedback}\footnote{\url{https://archive.ics.uci.edu/ml/datasets/BlogFeedback}}: BlogFeedback data is a collection of $280$ features extracted from HTML-documents of the blog posts. The assigned task is to predict the number of comments in the upcoming $24$ hours based on the features of more than $60K$ data training data points. The test dataset is fixed and is originally separated from the training data. The dataset is used in experiments described in Table~\ref{tab:real_datasets_mnar}.

    \item \textbf{Breast Cancer}(Prognostic)\footnote{\url{https://archive.ics.uci.edu/ml/datasets/Breast+Cancer+Wisconsin+(Prognostic)}}: The dataset consists of $34$ features and $198$ instances. Each record represents follow-up data for one breast cancer case collected in $1984$. We have done several experiments to impute the MCAR missing values generated artificially with different proportions. The results are depicted in Table~\ref{tab:imputation_uci} and Figure~\ref{fig: counts}.  
    
    \item \textbf{Parkinson}\footnote{\url{https://archive.ics.uci.edu/ml/datasets/parkinsons}}: The dataset describes a range of biomedical voice recording from $31$ people, $23$ with Parkinson's disease (PD). The assigned task is to discriminate healthy people from those with PD. There are $193$ records and $23$ features in the dataset. The dataset is processed similarly to the Breast Cancer dataset and used in the same experiments. 
    
    \item \textbf{Spam Base}\footnote{\url{https://archive.ics.uci.edu/ml/datasets/spambase}}: The dataset consists of $4601$ instances and $57$ attributes. The assigned classification task is to predict whether the email is spam. To evaluate different imputation methods, we randomly mask a proportion of data entries and impute them with different approaches. The results are depicted in  Table~\ref{tab:imputation_uci} and Figure~\ref{fig: counts}.
    
    \item \textbf{Boston Housing}\footnote{\url{https://www.kaggle.com/c/boston-housing}}: Boston Housing dataset contains $506$ instances and $14$ columns. We generate random missing entries with different proportions and impute them with RIFLE and several state-of-the-art approaches.  The results are demonstrated in  Table~\ref{tab:imputation_uci} and Figure~\ref{fig: counts}.
    
    \item \textbf{Cloud}\footnote{\url{https://archive.ics.uci.edu/ml/datasets/Cloud}}: The dataset has $1024$ instances and $10$ features extracted from clouds images. We use this dataset in experiments depicted in Table~\ref{tab:imputation_uci} with $70\%$ artificial MCAR missing values.    
    
    \item \textbf{Wave Energy Converters}\footnote{\url{https://archive.ics.uci.edu/ml/datasets/Wave+Energy+Converters}}: We sample a subset of $3000$  instances with $49$ features from the original Wave Energy Converter dataset. We have executed several imputation methods on the dataset, and the results are shown in Figure~\ref{fig: counts}.
    
    \item \textbf{Sensorless Drive Diagnosis}\footnote{\url{https://archive.ics.uci.edu/ml/datasets/dataset+for+sensorless+drive+diagnosis}}: The $49$ continuous features in this dataset are extracted from electric current drive signals, and the associated classification task is to determine the condition of device's motor. We choose different random samples with size $400$ to run experiments (imputation) in Figure~\ref{fig: sensitivity_missing_value_proportion}. 
\end{itemize}

\subsection*{Datasets for Evaluating Robust QDA on Classification Tasks}

\begin{itemize}
    \item \textbf{Avila}\footnote{\url{https://archive.ics.uci.edu/ml/datasets/Avila}}: The Avila dataset consists of $10$ attributes extracting from $800$ images of "Avila Bible". The associated classification task is to match each pattern (an instance of the dataset) to a copyist. We put $40\%$ of MCAR missing values (both input features and the target variable) on $10$ different random samples of the dataset with size $1000$. The average accuracy of the robust LDA method on the $10$ datasets is demonstrated in Figure~\ref{fig: number_of_estimations} for each value of $k$ (the number of covariance estimations).

    \item \textbf{Magic Gamma Telescope}\footnote{\url{https://archive.ics.uci.edu/ml/datasets/MAGIC+Gamma+Telescope}}: The dataset consists of $11$ continuous MC-generated features from contributing to the prediction of the type of event (signal or background). We used the same procedure as the above dataset for the results depicted in Figure~\ref{fig: number_of_estimations} (random sampling subsets of $1000$ data points out of more than $19000$). 
    
    \item \textbf{Glass Identification}\footnote{\url{https://archive.ics.uci.edu/ml/datasets/glass+identification}}: This dataset is composed of $10$ continuous features and $214$ instances. The assigned classification task is to predict the type of glasses based on the materials used for making them. We have assigned $40\%$ of MCAR missing values to the dataset for the experiments reported in Table~\ref{tab:robust_linear_regression_vs_lda}. 
    
    \item \textbf{Annealing}\footnote{\url{https://archive.ics.uci.edu/ml/datasets/Annealing}} This dataset is a mix of categorical and numerical features ($37$ in total), and the associated task is to predict the class (5 classes) of instances (metals). The number of instances in this dataset is $798$. We use $500$ data points as the training data and the rest as the test. We apply $40\%$ of MCAR missing values to both input features and the target variable. The accuracy of different models is reported in Table~\ref{tab:robust_linear_regression_vs_lda}. 
    
    \item \textbf{Abalone}\footnote{\url{https://archive.ics.uci.edu/ml/datasets/abalone}}: This dataset consists of $4177$ instances and $8$ categorical and continuous features. The goal is to predict the age of abalone based on physical measurements. The first $1000$ samples are used as the training data and the rest as the test data. We applied the same pre-processing procedure as the above dataset to generate missing values on the training data. The accuracy of different models is reported in Table~\ref{tab:robust_linear_regression_vs_lda}.  
    
    \item \textbf{Lymphography}\footnote{\url{https://archive.ics.uci.edu/ml/datasets/Lymphography}}: Lymphography is a categorical dataset containing $18$ features and $148$ data points obtained from the University Medical Centre, Institute of Oncology, Ljubljana, Yugoslavia. $100$ data points are used as the training data; the rest are test data points (with no missing values). We applied the same pre-processing described for the above dataset to generate MCAR missing values. 
    
    \item \textbf{Adult} \footnote{\url{https://archive.ics.uci.edu/ml/datasets/adult}.}: The adult dataset contains census information of individuals, including education, gender, and capital gain. The assigned classification task is to predict whether a person earns over 50k annually. The train and test sets are two separate files consisting of $32,000$ and $16,000$ samples, respectively. We consider gender and race as the sensitive attributes (For the experiments involving one sensitive attribute, we have chosen gender). Learning a logistic regression model on the training dataset (without imposing fairness) shows that only $3$ features out of $14$ have larger weights than the gender attribute.
\end{itemize}

\vspace{-5mm}
\section{Further Discussion on the Consistency of RIFLE}
\vspace{-3mm}
\label{appendix: consistency}
The three developed algorithms in Section~\ref{sec: robust_linear_regression} for solving robust ridge regression are all consistent. To show this, we have generated a synthetic dataset with $50$ input features following a jointly normal distribution. As observed in Figure~\ref{fig: consistency}, by increasing the number of samples, the NRMSE of all three algorithms converges to $0.01$, which is the standard deviation of the zero-mean Gaussian noise added to each target value (the dashed line). The pattern can be observed for different percentages of missing values.
\begin{figure}[H]
\centering
\vspace{-2mm}
\includegraphics[width=0.88\columnwidth]{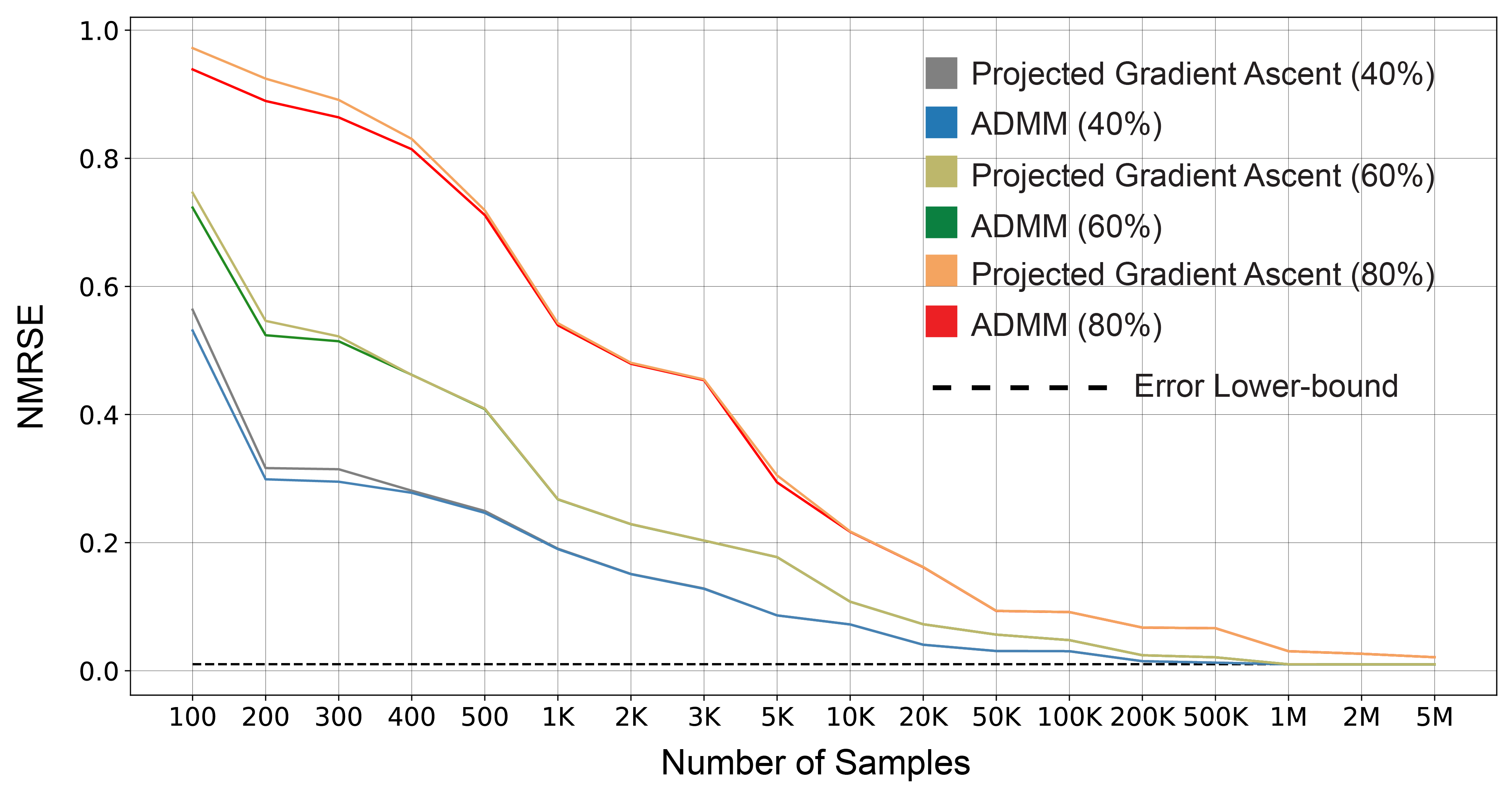}
\vspace{-3mm}
\caption{Consistency of ADMM (Algorithm~\ref{alg: solving_dual_admm}) and Projected Gradient Ascent on function $g$ (Algorithm~\ref{alg: Min_Max_Regression}) on the synthetic datasets with $40\%$, $60\%$ and $80\%$ missing values.}
\label{fig: consistency}
\end{figure}

\vspace{-7mm}
\section{Numerical Experiments for Convergence of RIFLE Algorithms}
\vspace{-2mm}
\label{appendix: convergence}
We presented three algorithms for solving the robust linear regression problem formulated in~\eqref{robust_linear_regression}: Projected gradient ascent (Algorithm~\ref{alg: Min_Max_Regression}, Nesterov acceleration method (Algorithm~\ref{alg: Min_Max_Regression_Nesterov}), and Alternating Direction Method of Multipliers (ADMM) (Algorithm~\ref{alg: solving_dual_admm}) applied on the dual problem. 

\begin{figure}[H]
\centering
\vspace{-2mm}
\includegraphics[width=0.88\columnwidth]{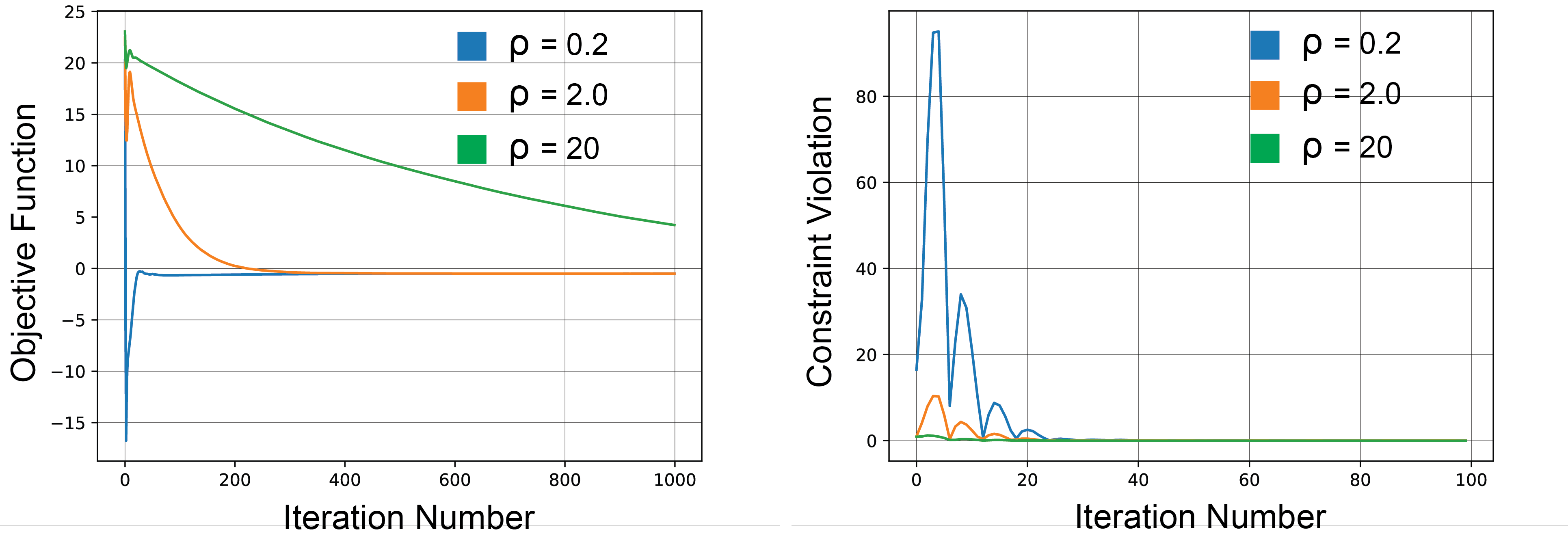}
\vspace{-3mm}
\caption{Convergence of ADMM algorithm to the optimal solution of Problem~\eqref{eq: robust_linear_regression_dual} for different values of $\rho$. The left plot measures the objective function of Problem~\eqref{eq: robust_linear_regression_dual} per iteration (without considering the constraints), while the right plot demonstrates the constraint violation of the algorithm per iteration. The constraint violation can be measured by adding all regularization terms in the augmented Lagrangian function formulated in Problem~\eqref{robust_linear_regression_dual_augmented_lagrangian}.}
\label{fig: ADMM}
\end{figure}
\vspace{-3mm}
We established the convergence rate of the gradient ascent and Nesterov acceleration methods in Theorem~\ref{thm: Robust_Regression_Convergence} and Theorem~\ref{thm: Nesterov_Convergence}, respectively. To investigate the convergence of the ADMM algorithm and its dependence on $\rho$, we perform Algorithm~\ref{alg: solving_dual_admm} on the Super Conductivity dataset (Description in Appendix~\ref{appendix: datasets}) with $30\%$ MCAR missing values. Figure~\ref{fig: ADMM} demonstrates the convergence of the ADMM algorithm for multiple values of $\rho$ applied to the Super Conductivity dataset as described above. As can be observed, decreasing the value of $\rho$ accelerates the ADMM convergence to the optimal value. Note that for $\rho = 0.2$, the objective function is smaller than the final value in the first few iterations. The reason is that for those iterations, the solution is not feasible (as observed in the right figure). The final solution is the optimal \textbf{feasible} solution.

In the next experiment, we compare the three proposed algorithms regarding the number of iterations required to reach a certain level of test accuracy on the Super Conductivity dataset. The number of training samples is $1000$, containing $40\%$ of MCAR missing values on both input features and the target variable. The test dataset contains $2000$ samples. As depicted in Figure~\ref{fig: nesterov}, ADMM and Nesterov's algorithms require less number of iterations to reach the $\epsilon$-optimal solution compared to Algorithm~\ref{alg: Min_Max_Regression}. However, the cost per iteration of the ADMM algorithm (Algorithm~\ref{alg: solving_dual_admm}) is higher than the Nesterov acceleration and Algorithm~\ref{alg: Min_Max_Regression}. 
\begin{figure}[ht]
\centering
\includegraphics[width=0.9\columnwidth]{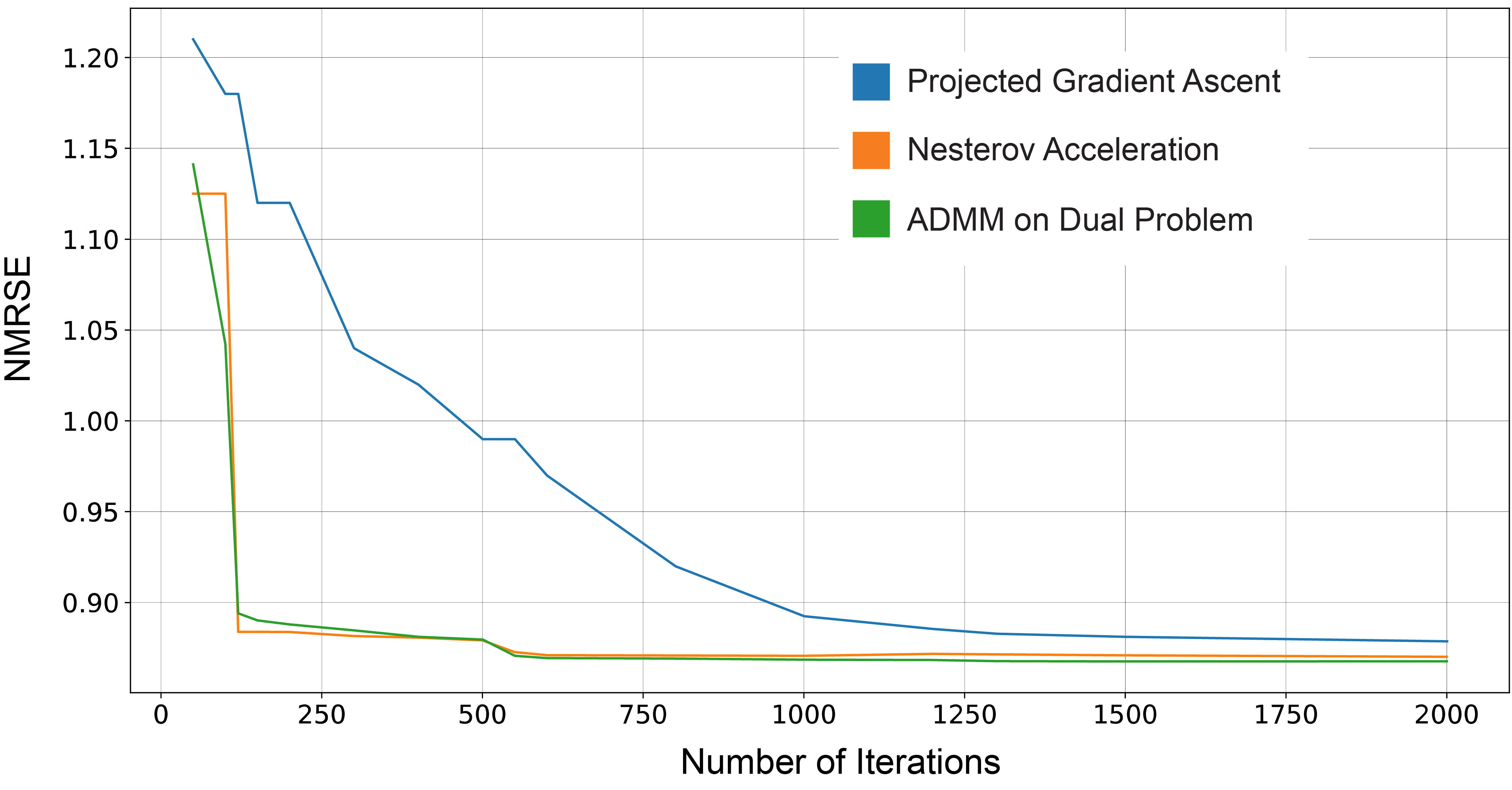}
\vspace{-5mm}
\caption{The performance of the Nesterov acceleration method, projected gradient ascent, and ADMM on the Super Conductivity dataset vs. the number of iterations.}
\label{fig: nesterov}
\end{figure}

\vspace{-3mm}
\section{Execution Time Comparison of RIFLE and Other State-of-the-art Approaches}
\label{appendix: time}
This section reports the average execution time of the RIFLE and other approaches presented in Table~\ref{tab:real_datasets_mnar}.
\begin{table}[ht]
\begin{center}
\begin{tabular}{|c|c|c|c|c}
\hline
\multirow{2}{*}{\textbf{Methods}} & \multicolumn{3}{c|}{\textbf{Datasets}}      \\ \cline{2-4} 
                                  & Super Conductivity & Blog Feedback & NHANES \\ \hline
Regression on Complete Data       & $0.3$ sec             & $0.7$ sec        & $0.4$ sec \\ \hline
\textbf{RIFLE}                             & $87$ sec             & $471$ sec        & $125$ sec \\ \hline
Mean Imputer + Regression      & $0.4$ sec             & $0.9$ sec        & $0.5$ sec\\ \hline
MICE + Regression      & $112$ sec             & $573$ sec        & $294$ sec \\ \hline
EM + Regression  & $171$ sec  & $612$ sec & $351$ sec \\ \hline
MIDA Imputer + Regression   & $245$ sec & $726$ sec  & $599$ sec\\ \hline
MissForest   & $94$ sec & $321$ sec        & $132$ sec  \\ \hline
KNN Imputer  & $66$ sec             & $292$ sec        & $144$ sec \\ \hline
\end{tabular}    
\end{center}
\caption{Execution time of RIFLE and other SOTA methods on three datasets.}
\label{tab:time} 
\end{table}

\end{document}